\title{{\sc Laughing Hyena Distillery}: \\ Extracting Compact Recurrences From Convolutions}
 \author{
    Stefano Massaroli\footnote{Equal contribution. $\dagger$ Equal senior authorship. $^1$Mila and Universit\'e de Montr\'eal. $^2$Stanford University. $^3$The University of Tokyo. $^4$Purdue University.  $^5$Vrije Universiteit Amsterdam. $^6$Carnegie Mellon University and Meta AI (FAIR). $^7$University of Buffalo, SUNY. $^8$University of Chicago and Together Computer.}~$^{,1}$, Michael Poli$^{{\color{bluegray}*},2}$,
    Daniel Y. Fu$^{{\color{bluegray}*},2}$, \\ Hermann Kumbong$^2$, Rom N. Parnichkun$^{3}$, Aman Timalsina$^4$,\\
    David W. Romero$^5$, Quinn McIntyre$^2$, Beidi Chen$^6$, Atri Rudra$^7$, Ce Zhang$^8$, \\
    Christopher Re$^{2,\dagger}$, Stefano Ermon$^{2,\dagger}$, Yoshua Bengio$^{1,\dagger}$
 }
\date{\small{\footnotesize\sf NeurIPS 2023}, {\footnotesize\sf  Last Compiled}: \today.}
\pgfplotsset{compat=1.17} 
\pgfplotsset{
        table/search path={figures/drawings},
    }
\definecolor{bluegray}{rgb}{0.4, 0.6, 0.8}
\definecolor{bluebell}{rgb}{0.64, 0.64, 0.82}
\definecolor{etonblue}{rgb}{0.59, 0.78, 0.64}
\definecolor{junglegreen}{rgb}{0.16, 0.67, 0.53}
\definecolor{bg}{gray}{0.97}
\definecolor{olive}{rgb}{0.6, 0.6, 0.2}
\definecolor{sand}{rgb}{0.8666666666666667, 0.8, 0.4666666666666667}
\definecolor{wine}{rgb}{0.5333333333333333, 0.13333333333333333, 0.3333333333333333}
\definecolor{deblue}{RGB}{11,132,147}
\definecolor{ocra}{RGB}{204, 119, 34}
\pgfplotsset{%
            mesh line legend/.style={legend image code/.code=\meshlinelegend#1},%
}
\long\def\meshlinelegend#1{%
    \scope[%
        #1,
        /pgfplots/mesh/rows=1,
        /pgfplots/mesh/cols=4,
        /pgfplots/mesh/num points=,
        /tikz/x={(0.44237cm,0cm)}, 
        /tikz/y={(0cm,0.23932cm)},
        /tikz/z={(0.0cm,0cm)},
        scale=0.4,
    ]
    \let\pgfplots@metamax=\pgfutil@empty
    \pgfplots@curplot@threedimtrue

    \pgfplotsplothandlermesh
    \pgfplotstreamstart

    \def\simplecoordinate(##1,##2,##3){%
        \pgfmathparse{1000*(##3)}%
        \pgfmathfloatparsenumber\pgfmathresult
        \let\pgfplots@current@point@meta=\pgfmathresult
        \pgfplotstreampoint{\pgfqpointxyz@orig{##1}{##2}{##3}}%
    }%

    \pgfplotsforeachungrouped \x in {0,...,\pgfkeysvalueof{/pgfplots/samples}}{
        \pgfmathsetmacro\y{\x/\pgfkeysvalueof{/pgfplots/samples}}
        \pgfmathsetmacro\x{\x/\pgfkeysvalueof{/pgfplots/samples}*3}
        \simplecoordinate(\x,0,\y)
    }

    \pgfplotstreamend
    \pgfusepath{stroke}
    \endscope
}%
\newtheorem{lemma}{Lemma}[section]
\newtheorem{proposition}{Proposition}[section]
\newtheorem{theorem}{Theorem}[section]
\newcommand{\chapref}[1]{\hyperref[#1]{Chapter \ref{#1}}}
\newcommand{\secref}[1]{\hyperref[#1]{Section \ref{#1}}}
\newcommand\BeraMonottfamily{%
  \def\fvm@Scale{0.85}
  \fontfamily{fvm}\selectfont
}
\DeclareMathOperator{\adj}{\mathsf{Adj}}
\DeclareMathOperator{\diag}{diag}
\newcommand{\ostar}{\mathbin{\mathpalette\make@circled *}}
\newcommand{\make@circled}[2]{%
  \ooalign{$\m@th#1\smallbigcirc{#1}$\cr\hidewidth$\m@th#1#2$\hidewidth\cr}%
}
\newcommand{\smallbigcirc}[1]{%
  \vcenter{\hbox{\scalebox{0.77778}{$\m@th#1\bigcirc$}}}%
}
\newcommand{\x}{\times}
\newcommand{\htre}{{$\sf H3$}}
\DeclareRobustCommand{\Arrow}[1][]{%
\check@mathfonts
\if\relax\detokenize{#1}\relax
\settowidth{\dimen@}{$\m@th\rightarrow$}%
\else
\setlength{\dimen@}{#1}%
\fi
\sbox\z@{\usefont{U}{lasy}{m}{n}\symbol{41}}%
\begin{picture}(\dimen@,\ht\z@)
\roundcap
\put(\dimexpr\dimen@-.7\wd\z@,0){\usebox\z@}
\put(0,\fontdimen22\textfont2){\line(1,0){\dimen@}}
\end{picture}%
}
\newcommand{\longdiv}{\smash{\mkern-0.43mu\vstretch{1.31}{\hstretch{.7}{)}}\mkern-5.2mu\vstretch{1.31}{\hstretch{.7}{)}}}}
\newcommand{\cH}{\mathcal{H}}
\newcommand{\cK}{\mathcal{K}}
\newcommand{\cO}{\mathcal{O}}
\newcommand{\cF}{\mathcal{F}}
\newcommand{\cZ}{\mathcal{Z}}
\newcommand{\sA}{\mathsf{A}}
\newcommand{\sB}{\mathsf{B}}
\newcommand{\sC}{\mathsf{C}}
\newcommand{\sD}{\mathsf{D}}
\newcommand{\sH}{\mathsf{H}}
\newcommand{\sI}{\mathsf{I}}
\newcommand{\sK}{\mathsf{K}}
\newcommand{\sL}{\mathsf{L}}
\newcommand{\sM}{\mathsf{M}}
\newcommand{\sR}{\mathsf{R}}
\newcommand{\sS}{\mathsf{S}}
\newcommand{\sT}{\mathsf{T}}
\newcommand{\sV}{\mathsf{V}}
\newcommand{\bC}{\mathbb{C}}
\newcommand{\bD}{\mathbb{D}}
\newcommand{\bN}{\mathbb{N}}
\newcommand{\R}{\mathbb{R}}
\newcommand{\bS}{\mathbb{S}}
\newcommand{\bZ}{\mathbb{Z}}
\DeclareMathAlphabet{\nummathbb}{U}{BOONDOX-ds}{m}{n}
\newcommand{\0}{\nummathbb{0}}
\newcommand{\1}{\nummathbb{1}}
\newcommand{\w}{\omega}
\newcommand{\paren}[1]{\left(#1\right)}
\newcommand{\angles}[1]{\langle #1 \rangle}
\newcommand{\brac}[1]{\left[ #1 \right]}
\DeclareRobustCommand\widecheck[1]{{\mathpalette\@widecheck{#1}}}
\def\@widecheck#1#2{%
    \setbox\z@\hbox{\m@th$#1#2$}%
    \setbox\tw@\hbox{\m@th$#1%
       \widehat{%
          \vrule\@width\z@\@height\ht\z@
          \vrule\@height\z@\@width\wd\z@}$}%
    \dp\tw@-\ht\z@
    \@tempdima\ht\z@ \advance\@tempdima2\ht\tw@ \divide\@tempdima\thr@@
    \setbox\tw@\hbox{%
       \raise\@tempdima\hbox{\scalebox{1}[-1]{\lower\@tempdima\box
\tw@}}}%
    {\ooalign{\box\tw@ \cr \box\z@}}}
\begin{document}
\numberwithin{equation}{section}
\maketitle
\begin{abstract} 
Recent advances in attention-free sequence models rely on convolutions as alternatives to the attention operator at the core of Transformers. In particular, \textit{long} convolution sequence models have achieved state-of-the-art performance in many domains, but incur a significant cost during auto-regressive inference workloads -- naively requiring a full pass (or \textit{caching} of activations) over the input sequence for each generated token -- similarly to attention-based models. In this paper, we seek to enable $\mathcal O(1)$ compute and memory cost per token in any pre-trained long convolution architecture to reduce memory footprint and increase throughput during generation. Concretely, our methods consist in extracting low-dimensional linear state-space models from each convolution layer, building upon rational interpolation and model-order reduction techniques. We further introduce architectural improvements to convolution-based layers such as {$\sf Hyena$}: by weight-tying the filters across channels into \textit{heads}, we achieve higher pre-training quality and reduce the number of filters to be distilled. The resulting model achieves $10\times$ higher throughput than Transformers and $1.5\times$ higher than ${\sf Hyena}$ at $1.3$B parameters, without any loss in quality after distillation.
\end{abstract}
\section{Introduction}
Attention-free approaches such as \textit{long convolution sequence models} (LCSMs), e.g., {$\sf H3$} \cite{dao2022hungry}, {$\sf Hyena$} \cite{poli2023hyena}, have shown promise in matching Transformer \cite{bahdanau2014neural,vaswani2017attention} performance across a wide range of tasks, with sub-quadratic complexity with respect to sequence length. 
Despite the improved efficiency during training on long sequences, unless the convolution filters are either \textit{short} or admit a \textit{low}-dimensional state-state-space realization, LCSMs still need to process the entire growing sequence at every step of auto-regressive generation, similarly to Transformers.

In this work, we seek to refine LCSMs in both \textbf{efficiency} and \textbf{quality}. First, we study the inference stage, and propose methods to enable a \textit{recurrent} mode for auto-regressive generation. Recurrent modes prescribe the existence of a \textit{state} encoding the past information of the process in a fixed-dimension memory, enabling \textbf{constant per-step time} and \textbf{constant-memory} in generation. Then, we draw upon an analysis of pre-trained models to develop architectural enhancements for the {$\sf Hyena$} block, simultaneously improving model quality and efficiency of the distillation procedure. 
\paragraph{Distilling fast recurrences}
We introduce {$\sf LaughingHyena$}, the first distillation approach for LCSMs that enables recurrent inference without impacting downstream quality. 
{$\sf LaughingHyena$} seeks compact recurrences in the form of \textit{state-space models} (SSMs) \cite{chen1984linear,gu2021efficiently} as the solution of a nonlinear interpolation problem involving the convolution filters of a pre-trained model. Since the total memory cost of SSMs grows linearly in the state dimension $d$, our distillation procedure enables high throughput by enabling processing of large batches during generation.
\clearpage
We identify and address three core challenges related to distillation, including the identification of:
\begin{itemize}[leftmargin=0.25in]
    \item \textbf{Target state dimension:} we identify candidate state dimensions of our distilled SSMs by analyzing the spectrum of the Hankel operator associated with each convolution \cite{adamyan1971analytic}.
    \item \textbf{Parametrization:} we address issues with naive parametrizations by introducing a factorized \textit{modal} form, inspired by barycentric \cite{nakatsukasa2018aaa} and Prony-like \cite{prony1795essai} methods .
    \item \textbf{Approximation metric:} to ensure compatibility with any downstream task, we choose discrepancy metrics on the convolution filter, rather than model outputs.
\end{itemize}
\begin{wrapfigure}[14]{r}{0.5\linewidth}
    \vspace{-8mm}
    \centering
    \begin{tikzpicture}[font=\small]
    \definecolor{crimson2143940}{RGB}{214,39,40}
    \definecolor{darkgray176}{RGB}{176,176,176}
    \definecolor{darkorange25512714}{RGB}{255,127,14}
    \definecolor{forestgreen4416044}{RGB}{44,160,44}
    \definecolor{lightgray204}{RGB}{204,204,204}
    \definecolor{steelblue31119180}{RGB}{31,119,180}
    \begin{axis}[
    legend cell align={left},
    legend columns=2,
    legend style={
      fill opacity=0.,
      draw opacity=1.,
      text opacity=1.,
      nodes={scale=0.8, transform shape},
      at={(-0.05,1.105)},
      anchor=south west,
      draw=none
    },
    width=.95\linewidth,
    height=4.326cm,
    xlabel={$\sf Batch~Size$},
    xmajorgrids,
    xmin=1, xmax=512,
    xtick style={color=black},
    ylabel style = {align=center}, 
    ylabel={${\sf Generation~Throughput}~[{\sf tok}/s]$},
    ymajorgrids,
    ymin=40, ymax=10000,
    ]
    \addplot [very thick, steelblue31119180]
    table {%
    1 60
    8 451
    16 915
    32 1727
    64 3162
    128 5459
    256 8622
    512 9402
    };
    \addlegendentry{$\sf Laughing~Hyena$~$\sf 1.3B$}
    \addplot [very thick, blue!50!white, dashdotted]
    table {%
    1 41
    8 309
    16 608
    32 1199
    64 2240
    128 3954
    256 6576
    };
    \addlegendentry{$\sf Hyena$~$\sf 1.3B$}
    \addplot [very thick, black, dashed]
    table {%
    1 40
    8 299
    16 587
    32 1109
    64 2124
    128 3096
    256 4096
    };
    \addlegendentry{$\sf Hybrid~Attn.~H3$~$\sf 1.3B$}
    \addplot[very thick, black, dotted] 
    table {
    1 76
    8 485
    16 756
    32 1019
    64 1283
    128 1429
    };
    \addlegendentry{$\sf Transformer~1.3B$}
    \end{axis}
\end{tikzpicture}
    \vspace{-12mm}
    \caption{\footnotesize Throughput (in generated tokens) of Transformers, {$\sf H3$} and {$\sf Hyena$} models. {$\sf LaughingHyena$} is a recurrent model distilled from a pre-trained {$\sf Hyena$}. Workload involves generating $256$ tokens given a prompt of length $512$.}
    \label{fig:throughput_batch}
\end{wrapfigure}
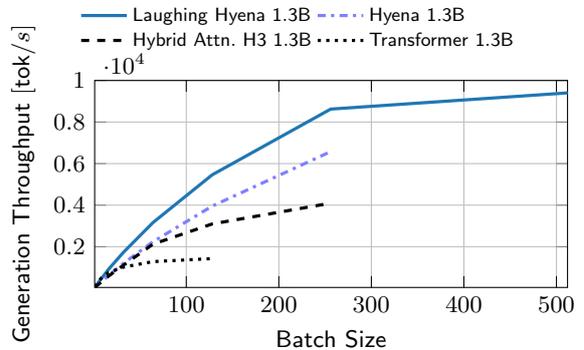
In auto-regressive workloads, {$\sf LaughingHyena$}-distilled models with state dimension $d$ can generate $K$ tokens in $\cO(dK)$ time and with constant $\cO(d)$ memory -- improving over the $\cO(K^2)$ time and $\cO(K)$ memory usage of \textit{kv-cached} Transformers and naively executed long convolutions. At model sizes above one billion parameters, {$\sf LaughingHyena$} achieves $10\times$ higher peak throughput over comparable Transformers (Figure \ref{fig:throughput_batch}), and can process larger batch sizes. Constant memory generation enables larger $K$ for a given a memory constraint e.g., generating $512$ tokens with {$\sf LaughingHyena$} requires $3\times$ less memory than with a Transformer. At smaller batch sizes, latency of {$\sf LaughingHyena$} is also competitive with Transformers, reaching $\geq 2\times$ speedups at longer prompt lengths. 

\paragraph{Improving pre-training quality} We leverage our analysis of the distillation process to open up new avenues of improvement for {LCSM} architectures. Indeed, the high compression rates achievable through ${\sf LaughingHyena}$ hint at sub-utilization of the convolution. We revisit the multi-headed design of {$\sf H3$} \cite{dao2022hungry}; tying weights across channels pushes long convolution filters towards larger effective dimension, and as an additional advantage reduces the runtime of post-training distillation and inference memory footprint. Further, multi-head {$\sf Hyena$} models improve on pre-training perplexity over regular {$\sf Hyena$} and GPT \cite{radford2019language} architectures on the language dataset {\sc The Pile} \cite{gao2020pile}.

\section{Preliminaries and Related Work}
We discuss convolutions, state spaces and auto-regressive generation workloads for sequence models.
\paragraph{Convolutions}
Let $*$ denote the convolution operator. It is defined as the dual operation to point-wise multiplication under Fourier transform.
In signal processing and deep learning alike, one often encounters the causal linear convolution of a filter $h$ (which may extend indefinitely) with an input $u$ of length $L$: 
\begin{equation}\label{eq:cnn}
    (h * u)_t = \sum_{j=0}^{t} h_{t -j} u_j.
\end{equation}
Generally, $u_t\in\R^D$ where $D$ is the width of the signal -- or in deep learning parlance -- the number of \textit{channels}. Without loss of generality, we specialize our analysis to \textit{single input single output} layers, i.e. with $D=1$. For the input-output relations of type \eqref{eq:cnn}, we use the terms \textit{convolution layer} and \textit{linear system} interchangeably. Similarly, the function $t\mapsto h_t$ is referred to as both the \textit{filter} and the \textit{impulse response} of a linear system. 
Existing convolution sequence models can be classified in terms of the parametrization used for their filters. The class of \textit{implicit} convolutions represent the filter as a parametric function $\gamma_\theta: t \mapsto h_t$. 
\paragraph{State-space realization} One option is to select $\gamma_\theta$ as the \textit{impulse response} function of a discrete linear time-invariant system,
\begin{equation}\label{eq:dtssm}
    \begin{aligned}
        x_{t+1} &= \sA x_t + \sB u_t\\
        y_t &= \sC x_t + h_0 u_t
    \end{aligned}~,\quad 
    t\mapsto
    h_t = 
    \begin{cases}
        h_0 & t=0\\
        \sC \sA^{t-1} \sB & t>0
	\end{cases}
\end{equation}
with \textit{state} $x_t \in \mathbb{R}^d$, \textit{input} $u_t \in \mathbb{R}$, and \textit{output} $y_t \in \mathbb{R}$. The matrices $\sA\in\R^{d \x d}$, $\sB\in\R^{d\x 1}$, $\sC\in\R^{1\x d}$, and $h_0\in\R$ are the learnable parameters of the model while the initial state $x_0$ is usually set to zero such that $u\mapsto y$ is a pure convolution. While linear systems \eqref{eq:dtssm} are the staple of signal processes and control theory, their use as implicit parametrization of convolution filters in deep neural networks have only recently emerged \cite{gu2020hippo, gu2021efficiently}. 
Other parametrizations \cite{romero2021ckconv,romero2021flexconv,poli2023hyena} select $\gamma_\theta(t)$ as different flavors of implicit representation neural networks \cite{sitzmann2020implicit, fathony2020multiplicative}. The latter are generally more powerful in terms of the class of filters they can represent and flexibility during training, at the cost of losing a fixed state dimension.
%
\subsection{Long Convolution Sequence Models}
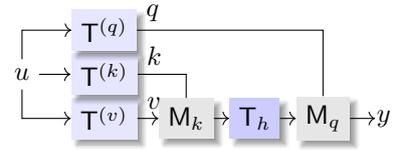
\begin{wrapfigure}[5]{r}{0.34\linewidth}
    \centering
    \vspace{-16mm}
    \begin{tikzpicture}[baseline=2.5ex, scale=.9]
        \node (u) at (-0.2, 0.65) {$u$};
        \node (Hv) [minimum width=0.81cm, fill=blue!10, blur shadow={shadow blur steps=5}] at (1,0) {$\sT^{(v)}$};
        \node (Hk) [fill=blue!10, blur shadow={shadow blur steps=5}] at (1,.65) {$\sT^{(k)}$};
        \node (Hq) [fill=blue!10, blur shadow={shadow blur steps=5}] at (1,1.3) {$\sT^{(q)}$};
        \node (s) [fill=gray!20, blur shadow={shadow blur steps=5}] at (2.2,0) {$\sM_{k}$};
        \node (H2) [fill=blue!20, blur shadow={shadow blur steps=5}] at (3.2,0) {$\sT_h$};
        \node (s1) [fill=gray!20, blur shadow={shadow blur steps=5}] at (4.2,0) {$\sM_{q}$};
        \node (y)  at (5.1,0) {};
        \draw[->] (u) |- (Hv.west); 
        \draw[->] (u) |- (Hq.west); 
        \draw[->] (u) -- (Hk.west); 
        \draw[-] (Hk.east)  node [above right] {$k$}  -| (s); 
        \draw[-] (Hq.east) node [above right] {$q$} -| (s1); 
        \draw[->] (Hv.east) node [above right] {$v$} -- (s);
        \draw[->] (s.east) -- (H2); 
        \draw[->] (H2.east) -- (s1);
        \draw[->] (s1.east) node [right, xshift=1.5ex] {$y$} -- (y);
    \end{tikzpicture}
    \caption{\footnotesize $\sf H$-block. $\sT^{(q)}$, $\sT^{(k)}$, $\sT^{(v)}$ are \textit{short}-convolution operators.}
    \label{fig:throughput_latency}
\end{wrapfigure}
The {$\sf H$}-family of convolution sequence models -- {$\sf H3$} \cite{dao2022hungry} and {$\sf Hyena$} \cite{poli2023hyena} -- relies on a combination of long convolutions and data-controlled gating to replace attention with sub-quadratic scaling in sequence length\footnote{In this work, we consider second-order {$\sf Hyena$} blocks \cite{poli2023hyena} to automatically extend our findings to ${\sf H3}$ \cite{dao2022hungry}.}. We use the deep learning convention of naming different projections as \textit{query} $q$, \textit{key} $k$ and \textit{value} $v$. Let $\sM_q$ and $\sM_k$ be the $L$-by-$L$ diagonal matrices whose respective main diagonal entries are the respective entries of length-$L$ sequences $q$ and $k$. A ${\sf H}$-block realizes a surrogate attention matrix with a data-controlled, parameterized decomposition in three terms:
\begin{equation}\label{eq:linear_attention}
    \begin{aligned}
            (q, k, v) \mapsto \sH(q,k) v, \quad \sH(q,k) = \sM_q \sT_h \sM_k 
    \end{aligned}
\end{equation}
where $\sT_h{\in}\R^{L\times L}$ is the Toeplitz matrix constructed from the learnable long convolution filter $h$, i.e., $\sT_h {=} (h_{i-j})_{i,j=0}^{L-1}$. The $qkv$-projections are themselves the output of a convolution between the input sequence and three distinct \textit{short} filters.
The degrees of freedom in $\sf H$-block design are the three short filters\footnote{The short filters are \textit{explicitly} parameterized, see \cite{poli2023hyena}.} and the \textit{long} filter $h$. The long filter can be parameterized using an implicit neural representation~\cite{poli2023hyena}, state-space model~\cite{dao2022hungry}, or explicit values~\cite{fu2023simple}. The threefold decomposition of the attention operator, allows evaluation of \eqref{eq:linear_attention} in just $\tilde\cO(L) \coloneqq \cO(L\log_2 L)$ time (two convolutions\footnote{The $qkv$ short convolutions can be evaluated in batch with a single pass. The second convolution is the one with the long filter $h$ and performed via Fast Fourier Transform (FFT), hence the $\tilde\cO(L)$ complexity.} and two element-wise products), $y_{t} {=} q_{t}(h * kv)_t$. The overall operator acts on an input $u$ by constructing a third-order multi-variate polynomial of $u$ whose coefficients are controlled (nonlinearly) by parameters of the block.
\subsection{Auto-Regressive Generation}
A typical workload for sequence models is auto-regressive generation. Given a length-$T$ \textit{prompt} $u\in\R^T$, the model is tasked with producing the following $K$ additional outputs -- one at a time -- for a resulting output sequence $y$ of length $L {=} T {+} K$.
\paragraph{Convolution sequence models}
After processing the initial prompt in $\tilde\cO(T)$ time and obtaining a length-$T$ output $u\mapsto y_0,\dots, y_{T-1}$, a generic convolution layer can \textit{cache} the output sequence and generate any additional outputs using \eqref{eq:cnn} auto-regressively, i.e. $y_{t+1} {=} \sum_{j=0}^{t} h_{t-j}y_j$ for $t{=}T{-}1,\dots,T {+} K{-}1$.
It is important to note that auto-regressive generation with generic long convolutions is expensive. It comes with a \textbf{quadratic} cost in the number $K$ of tokens to be generated and require storing a cache of length up to $L$.
\begin{tcolorbox}[enhanced, frame hidden, sharp corners, boxsep=0pt, before skip=0pt, after skip=0pt]
\begin{lemma}\label{lemma:arcost_cnn}
    Generating $K$ tokens with a long convolution layer \eqref{eq:cnn} from a length-$T$ prompt has time complexity $\cO(T\log_2 T + TK +  K^2)$ and requires $\cO(L)$ memory.
\end{lemma}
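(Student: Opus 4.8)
The plan is to split the generation workload into two phases, bound the cost of each separately, and add them; the memory claim then follows from identifying what must be kept resident across both phases.

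\textbf{Prompt phase.} Mapping the length-$T$ prompt to the cached outputs $y_0,\dots,y_{T-1}$ amounts to evaluating the causal convolution \eqref{eq:cnn} at $t=0,\dots,T-1$. Instead of the naive $\cO(T^2)$ evaluation, I would compute all $T$ outputs in one shot via an FFT-based convolution---the same primitive already used to apply the long filter $h$ inside the block---after zero-padding to length $\Theta(T)$. This contributes the $\cO(T\log_2 T)$ term.

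\textbf{Generation phase.} For the $K$ new tokens I would use the recurrence $y_{t+1}=\sum_{j=0}^{t}h_{t-j}y_j$ for $t=T-1,\dots,T+K-1$. Each step is an inner product over $j=0,\dots,t$, hence at most $t+1\le L$ multiply-accumulate operations; with $\Theta(K)$ such steps the crude bound is $\cO(KL)=\cO(K(T+K))$, and the exact summation $\sum_{t=T-1}^{T+K-1}(t+1)=\tfrac12(K+1)(2T+K)=TK+\tfrac12 K^2+\Theta(T+K)$ confirms that both a $TK$ and a $K^2$ term genuinely appear, giving $\cO(TK+K^2)$. Adding the two phases yields the stated time complexity $\cO(T\log_2 T+TK+K^2)$.

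\textbf{Memory.} Across generation I would keep resident only the filter values $h_0,\dots,h_{L-1}$ read by the recurrence (the final step accesses $h_{t-j}$ up to index $L-1$) and the growing output cache $y_0,\dots,y_t$, which reaches length $L=T+K$. Both are $\cO(L)$, so the footprint is $\cO(L)$.

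This argument is essentially bookkeeping, so I do not anticipate a genuine obstacle. The only two points deserving care are invoking the FFT (rather than a quadratic convolution) to hold the prompt phase at $\cO(T\log_2 T)$, and verifying that the generation sum produces both the $TK$ and $K^2$ contributions rather than collapsing to a single dominant term.
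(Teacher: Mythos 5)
Your proposal is correct and matches the paper's own proof essentially step for step: an FFT-based pre-filling pass giving the $\cO(T\log_2 T)$ term, an arithmetic-series bound on the growing inner products of auto-regressive decoding giving $TK + \Theta(K^2)$, and the $\cO(L)$ memory from caching the length-$L$ output sequence. No gaps; the minor indexing and constant-factor differences in the summation are immaterial to the asymptotics.
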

\end{tcolorbox}

\paragraph{{State-space models}}
When the linear system admits a state space realization \eqref{eq:dtssm}, i.e. it is able to switch between convolution and recurrent mode, the cost of auto-regressive generation can be dramatically reduced. The memory footprint is $\cO(d)$: all we need to cache is the state $x_t$, a $d$-dimensional vector. With some further machinery that we develop in next section, we can retain $\tilde\cO(T)$ time and $\cO(T)$ memory to process the prompt\footnote{In \S\ref{deploy} we show that multiple pre-filling strategies exist, with different trade-offs in time and memory.} and initialize the state $x_{T-1}$. Each additional generation step only requires $\cO(d)$ time. 
\begin{tcolorbox}[enhanced, frame hidden, sharp corners, boxsep=0pt, before skip=0pt, after skip=0pt]
\begin{lemma}\label{lemma:arcost_ssm}
    Generating $K$ tokens with a state-space model \eqref{eq:dtssm} from a length-$T$ prompt has time complexity $\cO(T\log_2 T {+} dK)$ and requires $\cO(T + d)$ memory.
\end{lemma}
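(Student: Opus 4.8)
The plan is to decompose the workload into a \emph{prefill} phase that processes the length-$T$ prompt (producing $y_0,\dots,y_{T-1}$ together with a state summarizing the prompt) and a \emph{generation} phase of $K$ recurrent steps, bound each phase separately, and sum. This mirrors the two-phase structure behind Lemma~\ref{lemma:arcost_cnn}, but the state-space realization \eqref{eq:dtssm} lets us replace the growing convolution sums of the naive convolutional scheme with constant-cost recurrent updates.

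For the generation phase I would work in the modal (diagonal) realization, so that a single step -- forming $x_{t+1}=\sA x_t+\sB u_t$ and reading off $y_{t+1}=\sC x_{t+1}+h_0 u_{t+1}$, with $u_{t+1}$ the fed-back previous output -- costs $\cO(d)$: a diagonal matrix--vector product plus two length-$d$ inner products. Because the state $x_t\in\R^d$ is a sufficient statistic of the past, no prompt history must be retained, so the $K$ steps cost $\cO(dK)$ time and only $\cO(d)$ persistent memory, removing the $\cO(TK+K^2)$ time and $\cO(L)$ cache incurred in Lemma~\ref{lemma:arcost_cnn}. (For a generic dense $\sA$ a step would be $\cO(d^2)$; the $\cO(dK)$ bound uses that the distilled realization is modal.)

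For the prefill I would compute the prompt outputs $y_0,\dots,y_{T-1}$ as the length-$T$ truncation of $h*u$ via a single FFT-based convolution with the precomputed filter, in $\cO(T\log_2 T)$ time and $\cO(T)$ memory. What remains is to initialize the terminal state $x_{T-1}=\sum_{j=0}^{T-2}\sA^{T-2-j}\sB u_j$ that the generation phase consumes. In modal form its $i$-th coordinate is $(\sB)_i\sum_{j}\lambda_i^{T-2-j}u_j$, i.e. a fixed degree-$(T-2)$ polynomial in the prompt evaluated at the pole $\lambda_i$; evaluating it at all $d$ poles is the application of a $d\times T$ Vandermonde map, which structured/fast multipoint evaluation carries out in $\tilde\cO(T+d)=\cO(T\log_2 T)$ time (for $d=\cO(T)$) within $\cO(T)$ memory. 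Summing the phases gives time $\cO(T\log_2 T)+\cO(dK)$ and peak memory $\cO(T)+\cO(d)$, which are the claimed bounds.

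I expect the prefill state initialization to be the crux. A naive recurrent sweep of \eqref{eq:dtssm} across the prompt materializes all $T$ intermediate states at cost $\cO(Td)$, which would contaminate the bound with a $Td$ term; the argument must instead produce \emph{only} the single terminal state through the structured evaluation above -- precisely the kind of prefilling machinery the deployment section develops, where several time/memory trade-offs are available. The remaining per-step accounting and the memory bookkeeping are routine once the modal structure is fixed; the delicate part is packing the prompt output and the terminal state into a shared $\cO(T\log_2 T)$ budget without ever paying $\cO(Td)$.
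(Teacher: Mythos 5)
Your high-level decomposition --- FFT-based prefill for the prompt, then $K$ recurrent steps at $\cO(d)$ each --- is also the paper's, and your generation-phase accounting matches in substance: the paper obtains the $\cO(d)$ step from the companion canonical form (Lemma~\ref{lemma:efficient_recurrence}, via Lemma~\ref{lemma:canonicization}), while your diagonal step is exactly Proposition~\ref{prop:modal_realization}; both give $\cO(d)$ time and memory per token. The genuine divergence is in the step you correctly single out as the crux: initializing $x_{T-1}$. The paper's mechanism is Proposition~\ref{prop:fast_init}: in companion form the post-prompt state is a length-$d$ window $(\nu_{T},\dots,\nu_{T-d})$ of a single auxiliary sequence $\nu=g*u$, where $g$ has transfer function $1/{\sf den}(\hat H)(z)$, so one length-$T$ FFT convolution delivers the state in $\cO(T\log_2 T)$ time and $\cO(T)$ memory, with no condition on how $d$ compares to $T$. (In fact the appendix proof body initializes the state by running the recurrence at $\cO(dT)$ cost and only attains the advertised bound in the follow-up remark invoking Proposition~\ref{prop:fast_init}; your insistence on never paying $\cO(dT)$ is, if anything, more faithful to the lemma statement than the paper's own proof text.)

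Your substitute for that step --- reading the modal state as a degree-$(T-2)$ polynomial in the prompt evaluated at the $d$ poles, i.e.\ a transposed-Vandermonde product --- is a correct identity, but the cost you assign to it is overclaimed. Fast multipoint evaluation at $d$ arbitrary complex nodes runs in $\cO(T\log T + d\log^2 d)$ (reduce the polynomial modulo the product of the node factors, then descend the remainder tree), and for $d=\Theta(T)$ this is $\Theta(T\log^2 T)$, one log factor above the lemma's bound; the claimed equality ``$\tilde\cO(T+d)=\cO(T\log_2 T)$ for $d=\cO(T)$'' is therefore false. Your argument proves the lemma only under a restriction such as $d\log^2 d=\cO(T\log_2 T)$ --- amply satisfied in the distillation regime $d\ll T$ that motivates the result, but not for all $d\le T$ --- whereas the companion-window trick gives the bound unconditionally, with a single numerically benign FFT rather than a remainder tree. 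A small additional nit: the naive sequential sweep you dismiss costs $\cO(dT)$ time but only $\cO(d)$ memory, since it keeps just the current state; the objection to it is time, not a memory blow-up (it is the parallel scan that materializes $\cO(dT)$ memory).
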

\end{tcolorbox}
\noindent 
Note that long filters $h$ truncated to length $d$ (i.e. $h_t {=} 0$ for $t>d-1$) can also be interpreted as $d$-dimensional SSMs (see Appendix \ref{sec:fir_to_ssm}) where the state (a cache) coincides with the last $d$ inputs.
\paragraph{Transformers} Self-attention is certainly less efficient than long convolutions in processing the prompt, coming with a hefty $\cO(T^2)$ time complexity. However, Transformers can achieve a similar efficiency in auto-regressive generation by \textbf{caching} the sequences of past keys $\{k_t\}$ and values $\{v_t\}$. Specifically, from $t{=}T{-}1$ onward, the new projections $(q_{t+1}, k_{t+1}, v_{t+1})$ are evaluated from the current output $y_t$, and the new output $y_{t+1}$ can be computed in linear time with two reductions
\begin{equation*}
    y_{t+1} = \frac{\sum_{j=0}^{t+1} \varphi(q_{t+1}  k_j) v_j}{\sum_{i=0}^{t+1} \varphi(q_{t+1}  k_j)}\quad \text{where $\varphi : \R \rightarrow \R$ is usually chosen as $\varphi(x) = e^x$.}
\end{equation*}

\begin{tcolorbox}[enhanced, frame hidden, sharp corners, boxsep=0pt, before skip=0pt, after skip=0pt]
\begin{lemma}\label{lemma:arcost_attn}
    Generating $K$ tokens with self-attention from a length-$T$ prompt has time complexity $\cO(T^2 {+} TK {+} K^2)$ and requires $\cO(L)$ memory.
\end{lemma}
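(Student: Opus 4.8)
The plan is to split the cost into two phases, mirroring the proofs of Lemmas~\ref{lemma:arcost_cnn} and~\ref{lemma:arcost_ssm}: prompt processing (prefill) and the $K$ auto-regressive steps. First I would bound the prefill. Evaluating self-attention on the length-$T$ prompt requires forming all query--key interactions $\varphi(q_i k_j)$ for $i,j \in \{0,\dots,T-1\}$ together with the associated value-weighted reductions, which is $\cO(T^2)$ time; this yields the outputs $y_0,\dots,y_{T-1}$ and, crucially, writes the keys $\{k_j\}$ and values $\{v_j\}$ into a cache at $\cO(T)$ memory.

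Next I would account for generation. From $t{=}T{-}1$ onward, each new output $y_{t+1}$ is obtained from the displayed incremental update after appending the freshly computed $(q_{t+1},k_{t+1},v_{t+1})$ to the cache: two reductions, each a sum over $j\in\{0,\dots,t+1\}$. Since both reductions scan the current sequence length, step $t{+}1$ costs $\cO(t+1)$ time and enlarges the cache by one key/value pair. The $K$ generation steps produce $y_T,\dots,y_{T+K-1}$, so I would sum the per-step costs as
\[
  \sum_{i=1}^{K} \cO(T+i) \;=\; \cO\!\left(KT + \tfrac{K(K+1)}{2}\right) \;=\; \cO(TK + K^2).
\]
Combining the two phases gives total time $\cO(T^2) + \cO(TK + K^2) = \cO(T^2 + TK + K^2)$. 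For memory, the kv-cache begins at length $T$ after prefill and grows by one pair per generated token, reaching length $L{=}T{+}K$; hence peak memory is $\cO(L)$, which dominates the $\cO(1)$ working memory of a single reduction.

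Honestly, there is no deep obstacle here — the statement is essentially bookkeeping — so the only points to get right are the indexing of the generation steps and the arithmetic of the (otherwise routine) series. The one modeling assumption worth flagging explicitly is that each incremental update is genuinely $\cO(\text{current length})$ rather than $\cO(\text{current length}^2)$: this is precisely what the kv-caching reformulation in the displayed equation buys us, and it is what yields the $TK + K^2$ generation term instead of a naive re-computation cost. I would state that reliance up front, so the reader sees where the $\cO(T^2)$ prefill term and the $\cO(TK+K^2)$ generation term each originate.
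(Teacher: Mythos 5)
Your proof is correct and follows essentially the same route as the paper: the paper simply observes that the argument is identical to that of Lemma~\ref{lemma:arcost_cnn} (prefill plus per-step costs summed as $\sum_k \cO(T+k) = \cO(TK+K^2)$), with the sole change that prompt processing costs $\cO(T^2)$ instead of $\cO(T\log_2 T)$, and that the $kv$-cache accounts for the $\cO(L)$ memory. You have merely spelled out explicitly what the paper states by reference, including the same summation and the same cache-growth accounting.
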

\end{tcolorbox}
\section{The Laughing Hyena Distillery}
In this section, we introduce our distillation method. We discuss choosing an approximation objective, a parametrization for the approximant and setting a target state dimension. 

Given any pre-trained LCSM, the objective of the distillation procedure is to convert each pre-trained convolution filter into a distinct state-space model \eqref{eq:dtssm}. This should be achieved with the smallest state dimension $d$ which preserves, up to a certain tolerance, the input-output characteristics of the convolution layer.
Formally, given a filter $h$ the \textbf{distillation problem} is defined as follows.
\begin{tcolorbox}[enhanced, frame hidden, sharp corners, colback=blue!5, boxsep=0pt] Given the sequence $h_1, \dots, h_L$, find a state-space model \eqref{eq:dtssm}
of dimension $d\ll L$, whose input–output behavior \textit{approximates} the one of the convolution with $h$ over the largest class of input sequences.
\end{tcolorbox} 

\hspace{10pt}The choice of approximation metrics and assumptions on the input sequences yield different \textit{distillation objectives}. A \textit{distillation algorithm} constitutes a systematic procedure for optimally choosing the systems matrices with respect to a particular objective. In instances where the original filter $h$ is itself the impulse response of a finite-dimensional state-space model, e.g., when attempting distillation of \htre~or $\sf S4$ \cite{gu2021efficiently} filters, the term distillation becomes analogous to \textit{model-order reduction}. Hence, in such cases, the distillation algorithm should yield a state-space representation of a lower order state-dimension.

\hspace{10pt}There exist several algebraic solutions to the model reduction problem \cite{zhou1998essentials, antoulas2005approximation, schilders2008model}, typically seeking low-rank structures of the state space by inspecting some invariant of the system, e.g. the \textit{Gramians} in \textit{balanced truncation} \cite[Ch. 7]{antoulas2005approximation}. The lower-order system is then obtained as a projection of the system dynamics onto the found subspace where the system retains desired characteristics, e.g., input-output behavior, stability, etc. 
%
\paragraph{Truncated filters} In theory, implicitly parameterized convolution filters can represent arbitrarily long signals. In practice, these filters are trained on a fixed \textit{maximum length} $L$. At inference time the model can then be evaluated for sequences longer than $L$. During distillation it is nonetheless reasonable to treat the pre-trained filters as potentially very long (even beyond $L$) but \textit{finite} impulse response functions \cite{kung1978new, friedman1981approximating, bednar1983approximation, beliczynski1992approximation}. We show how this choice is supported by empirical evidence displaying how pre-trained filters typically decay to zero in finite time (see Appendix \ref{app:exp_details}).

\paragraph{\textit{Transfer function} representation} An alternative description of the system \eqref{eq:dtssm} is its \textit{transfer function} $H$, defined as the $z$-transform of the impulse response $H(z) {=} \sum_{t=0}^{\infty}h_t z^{-t}$ for all $z {\in} \bC$ where the sum converges. The transfer function is a \textit{proper rational function} of $z$%
\begin{equation}\label{eq:dtssm_tf}
	H(z) = h_0 + \sC(z\sI-\sA)^{-1}\sB = h_0 + \frac{b_1z^{-1} + ~\cdots~ + b_d z^{-d}}{1 + a_1z^{-1} + ~\cdots~ + a_dz^{-d}}.
\end{equation}
\begin{tcolorbox}[enhanced, breakable, frame hidden, sharp corners, boxsep=0pt, before skip=2pt, after skip=0pt, left skip=0pt, right skip=0pt,colback=green!5]
    In the $z$-domain, the transfer function defines the input-output map as $Y(z) = H(z)U(z)$. Here, $H(z)$ is defined outside the $\bC$-plane circle of radius $\rho(\sA)$, $\bD_{\rho(\sA)} {\coloneqq} \{z\in\bC : |z|>\rho(\sA)\}$ where $\rho(\sA)$ is the spectral radius of $\sA$, i.e. the amplitude of its largest eigenvalue. We can recover all characteristics of a given system equivalently from either its transfer function or state-space representations (see Appendix~\ref{sec:tf_lumped} for further details and derivations).
    Notably, the transfer function is an \textit{invariant} of the system: if we apply a change of variables to the state, the transfer function remains unchanged (Lemma~\ref{prop:tf_invariant}). 
    This alone should discourage attempts at modeling filters by learning \textit{dense} state-space matrices $\sA,\sB,\sC$ as such: there are infinitely many equivalent state-space realizations that map to the same system. Starting from coefficients $(a_i)$ and $(b_i)$ of the rational transfer function \eqref{eq:dtssm_tf}, we can compute the impulse response in $\tilde\cO(L)$ time (Lemma~\ref{lemma:tf_to_irf}). Moreover, we can map back the transfer function to a special state-space realization  -- the \textit{companion} canonical form -- whose recurrence has time complexity $\cO(d)$ (Lemma~\ref{lemma:efficient_recurrence}), compared to the $\cO(d^2)$ of dense state-space matrices. From Lemmas~\ref{prop:tf_invariant} and \ref{lemma:efficient_recurrence} we can also prove that any stable state-space model can be converted by \textit{canonicalization} into its companion form, and thus can be equipped with an efficient recurrence (Thm.~\ref{lemma:canonicization}).
\end{tcolorbox}
The distillation problem presents several challenges:
\begin{enumerate}[leftmargin=0.25in]
    \item  \textbf{Defining the distillation objective.} A primary decision involves selecting a distillation objective. We are primarily interested in metrics of pure discrepancy between each filter of a pre-trained deep model and its approximator, rather than the expected input-output loss over a distribution of inputs. 
    \item \textbf{Choosing a state-space parametrization.} It is crucial to determine a suitable parametrization of the distilled state-space realization. Once this is decided, the task is to identify the parameters that minimize the distillation desiderata, which can involve challenging optimization problems in itself.
    \item \textbf{Selecting the target state dimension.} Lastly, a challenge is to estimate the degree to which the model's order can be reduced. In other words, we must select the target state dimension of the distillation process to identify the right trade-off between efficiency and accuracy.
\end{enumerate}
In the following, we address each of these challenges, and provide a comprehensive approach (summarized in Figure \ref{fig:blueprint}) to distill recurrences from convolution-based architectures.
\begin{figure}
    \centering
    \includegraphics[scale=.9]{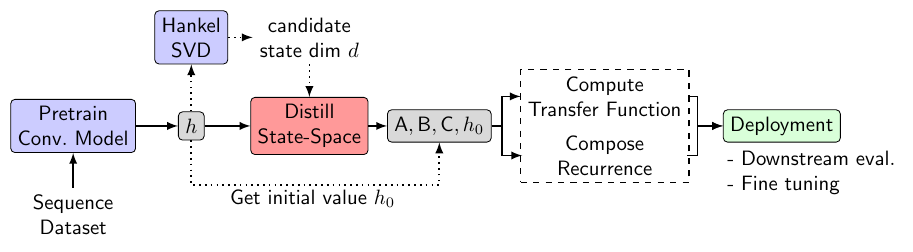}
    \vspace{-5mm}
    \caption{\footnotesize The {$\sf LaughingHyena$} long convolution sequence model distillation blueprint.}
    \label{fig:blueprint}
\end{figure}
\subsection{\textit{Data-Free} Distillation Objectives}
We focus on distillation objectives that are independent of the training data and the overall architecture of the neural network under consideration. The distillation loss should be chosen as a pure measure of discrepancy between each convolution filter  $h_t$ of the model and their finite-dimensional approximations $\hat h_t = \sC \sA^{t-1}\sB$.
This approach ensures that we do not require a full sequential inference pass over the pre-trained model at each step of distillation procedure and the distilled model can be more broadly applied to downstream tasks. This choice is supported by Young's convolution inequality \cite{young1912multiplication, beckner1975inequalities}, which indicates that the output approximation error has a bound $\|y - \hat y\|_{r} \leq \|h - \hat h\|_{q}\|u\|_{p}$ for properly chosen norms\footnote{$p,q,r>0$ should satisfy $1/q + 1/p = 1/r + 1$. In the case of infinite sequences defined on the all $\bZ$, the norms are taken in a $\ell_p,\ell_q, \ell_r$ sense, respectively. The bound is potentially sharp \cite{fournier1977sharpness,quek1983sharpness}}.
For maximum numerical stability and freedom of parametrization for the approximants, we favor modern unconstrained gradient-based approaches to then solve the resulting distillation program\footnote{For completeness, we also test \textit{balanced} and \textit{modal} truncation techniques on a suite of pre-trained $\sf H3$ and $\sf Hyena$ models in Appendix \ref{app:model_reduction}.}. 
We design distillation algorithms which either match filters in \textit{time domain}  minimizing the $\ell_2$ error ($\|h\|_2\coloneqq[\sum_{t\in\bZ}|h_t|^2]^{1/2}$) or match their transfer functions optimally with respect to the $\cH_2$ norm ($\|H\|_2 \coloneqq [ (1/2\pi)\int_{-\pi}^{\pi}|H(e^{i\omega})|^2\dd \omega^{1/2}$)\footnote{Such norms are always well-defined for finite sequences of interest which are in $\ell_{\infty}$.}.
As the distillation is carried out via gradient methods, $\ell_2$ is a natural candidate. $\cH_2$ error minimization can instead be used to uniformy bound the worst-case discrepancy as $\|h - \hat h\|_{\infty}\leq \|H - \hat H\|_{2}$ (see Appendix \ref{app:norms} for further details). 
\subsection{Making {$\sf Hyena$} Laugh with Modal Interpolation}\label{modal_form}
Our degrees of freedoms to solve the distillation problem are the matrices $\sA$, $\sB$, and $\sC$ of the state-space realization, which determine the filter for all $t>0$. In distilled SSMs, the passthrough (residual) term cannot be freely assigned: it is simply $h_0$, the value of the original filter at zero. 
Alternatively, given its appealing invariance properties, we can parametrize a proper rational function $\hat H(z)$ \eqref{eq:dtssm_tf} and fit it to the (truncated) transfer function\footnote{As already partially discussed in \cite{gu2021efficiently}, the truncation introduces a correction term in the approximant transfer function. See Appendix~\ref{sec:truncated_tf}.} of the original filter $H_L (z) {\coloneqq} \sum_{t=0}^L  h_t z^{-t}$ (see Appendix~\ref{app:rational_interp}).
\paragraph{Modal canonical form} Optimizing the full transfer function can be numerically challenging for several reasons e.g., ensuring stability\footnote{i.e normalizing  denominator polynomial coefficients to constrain roots within the unit circle.}, and ill-posedness for high-order polynomials. A natural solution, inspired by barycentric approaches to rational function approximation \cite{berrut2004barycentric,nakatsukasa2018aaa}, is to assume $d$ distinct roots $\lambda_n$ in the denominator's polynomial, $\lambda_n \in {\sf roots}({\sf poly}(a)).$ 
\begin{tcolorbox}[enhanced, frame hidden, sharp corners, boxsep=0pt, before skip=0pt, after skip=0pt]
\begin{proposition}[\cite{chen1984linear}]\label{prop:modal_form}
    If ${\sf poly}(a)$ has distinct roots $\{\lambda_n\in\bC\}$, then the transfer function of the system can be factorized as $\hat H(z) {=} \sum_{n=1}^d {R_n}/{(z - \lambda_n)},\quad \forall z\in \bD_{\rho(\sA)}$ where $\{R_n\in\bC\}$ is the residue associated with the pole $\lambda_n$.
\end{proposition}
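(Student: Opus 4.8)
The plan is to reduce the claim to the classical partial-fraction decomposition of the strictly proper part of the transfer function, and to identify the resulting coefficients with the residues at the poles.

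First I would separate the passthrough term. Clearing the powers of $z^{-1}$ in \eqref{eq:dtssm_tf} gives $\hat H(z) = h_0 + N(z)/{\sf poly}(a)(z)$, where ${\sf poly}(a)(z) = z^d + a_1 z^{d-1} + \cdots + a_d$ and $N(z) = b_1 z^{d-1} + \cdots + b_d$ has degree at most $d-1$. Hence the fractional term is strictly proper and the passthrough is recovered as $h_0 = \lim_{z\to\infty}\hat H(z)$, consistent with $h_0$ being the value of the filter at $t=0$; for the pole expansion I work with the strictly proper part $\hat H - h_0 = \sC(z\sI - \sA)^{-1}\sB$, which vanishes at infinity. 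By hypothesis ${\sf poly}(a)$ has $d$ distinct roots $\lambda_1,\dots,\lambda_d$, and since ${\sf poly}(a) = \det(z\sI - \sA)$ for the companion/minimal realization recalled above, these are precisely the simple eigenvalues of $\sA$, i.e. the poles of the system.

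Next I would invoke the partial-fraction theorem for simple poles. Because the denominator factors as $\prod_{n=1}^d (z - \lambda_n)$ with distinct $\lambda_n$, the strictly proper rational function $N(z)/\prod_n(z-\lambda_n)$ admits a unique expansion $\sum_{n=1}^d R_n/(z-\lambda_n)$, and each coefficient is given by the standard residue formula $R_n = \lim_{z\to\lambda_n}(z - \lambda_n)\hat H(z) = N(\lambda_n)/\prod_{m\neq n}(\lambda_n - \lambda_m)$, which is exactly the residue of $\hat H$ at the simple pole $\lambda_n$. To make the connection to the state-space data transparent, and to justify the name residue, I would alternatively diagonalize $\sA = V \Lambda V^{-1}$ with $\Lambda = \diag(\lambda_1,\dots,\lambda_d)$, possible precisely because the eigenvalues are distinct, and compute $\sC(z\sI-\sA)^{-1}\sB = \sC V (z\sI - \Lambda)^{-1} V^{-1}\sB = \sum_n (\sC V)_n (V^{-1}\sB)_n/(z-\lambda_n)$, yielding the closed form $R_n = (\sC v_n)(\tilde v_n \sB)$ in terms of the right and left eigenvectors of $\sA$. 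Both routes produce the same $R_n$.

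Finally I would settle the domain of validity. The $z$-transform series $\hat H(z) = \sum_{t\ge 0} h_t z^{-t}$ converges absolutely for $|z| > \rho(\sA) = \max_n |\lambda_n|$, i.e. on $\bD_{\rho(\sA)}$; every summand $R_n/(z-\lambda_n)$ is holomorphic there, so the equality, first established as an identity of rational functions, holds pointwise on all of $\bD_{\rho(\sA)}$ and extends by analytic continuation to $\bC \setminus \{\lambda_n\}$. I expect the only genuine subtleties to be bookkeeping rather than analysis: verifying that ${\sf poly}(a)$ really is the characteristic polynomial of $\sA$ so that its roots are the system poles (this uses the companion/minimal structure), and using the simple-pole hypothesis to rule out repeated-pole contributions of the form $1/(z-\lambda_n)^k$ with $k\ge 2$, which would otherwise appear in the expansion. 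Once these are in place the factorization is immediate.
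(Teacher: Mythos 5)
Your proposal is correct, and your primary route differs from the paper's. The paper proves this proposition entirely in state space: it diagonalizes $\sA$ via the eigenvector basis $\sV$ (possible since the eigenvalues are semi-simple), changes variables $s = \sV x$ to obtain decoupled scalar recurrences $s_{t+1}^n = \lambda_n s_t^n + \tilde b_n u_t$, takes the $z$-transform of each mode to get $S_n(z) = \tilde b_n U(z)/(z-\lambda_n)$, and sums the output equation to read off $\hat H(z) = \sum_n \tilde b_n \tilde c_n/(z-\lambda_n)$ with $R_n = \tilde b_n \tilde c_n$ — this is exactly the alternative route you sketch at the end of your second paragraph, only phrased as decoupled recurrences rather than as a resolvent identity $\sC V(z\sI - \Lambda)^{-1}V^{-1}\sB$. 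Your primary argument instead stays at the level of rational functions: clear denominators, isolate the strictly proper part, and apply the classical partial-fraction theorem with the residue evaluation $R_n = N(\lambda_n)/\prod_{m\neq n}(\lambda_n - \lambda_m)$. Each approach buys something: the paper's diagonalization proof simultaneously constructs the modal state-space realization $(\diag(\lambda), \1, (R_n))$ that it needs downstream (the $\cO(d)$ recurrent step of Proposition 3.3), so the residues arrive already attached to a realization; your partial-fraction route is more elementary, requires only the stated hypothesis on ${\sf poly}(a)$ rather than explicit diagonalizability bookkeeping, yields a computable closed form for each $R_n$ from the polynomial data alone, and is more careful than the paper about separating the passthrough $h_0$ (the paper silently drops it by writing $y_t = \sC x_t$) and about the region of validity $\bD_{\rho(\sA)}$, which the paper's proof never addresses. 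Your flagged subtlety — that ${\sf poly}(a) = \det(z\sI - \sA)$ — is consistent with the paper's own transfer-function derivation in its appendix, so nothing is missing.
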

\end{tcolorbox}
Computing the inverse transform of the expanded transfer function via, e.g., the \textit{Cauchy residue theorem} \cite{moreira2011fair}, shows that the resulting impulse response $\hat h$ corresponds to a truncated basis of exponentially decaying complex sinusoids
\begin{equation}\label{eq:modal_impulse}
    \hat h_t = \sum_{n=1}^d R_n \lambda_n^{t-1}, \quad R_n,\lambda_n\in\bC, t>0.
\end{equation}
%
In practice, this corresponds to the impulse response of state-space model with diagonal matrix $\sA = \diag(\lambda_1,\dots,\lambda_d)$ and such that $\sB_i\sC_i = R_i$ for all $i=1,\dots, d$. The distillation problem can be then defined in terms of the $L $-point nonlinear least squares interpolation error (squared $\ell_2$) between $h_1,\dots, h_L $ and $\eqref{eq:modal_impulse}$ evaluated for $t{=}1,\dots,L $:  $\min_{\{\lambda_n,R_n\}}\|\hat h - h\|_2^2$. Note that in case of the target filter $h$ being real-valued, the objective can be replaced by $\|\mathfrak{R}[\hat h] - h\|_2^2$. 

%
\begin{wrapfigure}[14]{l}{0.475\linewidth}
    \centering
    \vspace{-3mm}
    \input{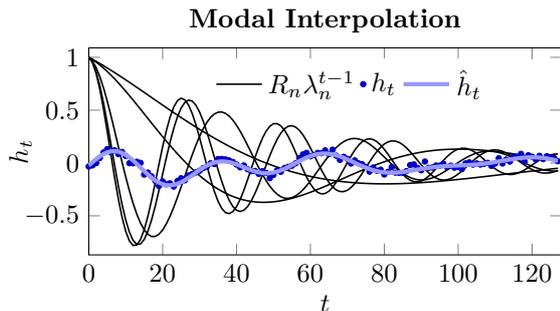}
    \vspace{-5mm}
    \caption{\footnotesize Example of modal interpolation. The approximant is a linear combination of exponentially-decaying complex exponential basis functions with learned decay rate.}
    \label{fig:enter-label}
\end{wrapfigure}
Although we find solutions of the distillation (interpolation) problem via modern gradient-based optimization techniques, it is worth mentioning that Prony showed how the nonlinear least square solution can be computed solving two linear problems \cite{prony1795essai}. However, similar to Pad\'e's method for rational approximation \cite{pade1892representation},
these techniques can be numerically unstable. We opt for a parametrization similar to \cite{gupta2022diagonal, orvieto2023resurrecting} where each eigenvalue is parameterized in polar form $\lambda_n {\coloneqq} A_n e^{i\theta_n}$ and the residues in cartesian form\footnote{We report additional details on the nuances of the parametrization in Appendix~\ref{sec:modal_param}.}. Note that, with this parametrization we have $\mathfrak R[\hat h_t] = \sum_n A_n^{t-1}[\mathfrak{R}(R_n)\cos(\theta_n(t-1)) - \mathfrak I(R_n)\sin(\theta_n (t-1))]$. We can also solve the distillation problem in the $\mathcal{H}_2$ sense by evaluating $\hat h_t$ and $h_t$ at $t=0,\dots,L-1$ and taking their respective (discrete) Fourier transform before computing the objective. Efficient evaluation of \eqref{eq:modal_impulse} is crucial for distillation. In particular we show the following:

%
\begin{tcolorbox}[enhanced, frame hidden, sharp corners, boxsep=0pt, before skip=0pt, after skip=0pt]
\begin{lemma}\label{lemma:ssmd_impulse} 
    Evaluation of $(\hat h_t)_{t=0}^{L-1}$ \eqref{eq:modal_impulse} can be done in $\cO(dL)$ time from its modal form and in $\tilde\cO(L)$ time from its proper rational form.
\end{lemma}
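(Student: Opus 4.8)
The plan is to handle the two representations separately, since the stated complexities reflect the different algebraic structure each one exposes: the modal (partial-fraction) form in \eqref{eq:modal_impulse} naturally gives an $\cO(dL)$ evaluation, whereas the proper rational form in \eqref{eq:dtssm_tf} admits a faster $\tilde\cO(L)$ evaluation via FFT-based power-series division.

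For the \emph{modal form} I would argue directly from \eqref{eq:modal_impulse}. The key observation is that the powers $\lambda_n^{t-1}$ need not be recomputed from scratch at each $t$: for every mode $n$ I maintain a running scalar $p_n$, initialized to $\lambda_n^0 = 1$ and updated by a single multiplication $p_n \leftarrow \lambda_n p_n$ when advancing $t \to t+1$. Setting $\hat h_0 = h_0$, at each subsequent step the output $\hat h_t = \sum_{n=1}^d R_n p_n$ is a single length-$d$ inner product. Sweeping $t = 1, \dots, L-1$ therefore costs $\cO(d)$ arithmetic operations per step and $\cO(dL)$ in total. Equivalently, stacking the running powers into a tall Vandermonde matrix $V \in \bC^{L \times d}$ with $V_{t,n} = \lambda_n^{t-1}$, the impulse response is the matrix–vector product $\hat h = V R$ with $R = (R_1, \dots, R_d)^\top$; both forming $V$ by iterated multiplication and applying it cost $\cO(dL)$. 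When $h$ is real I take real parts, which leaves the asymptotics unchanged.

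For the \emph{proper rational form} I would exploit that evaluating $(\hat h_t)$ is exactly a power-series coefficient extraction. Writing $w = z^{-1}$, equation \eqref{eq:dtssm_tf} reads $\sum_{t \ge 0} \hat h_t w^t = h_0 + B(w)/A(w)$ with $A(w) = 1 + \sum_{j=1}^d a_j w^j$ (so $A(0) = 1$) and $B(w) = \sum_{j=1}^d b_j w^j$, so the first $L$ coefficients are obtained by dividing the two polynomials as power series modulo $w^L$. This is precisely the content of Lemma~\ref{lemma:tf_to_irf}, which I would invoke directly; for completeness the bound follows by computing the reciprocal $A(w)^{-1} \bmod w^L$ through Newton's iteration $g_{k+1} = g_k(2 - A g_k) \bmod w^{2^{k+1}}$, which doubles the valid precision at each step, and then forming $h_0 + B \cdot A^{-1} \bmod w^L$. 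Each multiplication is carried out by FFT in $\cO(L \log_2 L)$ time, and because the multiplication sizes double, the total cost is a geometric series dominated by its final term, yielding $\tilde\cO(L)$ overall, independently of $d$.

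The modal bound is routine; the only place requiring care is the rational-form bound, where the $\tilde\cO(L)$ rate — crucially \emph{free} of any factor of $d$ — hinges on replacing the naive order-$d$ linear recurrence (which would cost $\cO(dL)$) by FFT-based power-series division. The main obstacle is thus the complexity bookkeeping for the Newton iteration: checking that the geometrically growing multiplication sizes sum to $\cO(L \log_2 L)$ rather than $\cO(L \log_2^2 L)$, and confirming that the normalization $A(0) = 1$ guaranteed by \eqref{eq:dtssm_tf} makes the reciprocal power series well-defined so that the iteration can be seeded at $g_0 = 1$.
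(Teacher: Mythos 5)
Your proof is correct, and its overall structure mirrors the paper's: the modal bound is argued directly from \eqref{eq:modal_impulse} (the paper simply notes that each $\hat h_t$ costs $\cO(d)$; your explicit maintenance of running powers $p_n \leftarrow \lambda_n p_n$ is the right way to make that assertion rigorous, since recomputing $\lambda_n^{t-1}$ from scratch at each $t$ would not give $\cO(d)$ per step), and the rational-form bound is delegated to Lemma~\ref{lemma:tf_to_irf}, exactly as the paper does. Where you genuinely diverge is in the self-contained justification of that lemma. The paper's proof evaluates the numerator and denominator polynomials on the $L$ roots of unity -- zero-padding the coefficient vectors so the relevant Vandermonde matrix becomes the DFT matrix -- divides elementwise as ${\sf FFT}_L[b]/{\sf FFT}_L[a]$, and recovers $\hat h$ by a single inverse $\sf FFT$. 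Your Newton-iteration power-series division $g_{k+1} = g_k(2 - A g_k) \bmod w^{2^{k+1}}$ is a different algorithm, and the two do not compute exactly the same object: sampling $B/A$ on the roots of unity and inverse-transforming yields the coefficients of $B(w)/A(w) \bmod (w^L - 1)$, i.e.\ the time-aliased sequence $\sum_{k\ge 0}\hat h_{t+kL}$ (this is precisely the truncation correction the paper discusses in Appendix~\ref{sec:truncated_tf}, exponentially small for stable filters but not zero), whereas your method produces the exact first $L$ Taylor coefficients regardless of stability. So your route buys exactness at the cost of a more intricate algorithm, while the paper's buys simplicity and directly outputs the DFT of the filter, which is what the downstream $\sf FFT$-convolution consumes anyway. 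Your complexity bookkeeping (doubling multiplication sizes, sum dominated by the final $\cO(L\log_2 L)$ term) and the observation that the normalization $A(0)=1$ makes the reciprocal well-defined and seeds the iteration at $g_0 = 1$ are both correct.
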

\end{tcolorbox}
\subsection{Minimal Distillation Orders}\label{hankel} Distilling into lower-dimensional systems is always desirable as they require fewer parameters to be optimized and they yield recurrences that are (linearly) more efficient in terms of time and memory complexity in post-distillation auto-regressive inference workloads. The dimension of \textit{the smallest possible state-space model with impulse response exactly} $\{h_t\}_{t\in\bN}$ is the so-called \textit{McMillan degree} {\cite{hokanson2020data}}:
\begin{equation}
    \begin{aligned}
        d^\star = \arg\min_{d} d~:~\exists \sA\in\bC^{d\times d}, \sB\in\bC^{d\times 1},\sC\in\bC^{1\times d}~\text{with}~h_t=\sC\sA^{t-1}\sB,~\forall t>0
    \end{aligned}
\end{equation}
\begin{tcolorbox}[enhanced, frame hidden, sharp corners, boxsep=0pt, before skip=0pt, after skip=0pt]
\begin{theorem}[Ho-Kalman {\cite[Theorem 2, Corollary]{ho1966effective}}]\label{th:ho_kalman} Let $\sS$ be the (infinite) Hankel matrix constructed with $h$, i.e. $\sS\coloneqq (h_{i+j})_{i,j=1}^{\infty}$. Then, $ d^\star = \rank(\sS)$.
\end{theorem}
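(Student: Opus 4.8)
The plan is to prove the two inequalities $\rank(\sS) \le d^\star$ and $d^\star \le \rank(\sS)$ separately, both of which flow from the fact that the Hankel matrix of the Markov parameters (indexed so that its $(i,j)$ entry is $h_{i+j-1}$, with top-left entry $h_1$) factors through the observability and reachability matrices of any realization.

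First I would establish $\rank(\sS) \le d^\star$ (necessity). Given \emph{any} realization $(\sA,\sB,\sC)$ of dimension $d$ with $h_t = \sC\sA^{t-1}\sB$, define the infinite observability matrix $\cO$ whose $i$-th row is $\sC\sA^{i-1}$ and the reachability matrix $\cR$ whose $j$-th column is $\sA^{j-1}\sB$. Then $(\cO\cR)_{i,j} = \sC\sA^{i-1}\sA^{j-1}\sB = \sC\sA^{i+j-2}\sB = h_{i+j-1}$, so $\sS = \cO\cR$. Since both factors have inner dimension $d$, $\rank(\sS) \le \min\{\rank\cO, \rank\cR\} \le d$. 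Specializing to a minimal realization (one attaining $d^\star$) gives $\rank(\sS) \le d^\star$.

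The harder, constructive direction is $d^\star \le \rank(\sS)$ (sufficiency): I must exhibit an explicit realization of dimension exactly $r \coloneqq \rank(\sS)$. The structural key is the shift-invariance of the Hankel column space. Writing $\sigma$ for the one-step up-shift on sequences, each column of $\sS$ is the shift of its predecessor, so the column space $V = \mathrm{col}(\sS)$, which has $\dim V = r$, satisfies $\sigma V \subseteq V$. Concretely I would choose $\mu,\nu$ large enough that the finite block $\sS_{\mu,\nu} = (h_{i+j-1})_{i\le\mu,\,j\le\nu}$ already has rank $r$ (finite Hankel rank guarantees this saturation), form a rank factorization $\sS_{\mu,\nu} = \Gamma\Omega$ with $\Gamma\in\bC^{\mu\times r}$ of full column rank and $\Omega\in\bC^{r\times\nu}$ of full row rank, and set $\sA \coloneqq \Gamma^{+}(\sigma\sS_{\mu,\nu})\Omega^{+}$ (pseudoinverses), $\sC$ the first row of $\Gamma$, and $\sB$ the first column of $\Omega$. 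Using $\Gamma^{+}\Gamma = \sI_r$ and $\Omega\Omega^{+} = \sI_r$, one shows $\sigma\sS_{\mu,\nu} = \Gamma\sA\Omega$ and then $\sigma^{k}\sS_{\mu,\nu} = \Gamma\sA^{k}\Omega$ by induction; reading off the top-left entry yields $h_{k+1} = \sC\sA^{k}\sB$, i.e. $h_t = \sC\sA^{t-1}\sB$ for all $t>0$. This produces a realization of dimension $r$ and hence $d^\star \le r$.

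I expect the main obstacle to be the sufficiency step, and specifically the claim that a single finite shifted block governs the entire infinite sequence: one must argue that once the Hankel rank saturates at $r$, the shift relation $\sigma\sS = \Gamma\sA\Omega$ extends consistently across all columns and rows, equivalently that $\{h_t\}$ obeys a length-$r$ linear recurrence (Kronecker's theorem on finite-rank Hankel operators / rationality of the transfer function). I would make this rigorous by deriving $\sigma V \subseteq V$ directly from the Hankel structure, verifying that the pseudoinverse definition of $\sA$ is genuinely the matrix representing $\sigma|_V$ in the basis of columns of $\Gamma$, and confirming that the induction on $\sigma^{k}\sS$ therefore reproduces every Markov parameter; the remaining verifications are routine.
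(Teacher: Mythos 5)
The paper never proves this statement---it is imported directly from Ho and Kalman's 1966 paper (cited as ``Theorem 2, Corollary'')---so your proposal has to be judged on its own merits rather than against an in-paper argument. On those merits it is essentially correct, and it is the standard two-sided argument. The necessity half is exactly right: for any $d$-dimensional realization, the Hankel matrix factors as the product of the infinite observability matrix (rows $\sC\sA^{i-1}$) and the infinite reachability matrix (columns $\sA^{j-1}\sB$), giving $\rank(\sS)\le d$ and hence $\rank(\sS)\le d^\star$. One point deserves emphasis: this factorization, and the theorem itself, require your indexing $(h_{i+j-1})_{i,j\ge 1}$ (top-left entry $h_1$) rather than the paper's literal $(h_{i+j})_{i,j\ge 1}$. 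With the paper's convention the filter $h_1=1$, $h_t=0$ for $t\ge 2$ produces the zero matrix, of rank $0$, while $d^\star=1$; your silent re-indexing is the correct reading of the statement.

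For sufficiency, your construction is the Ho--Kalman algorithm, and the step you flag as the main obstacle is indeed the only delicate one: $\sigma$ is not an operation internal to the finite block $\sS_{\mu,\nu}$, so the induction $\sigma^k\sS_{\mu,\nu}=\Gamma\sA^k\Omega$ must repeatedly invoke rank saturation of the enlarged blocks $\sS_{\mu,\nu+k}$ and $\sS_{\mu+k,\nu}$ (which does hold, as each is sandwiched in rank between $\sS_{\mu,\nu}$ and $\sS$), and the bookkeeping with both row- and column-extensions is where naive write-ups go wrong. The cleanest rigorous version of your ``$\sA$ represents $\sigma|_V$'' remark avoids finite blocks and induction altogether: rank-factor the infinite matrix $\sS=PQ$ with $P$ of full column rank $r=\rank(\sS)$ and $Q$ of full row rank $r$. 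The Hankel structure says that deleting the first row of $\sS$ yields the same matrix as deleting the first column; writing this as $P^{\uparrow}Q=PQ^{\leftarrow}$ (where $P^{\uparrow}$ drops the first row of $P$ and $Q^{\leftarrow}$ the first column of $Q$) and cancelling the full-rank factors gives $P^{\uparrow}=P\sA$ and $Q^{\leftarrow}=\sA Q$ with $\sA\coloneqq P^{+}P^{\uparrow}$. Hence row $i$ of $P$ equals $p_1\sA^{i-1}$ and column $j$ of $Q$ equals $\sA^{j-1}q_1$, so $h_{i+j-1}=p_1\sA^{i+j-2}q_1$; that is, $(\sA,q_1,p_1)$ is a realization of dimension $r$, proving $d^\star\le\rank(\sS)$. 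This is exactly the resolution you point to via Kronecker's theorem, so your self-diagnosis is accurate; with that substitution the proof is complete.
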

\end{tcolorbox}
A lower bound for $d^\star$ can be estimated from a truncated filter of length $L$ by constructing the $L\times L $ principal sub-matrix $\sS_L$ and using the fact that $\rank(\sS)\geq \rank(\sS_L)$. Inspecting how fast the Hankel singular values $(\sigma_n)_{n=1}^L$ decay in pre-trained convolution models can be predictive of the approximation quality at a fixed dimension. As a rule of thumb, $d$ needs to be sufficiently large for $\sigma_{d+1}$ to be sufficiently small\footnote{Formally, this is related to low-rank approximation characteristics of the Hankel operator; rigorous bounds can be constructed by application of the Eckart–Young–Mirsky theorem \cite{eckart1936approximation}.}.
Specifically, we can prove that the \textit{last} singular value $\sigma_d$ determines the upper bound of distillation quality with a SSM of dimension $d$, in terms of the Hankel norm \cite{antoulas2005approximation}. This is a direct consequence of Adamjan-Arov-Krein theorem \cite{adamyan1971analytic} and can be informally stated as follows.
\begin{tcolorbox}[enhanced, frame hidden, sharp corners, boxsep=0pt, before skip=0pt, after skip=0pt]
\begin{theorem}[Informal]\label{thm:approx_err}
    Let $h$ be a length-$L$ filter, $\hat h$ a distilled filter of order $d<L$ and let $\sS_L, \hat\sS_L$ be the respective Hankel matrices. Then $\inf_{\hat\sS_{L}}\|\sS_L - \hat{\sS}_{L}\|_{2}=
    \sigma_d$.
\end{theorem}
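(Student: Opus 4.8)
The plan is to read the claim as a \emph{structured} (Hankel-constrained) low-rank approximation problem and to prove it through two matching inequalities: an elementary lower bound from the min--max characterization of singular values, and a nontrivial upper bound that is exactly the content of the Adamjan--Arov--Krein (AAK) theorem~\cite{adamyan1971analytic} cited just before the statement. First I would reduce to a clean form. Because the map $h \mapsto \sS_L$ is linear, the residual $\sS_L - \hat\sS_L$ is itself the Hankel matrix built from the filter difference $h - \hat h$, so $\|\sS_L - \hat\sS_L\|_2$ is the spectral (Hankel) norm of $h-\hat h$. Next, since $\hat h$ is the impulse response of a distilled state-space model, $\hat h_t = \sC\sA^{t-1}\sB$ factors through the $d$-dimensional state: writing $\hat\sS_L = \mathcal{O}\,\mathcal{C}$ as the product of an observability factor $\mathcal{O}$ (rows $\sC\sA^{i}$) and a controllability factor $\mathcal{C}$ (columns $\sA^{j-1}\sB$) and invoking Ho--Kalman (Theorem~\ref{th:ho_kalman}) bounds the Hankel rank by the McMillan degree. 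The admissible $\hat\sS_L$ are thus Hankel matrices of rank controlled by the distilled order (the feasible class relevant to the $d$-th singular value being rank $\le d-1$), and the quantity to compute is the best spectral-norm approximation of $\sS_L$ by a rank-deficient \emph{Hankel} matrix.

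The lower bound is immediate and does not use the Hankel structure. By the Courant--Fischer min--max principle -- equivalently Eckart--Young--Mirsky~\cite{eckart1936approximation} -- every matrix of rank below $d$ lies at spectral-norm distance at least $\sigma_d$ from $\sS_L$; restricting further to Hankel matrices only shrinks the feasible set, so $\inf_{\hat\sS_L}\|\sS_L - \hat\sS_L\|_2 \geq \sigma_d$.

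The upper bound is the substantive direction, and I expect it to be the main obstacle. The truncated-SVD minimizer that attains the Eckart--Young value is generically \emph{not} Hankel, so I cannot simply exhibit it; the entire force of AAK is that the bounded-rank optimum is nevertheless attained inside the Hankel class. Concretely, I would take the Schmidt pair associated with $\sigma_d$ -- the left and right singular vectors of $\sS_L$ -- interpret them as (co)analytic symbols, and form the classical AAK approximant determined by $\sigma_d$ and this Schmidt pair. The two facts to verify are that this approximant is again a Hankel matrix of the prescribed degree, and that the corresponding error operator has \emph{all} its singular values equal to $\sigma_d$, hence spectral norm exactly $\sigma_d$. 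A secondary technicality is the passage from the infinite Hankel operator, where AAK is classical, to the finite $L\times L$ section $\sS_L$: I would either invoke the finite-matrix version of AAK directly, or argue that for filters which have effectively decayed by time $L$ the truncation perturbs $\sigma_d$ and its Schmidt pair negligibly, so equality survives.

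Combining the two inequalities yields $\inf_{\hat\sS_L}\|\sS_L - \hat\sS_L\|_2 = \sigma_d$. The easy half rests only on min--max, while the entire difficulty -- and the reason the statement is not a mere corollary of Eckart--Young -- lies in showing that the Hankel structure can be preserved while still descending to the $d$-th singular value.
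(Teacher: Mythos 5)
Your proposal is correct and follows essentially the same route as the paper's proof: the paper likewise obtains the result as a direct application of the Adamjan--Arov--Krein theorem (the Hankel-constrained infimum coincides with the unconstrained one) combined with the Eckart--Young--Mirsky value $\sigma_d$, which is exactly your lower-bound/upper-bound split. Your write-up is more detailed -- spelling out the Schmidt-pair construction and flagging the finite-section versus infinite-operator technicality that the paper glosses over -- but it is an elaboration of the same argument, not a different one.
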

\end{tcolorbox}
\subsection{Deploying the Recurrence}\label{deploy}
Once all the filters of a pre-trained model have been distilled with the proposed modal interpolation technique described above, the model unlocks a \textit{recurrent mode} which allocates a state $x_t\in\bC^d$ for each filter and enables fast auto-regressive inference. Deployment of distilled model involves two critical steps: the \textit{pre-filling} and the recurrent update rule itself.
\paragraph{Fast pre-filling} 
During auto-regressive generation, when a length-$T$ prompt is fed to the model, we need to compute the state $x_{T}$ to start generating new tokens. Using the recurrence, the time complexity of initializing $x_T$ would be $\cO(dT)$ with a $\cO(d)$ memory footprint. One can alternatively distribute the computation on $d$ processors with a \textit{parallel scan} operation \cite{blelloch1990prefix, smith2022simplified} to reach a parallel time complexity $\cO(d\log_2 T)$ while incurring in an increased memory requirement of $\cO(d T)$\footnote{This strategy can also be use to evaluate the filter $\hat h$ alternatively to the standard $\cO(dL)$ method}. A third option is to use a single $\sf FFT$ convolution to obtain $x_T$ in $\tilde\cO(T)$ time and $\cO(T)$ memory.
\begin{tcolorbox}[enhanced, frame hidden, sharp corners, boxsep=0pt, before skip=0pt, after skip=0pt]
\begin{proposition}\label{prop:fast_init}
    $x_T = (\nu_{T}, \dots, \nu_{T-d})$ where $\nu_t = (g * u)_t$ and $g$ is the filter whose transfer function is $1 / {\sf den}(\hat H)(z)$ and can be evaluated in $\tilde\cO(T)$.
\end{proposition}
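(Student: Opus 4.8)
The plan is to work directly in the companion canonical realization underlying the efficient recurrence of Lemma~\ref{lemma:efficient_recurrence}, and to exploit the delay-line structure of the companion matrix. Concretely, I would take $\sA$ to be the companion matrix built from the denominator coefficients $(a_i)$ and $\sB = e_1$ (the first canonical basis vector), so that the update $x_{t+1} = \sA x_t + \sB u_t$ reads $(x_{t+1})_1 = -\sum_{i=1}^d a_i (x_t)_i + u_t$ while $(x_{t+1})_k = (x_t)_{k-1}$ for $k \geq 2$. This second family of equations is the key observation: every coordinate of the state is merely a delayed copy of its first coordinate.

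First I would define the scalar signal $\nu_t := (x_t)_1$ and unwind the delay equations to get $(x_t)_k = \nu_{t-k+1}$, hence $x_t = (\nu_t, \nu_{t-1}, \dots, \nu_{t-d+1})^\top$; evaluating at $t = T$ then yields the claimed form of the state, up to the indexing convention fixed by the chosen realization. It remains to identify $\nu$. Substituting the delay relations into the first-coordinate update produces the scalar autoregression $\nu_t + a_1 \nu_{t-1} + \cdots + a_d \nu_{t-d} = u_{t-1}$, valid with zero initial conditions since $x_0 = 0$. Taking the $z$-transform and using that a unit delay is multiplication by $z^{-1}$, I would factor out ${\sf den}(\hat H)(z) = 1 + a_1 z^{-1} + \cdots + a_d z^{-d}$ to obtain $N(z) = U(z)/{\sf den}(\hat H)(z)$ (up to the $z^{-1}$ bookkeeping that fixes the alignment). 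This is exactly the statement that $\nu = g * u$, where $g$ is the filter with transfer function $1/{\sf den}(\hat H)(z)$.

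For the complexity claim I would argue as follows. Only the $d$ entries $\nu_T, \dots, \nu_{T-d+1}$ are needed, and because the convolution is causal, computing them requires only the first $T+1$ coefficients $g_0, \dots, g_T$ of the impulse response of $g$. Since $1/{\sf den}(\hat H)$ is a proper rational function with constant numerator, Lemma~\ref{lemma:tf_to_irf} produces its truncated impulse response in $\tilde\cO(T)$ time (a power-series reciprocal of the denominator polynomial via FFT). A single FFT convolution of $g_{0:T}$ with $u_{0:T}$ then returns all of $\nu_{0:T}$ in $\tilde\cO(T)$ time and $\cO(T)$ memory, from which the required entries are read off; this replaces the $\cO(dT)$ cost of running the recurrence for $T$ steps.

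The main obstacle I anticipate is bookkeeping rather than conceptual: making the delay-line decomposition of the companion state rigorous and aligning the shift ($z^{-1}$ factors and initial-condition handling) between the scalar autoregression and the transfer function $1/{\sf den}(\hat H)$, so that the indices in $x_T$ match the convention of the distilled companion realization. A secondary point requiring care is justifying that the infinite (IIR) filter $g$ can be safely truncated to length $T+1$ without error in the causal convolution, and that its truncated impulse response genuinely admits the $\tilde\cO(T)$ evaluation guaranteed by Lemma~\ref{lemma:tf_to_irf} rather than the naive $\cO(dT)$ recurrence.
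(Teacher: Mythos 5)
Your proof is correct and takes essentially the same route as the paper: both identify the companion-form state as a delay line of the scalar signal obtained by passing $u$ through the filter with transfer function $1/{\sf den}(\hat H)(z)$, and both obtain that signal with a single FFT convolution in $\tilde\cO(T)$ time. The only differences are presentational — you unwind the shift structure of the companion recurrence in the time domain, whereas the paper works in the $z$-domain by equating $Y(z)=\sC X(z)$ with $\hat H(z)U(z)$ and reading off $X(z)=(z^{-1},\dots,z^{-d})^\top U(z)/p(z)$ (Chen's pseudo-state construction) — and your explicit justification that the IIR filter $g$ may be truncated to length $T+1$ without error, since the causal convolution with a length-$T$ input never touches later coefficients, is a point the paper leaves implicit.
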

\end{tcolorbox}
Note that, the fast pre-filling algorithm established by this result requires evaluating the denominator polynomial of $\hat H$ from its roots before deployment. This is equivalent to converting the transfer function from its factorized representation to its rational form \eqref{eq:dtssm_tf}.
\paragraph{Recurrent step} The update rule is diagonal, thus efficiently evaluated in $\cO(d)$ time and memory:
\begin{tcolorbox}[enhanced, frame hidden, sharp corners, boxsep=0pt, before skip=0pt, after skip=0pt]
\begin{proposition}\label{prop:modal_realization}
    The filter \eqref{eq:modal_impulse} has a state space matrices $\sA = {\sf diag}(\lambda_1,\dots, \lambda_d)\in\bC^{d\times d},~\sB = (1,\dots,1)^\top\in\bC^{d\times 1},~\sC = (R_1,\dots, R_d)\in\bC^{1\times d}$
    whose step can be evaluated in $\cO(d)$ time and memory.
\end{proposition}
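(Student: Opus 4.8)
The plan is to verify directly that the proposed triple $(\sA,\sB,\sC)$ realizes the impulse response \eqref{eq:modal_impulse}, and then to read off the per-step cost from the diagonal structure of $\sA$. There are two claims to establish: first, that $\sC\sA^{t-1}\sB = \sum_{n=1}^d R_n\lambda_n^{t-1}$ for all $t>0$, matching \eqref{eq:modal_impulse}; second, that a single recurrent step of \eqref{eq:dtssm} with these matrices costs $\cO(d)$ in both time and memory.

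For the realization claim I would exploit that $\sA = \diag(\lambda_1,\dots,\lambda_d)$, so that $\sA^{t-1} = \diag(\lambda_1^{t-1},\dots,\lambda_d^{t-1})$ for every $t>0$. Applying this to $\sB = (1,\dots,1)^\top$ yields $\sA^{t-1}\sB = (\lambda_1^{t-1},\dots,\lambda_d^{t-1})^\top$, and taking the inner product with $\sC = (R_1,\dots,R_d)$ gives $\sC\sA^{t-1}\sB = \sum_{n=1}^d R_n\lambda_n^{t-1}$, which is exactly \eqref{eq:modal_impulse}. This confirms the triple is a valid realization for $t>0$; the passthrough term $h_0$ is simply inherited from the original filter, as already noted in the text.

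For the complexity claim I would write the recurrence $x_{t+1} = \sA x_t + \sB u_t$, $y_t = \sC x_t + h_0 u_t$ componentwise. Because $\sA$ is diagonal, the product $\sA x_t$ is an elementwise scaling $(x_t)_n \mapsto \lambda_n (x_t)_n$, costing $d$ multiplications rather than the $d^2$ of a dense matrix–vector product; adding $\sB u_t$ broadcasts the scalar $u_t$ across the $d$ coordinates, another $\cO(d)$ operations. The output reduction $\sC x_t = \sum_{n=1}^d R_n (x_t)_n$ is a single length-$d$ dot product, again $\cO(d)$, with the $h_0 u_t$ correction contributing $\cO(1)$. Storing $x_t\in\bC^d$ together with the parameter vectors $(\lambda_n)$ and $(R_n)$ requires $\cO(d)$ memory, and no intermediate object larger than a length-$d$ vector is ever formed.

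I do not anticipate a genuine obstacle: the result is essentially the observation that diagonalization decouples the system into $d$ independent scalar recurrences $(x_t)_n \mapsto \lambda_n (x_t)_n + u_t$, collapsing the matrix operations to elementwise ones. The only points warranting mild care are that complex arithmetic is still $\cO(1)$ per scalar operation, and that returning a real output for a real target filter requires taking real parts at the output stage as in the $\mathfrak{R}[\hat h_t]$ expression above — neither affects the asymptotics.
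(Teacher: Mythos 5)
Your proposal is correct and follows essentially the same route as the paper's proof: verify the realization by direct computation using $\sA^{t-1} = \diag(\lambda_1^{t-1},\dots,\lambda_d^{t-1})$ so that $\sC\sA^{t-1}\sB = \sum_{n=1}^d R_n\lambda_n^{t-1}$, then obtain the $\cO(d)$ per-step cost from the decoupled scalar recurrences $x^n_{t+1} = \lambda_n x^n_t + u_t$ together with the length-$d$ dot product in the output equation. The paper's proof states exactly these two steps, including the default assignment of the passthrough term to $h_0$, so there is nothing to add.
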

\end{tcolorbox}
As we generally want the output $y_t$ to be real-valued, we can simply update the complex state $x_{t+1} = \sA x_t + \sB u_t$ and then take the real part of the output, $y_t = \mathfrak R[\sC x_t] + h_0u_t$.
\section{Multi-head Long Convolutions}\label{mhyena}
We can leverage the Hankel spectrum analysis discussed in Section \ref{hankel} to study the dynamics of the effective dimensionality of each convolution filter during LCSMs pre-training. We find that, at initialization, filters correspond to high-dimensional SSMs, and gradually converge to lower-dimensional representations during training. See Appendix~\ref{sec:hankel_svd} for examples on {$\sf Hyena$} and {$\sf H3$} models. 

This observation leads to the question: \textit{is it advantageous to perform independent long convolutions on each channel, or can we reduce the total number of filters without loss in quality?} To answer this, we adapt the multi-head layer design proposed by {$\sf H3$} \cite{dao2022hungry} to {$\sf Hyena$} \cite{poli2023hyena}: 
\begin{enumerate}[leftmargin=0.25in]
    \item Given the projections $q,k,v\in\R^{L\times D}$, we split them into $M$ chunks of size $N{=}D/M$, $q^m, k^m, v^m \in\R^{L\times N}$.
    \item Each chunk is processed by a modified ${\sf Hyena}$ operator: first, we perform the outer product of $k^m$ and $v^m$ along the spatial dimension,  $z^m {\coloneqq} k^m \otimes v^m\in \R^{L \times N \times N}$, apply a long convolution with filter $h^m$ to all $N {\times} N$ elements independently, then compute $y^m_t {=} (h^m * z^m)_t q^m_t$, $y^m \in \R^{L\times N}$ as shown in Figure \ref{fig:mhyena}.
    \item Finally, we compose $y^1,\dots,y^m$ into a single output $y\in\R^{L\times D}$ via concatenation.
\end{enumerate}
\begin{wrapfigure}[7]{r}{0.39\linewidth}
    \centering
        \begin{tikzpicture}[baseline=2.5ex, thick, scale=0.8, every node/.style={transform shape}]
            \node (v) at (0, 0.0) {$v^{m}$};
            \node (v1) [fill=blue!20] at (-0.4, -0.35) {};
            \node (v2) [fill=blue!20] at (-0.1, -0.35) {};
            \node (v3) [fill=blue!20] at (0.2, -0.35) {};
            
            \node (k) at (0, 1) {$k^{m}$};
            \node (k1) [fill=red!20] at (-0.4, 1-0.35) {};
            \node (k2) [fill=red!20] at (-0.1, 1-0.35) {};
            \node (k3) [fill=red!20] at (0.2, 1-0.35) {};
            
            \node (q) at (0, 2.0) {$q^{m}$};
            \node (q1) [fill=green!20] at (-0.4,2 -0.35) {};
            \node (q2) [fill=green!20] at (-0.1, 2-0.35) {};
            \node (q3) [fill=green!20] at (0.2, 2-0.35) {};

            \node (kv) [] at (1.4,1) {${\tt outer}$};
            \node (kv1) [fill=purple!30] at (1.5-0.4, 1-0.35) {};
            \node (kv2) [fill=purple!30] at (1.5-0.1, 1-0.35) {};
            \node (kv3) [fill=purple!30] at (1.5+0.2, 1-0.35) {};

            \node (kv4) [fill=purple!30] at (1.5-0.4, 1-0.70) {};
            \node (kv5) [fill=purple!30] at (1.5-0.1, 1-0.70) {};
            \node (kv6) [fill=purple!30] at (1.5+0.2, 1-0.70) {};

            \node (kv7) [fill=purple!30] at (1.5-0.4, 1-1.05) {};
            \node (kv8) [fill=purple!30] at (1.5-0.1, 1-1.05) {};
            \node (kv9) [fill=purple!30] at (1.5+0.2, 1-1.05) {};

            \node (dd) [] at (0.6, 1.095) {};

            \node (Tkv) [] at (2.8,1) {$\sT^m_h$};

            \node (kv1) [fill=blue!30] at (2.86-0.4, 1-0.35) {};
            \node (kv2) [fill=blue!30] at (2.86-0.1, 1-0.35) {};
            \node (kv3) [fill=blue!30] at (2.86+0.2, 1-0.35) {};

            \node (kv4) [fill=blue!30] at (2.86-0.4, 1-0.70) {};
            \node (kv5) [fill=blue!30] at (2.86-0.1, 1-0.70) {};
            \node (kv6) [fill=blue!30] at (2.86+0.2, 1-0.70) {};

            \node (kv7) [fill=blue!30] at (2.86-0.4, 1-1.05) {};
            \node (kv8) [fill=blue!30] at (2.86-0.1, 1-1.05) {};
            \node (kv9) [fill=blue!30] at (2.86+0.2, 1-1.05) {};

            \node (y) [] at (4.0,1) {${\tt dot}$};

            \node (kv7) [fill=teal!30] at (4.08-0.4, 1-0.35) {};
            \node (kv8) [fill=teal!30] at (4.08-0.1, 1-0.35) {};
            \node (kv9) [fill=teal!30] at (4.08+0.2, 1-0.35) {};
            
            \draw[-] (v.east) -| (dd.south); 
            \draw[->] (k.east) |- (kv.west);
            \draw[->] (q.east) -| (y.north);
        
            \draw[->] (kv.east) -- (Tkv.west);
            \draw[->] (Tkv.east) -- (y.west);
            \draw[->] (y.east) -- ++(0.4, 0);
            \end{tikzpicture}
    \vspace{-4mm}
    \label{fig:mhyena}
    \caption{\footnotesize A single head of a multi-head {$\sf Hyena$}.}
\end{wrapfigure}
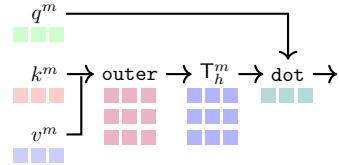

An instance of a {$\sf MultiHyena$} is equipped with $M<D$ distinct long convolution filters, which leads to ($a$) faster distillation, with less filters to approximate, ($b$) lower memory footprint, via a total reduction of the states to cache during generation and ($c$) faster filter generation, by tying the weights of filter parameters. We note that tying weights of key-value projections has also been shown to be an effective technique to reduce memory cost in Transformers \cite{shazeer2019fast, ainslie2023gqa}.

Crucially, the multi-head structure of {$\sf MultiHyena$} enables us to prove favorable scaling in the \textit{associative recall} synthetic task, which was shown in \cite{poli2023hyena} to be predictive of performance at scale. In associative recall, the model is given a sequence of key-value pairs and a query, and is tasked with matching the query to a key in the sequence by returning its associated value. The difficulty of the task grows with the vocabulary size $s$: larger vocabularies necessitate wider models.
\begin{tcolorbox}[enhanced, frame hidden, sharp corners, boxsep=0pt, before skip=0pt, after skip=0pt]
\begin{theorem}\label{thm: hyena-complexity} The ${\sf MultiHyena}$ layer, with $\cO(\log{s})$ heads and model size $\cO(\sqrt{s}\log{s})$ can solve the associative recall problem, where $s$ denotes the vocabulary size.
\end{theorem}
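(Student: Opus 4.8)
The plan is to give an explicit construction. I would exhibit concrete choices for the short filters that produce the $q,k,v$ projections, for the long filters $h^m$, for the number of heads $M=\cO(\log s)$ and the per-head width $N=\cO(\sqrt s)$, and then verify that at the query position the concatenated output decodes to the target value under a fixed linear read-out (standard for expressivity statements of this kind). Throughout I would reason over the idealized per-head computation $y^m_t=(h^m*(k^m\otimes v^m))_t\,q^m_t$ from \secref{mhyena}, treating the prompt as a sequence of key--value pairs terminated by a query.

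First I would fix the symbol encoding. I factor the vocabulary through a bijection $\{1,\dots,s\}\to\{1,\dots,\lceil\sqrt s\rceil\}^2$, $c\mapsto(\alpha(c),\beta(c))$, and embed each digit as a one-hot vector in $\R^{\lceil\sqrt s\rceil}$. The key structural observation is that inside one head the outer product $k^m_j\otimes v^m_j$ lives in $\R^{N\times N}$, so with $N=\lceil\sqrt s\rceil$ a single head already offers $s$ internal ``slots'' at only $\cO(\sqrt s)$ width -- this is the source of the $\sqrt s$ scaling. Choosing the short filters so that each key token is co-located with the value token that follows it, and taking $h^m$ to be the causal all-ones (cumulative-sum) filter -- itself a one-dimensional system of the form \eqref{eq:dtssm} -- the accumulated memory $M^m_T=\sum_{j\le T}h^m_{T-j}\,k^m_j (v^m_j)^\top$ at the query position aggregates every pair in the prompt, and retrieval is the single matrix--vector product $y^m_T=M^m_T q^m_T$.

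Next I would analyze what one such retrieval can and cannot do. Since $y^m_T=\sum_j k^m_j\,\big((v^m_j)^\top q^m_T\big)$, the query contracts the memory along a single $\sqrt s$-dimensional factor; consequently a lone head realizes at best a rank-$\sqrt s$ associative memory and can match the query against only one digit, returning a \emph{superposition} of the values whose keys agree on that digit rather than the unique correct value. The role of the $M=\cO(\log s)$ heads is therefore twofold: to carry the $\lceil\log_2 s\rceil$ bits of the retrieved value as independent per-head payloads, and to supply enough independent partial matches (a code of length $\cO(\log s)$ over the $\sqrt s$-ary digit alphabet, with sufficient separation) that the concatenated output isolates the single symbol agreeing across all heads. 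Counting $N\cdot M=\cO(\sqrt s)\cdot\cO(\log s)=\cO(\sqrt s\log s)$ then yields the stated model size.

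The hard part will be establishing an \textbf{exact} equality test over $s$ symbols from this rank-limited, single-factor primitive: because each head can only contract one index of the outer-product memory, matching the full symbol is intrinsically a \emph{conjunction} of digit-matches, whereas concatenation across heads combines their outputs additively. I would resolve this by designing the per-head payloads and the linear read-out so that a key contributes to the decoded output only when its match fires in the prescribed set of heads, and the technical burden is to prove that this aggregation is \emph{crisp} -- that spurious partial matches cancel or fall below threshold, that the cumulative-sum filter introduces no cross-talk between distinct pairs, and that the surviving superposition collapses to exactly the value of the unique key equal to the query whenever that key occurs once. Verifying this collapse, together with the per-bit decoding across the $\cO(\log s)$ heads, completes the argument; the parameter count $\cO(\sqrt s\log s)$ then follows immediately.
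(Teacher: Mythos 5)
There is a genuine gap, and it sits exactly where you placed the ``hard part'': the crisp-aggregation step you defer is not merely technically burdensome, it is impossible in the form you propose. With exact per-digit matching, the query key $c^\ast$ and every spurious key that collides with it on head $m$'s digit receive the \emph{same} coefficient ($1$) inside head $m$; since your payloads depend only on the retrieved \emph{value}, each head's output is a symmetric function of the multiset of values carried by the keys in its digit-class. This admits a swap attack: for each head $m$ pick a key $c_m\neq c^\ast$ that collides with $c^\ast$ on digit $\gamma_m$ but on no other head's digit (such keys exist in abundance for any code into $[\sqrt{s}]^M$, since each $\gamma_m$ must have fibers of size about $\sqrt{s}$ while pairwise cross-collisions are rare). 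Now compare prompt $A$, where $f(c^\ast)=v$ and $f(c_m)=v'$ for all $m$, with prompt $B$, where $f(c^\ast)=v'$ and $f(c_m)=v$, all other pairs identical. In every head the contributing value-multiset is the same, so \emph{every} head output is identical for $A$ and $B$, yet the correct answers differ. Hence no fixed readout --- linear, thresholded, or otherwise --- can decode the answer; ``spurious matches cancel or fall below threshold'' cannot be arranged. Even the natural patch of tagging payloads with per-key random signs fails quantitatively: each head then contains $\Theta(\sqrt{s})$ full-strength spurious terms, giving per-coordinate noise $\Theta(\sqrt[4]{s})$ against an $O(1)$ signal, and averaging over only $O(\log s)$ heads cannot close that gap. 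Implementing the conjunction of digit-matches genuinely would require contracting the memory with the query a second time (a bilinear, query-dependent readout), which is a second layer, not the single $\sf MultiHyena$ layer plus fixed readout you allow yourself.

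The paper's proof avoids partial matches altogether, which is the missing idea. Each head performs an \emph{approximate full-symbol} match: keys and queries are embedded by a Johnson--Lindenstrauss map $\sR\in\R^{O(\sqrt{s}\log s)\times s}$ with distortion $\epsilon=\tfrac{1}{3\sqrt[4]{s}}$, so after the cumulative-sum filter and the query contraction the last-row output is $x\,\sS^{(m)}$, where $x$ has a single entry near $1$ (the true match) and all other entries bounded by $\epsilon$ --- i.e., spurious keys are suppressed \emph{in the matching coefficients}, before any aggregation. The values are encoded by hashing-based sketch matrices $\sS^{(m)}$ (buckets of size $\sqrt{s}$, random signs, binary index encoding), so per head the noise in each output coordinate is an inner product of a vector of norm at most $\epsilon\sqrt[4]{s}=\tfrac13$ with random signs, bounded by $\tfrac13$ with constant probability (Markov, Proposition~\ref{prop: heavy-hitter}); the $\Theta(\log s)$ heads then supply \emph{fresh sketching randomness}, and a Chernoff bound over heads pushes the averaged noise below the rounding threshold $1/2$ with probability $1-1/\mathrm{poly}(s)$. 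Note the contrast with your plan: in the paper the heads exist for concentration of independent measurements, not to realize a code whose conjunction identifies the key, and the width $O(\sqrt{s}\log s)$ per head is dictated by the JL dimension needed to achieve incoherence $s^{-1/4}$ --- precisely the quantity whose absence breaks your construction.
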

\end{tcolorbox}
In Appendix \ref{app:associative}, we empirically verify improved scaling in vocabulary size with multiple heads. 

\section{Experiments}
\begin{itemize}[leftmargin=0.2in]
    \item \textbf{Pretraining:} We pretrain a suite of ${\sf MultiHyena}$ language models on The Pile \cite{gao2020pile}, investigating scaling of perplexity with different amounts of total tokens (5, 10, 15 billion), as well as larger training runs for 300 billion tokens. ${\sf MultiHyena}$ outperforms Transformers and ${\sf Hyena}$. 
    \item \textbf{Distillation analysis:} We investigate the relation between optimal distillation orders, Hankel spectrum, and errors on the logits of distilled models.   
    \item \textbf{Post-distillation downstreams:} We evaluate the downstream impact of distilling long convolutional language models, reporting HELM \cite{liang2022holistic} and LM-Eval-Harness \cite{eval-harness} results.  
    \item \textbf{Benchmarking:} We benchmark latency, throughput and memory along the different axes of batch size, sequence length, number of generated tokens. We include base models, distilled models and equivalent Transformers.
\end{itemize}

\subsection{Pre-training}
\begin{table}[b]
    \centering
    \footnotesize
    \renewcommand{\arraystretch}{0.9}
    \setlength{\tabcolsep}{4pt}
    \begin{minipage}[b]{0.2\linewidth}
        \centering
        \begin{tabular}[b]{@{}c|c@{}}
            \toprule
            Model & {\sc PPL.} \\
            \midrule
            GPT & 9.3 \\
            ${\sf Hyena}$ & 9.3 \\ 
            ${\sf MultiHyena}$ & \textbf{8.7} \\ 
            \bottomrule
        \end{tabular}
        \label{pile1}
    \end{minipage}%
    \hfill
    \begin{minipage}[b]{0.37\linewidth}
        \centering
        \begin{tabular}[b]{@{}c|ccc@{}}
            \toprule
            Model & {\sc 5B} & {\sc 10B} & {\sc 15B} \\
            \midrule 
            GPT (125M) & 13.3 & 11.9 & 11.2 \\
            ${\sf Hyena}$ (153M)& 13.3 & 11.8 & 11.1 \\
            ${\sf MultiHyena}$ (153M) & \textbf{12.1} & \textbf{11.0} & \textbf{10.6} \\ 
            \bottomrule
        \end{tabular}
    \end{minipage}%
    \hfill
    \begin{minipage}[b]{0.37\linewidth}
        \centering
        \begin{tabular}[b]{@{}c|ccc@{}}
            \toprule
            Model & {\sc 5B} & {\sc 10B} & {\sc 15B} \\
            \midrule
            GPT (355M) & 11.4 & 9.8 & 9.1 \\
            ${\sf Hyena}$ (355M) & 11.3 & 9.8 & 9.2  \\
            ${\sf MultiHyena}$ (355M) & \textbf{10.6} & \textbf{9.4} & \textbf{8.9} \\
            \bottomrule
        \end{tabular}
    \end{minipage}
    \label{pile2}
    \vspace{-2mm}
    \caption{\footnotesize \textbf{[Left]} Perplexity of small models on {\sc The Pile}, after pre-training for $300$ billion tokens. \textbf{[Center and Right]} Perplexity on {\sc The Pile} for models trained until a total number of tokens e.g., 5 billion (different runs for each token total).}
\end{table}

To validate the multi-head formulation, we train $150$ and $350$ million parameter ${\sf MultiHyena}$ models on The Pile \cite{gao2020pile} using $8$ heads and otherwise the same architecture as equivalent ${\sf Hyena}$ models, following the setup of \cite{poli2023hyena}. Via the multi-head structure introduced in \ref{mhyena}, ${\sf MultiHyena}$ outperforms both  ${\sf Hyena}$ and Transformers, including on data scaling runs with increasing numbers of tokens and full $300$B tokens runs (Table \ref{pile1}).

\subsection{Distillation Analysis}
Next, we verify whether Hankel singular values are predictive of downstream errors, and whether large models can be distilled without loss in quality. We apply ${\sf LaughingHyena}$ distillation to pre-trained ${\sf MultiHyena}$, ${\sf Hyena}$ and ${\sf H3}$ of different sizes. Concretely, for each layer and channel of a model, we parametrize the poles $\{\lambda_n\}$ of the modal canonical forms (Section \ref{modal_form}) at different orders $d$, and solve for each $\ell_2$ approximation problem.

\paragraph{Approximation errors and spectrum}

We investigate the magnitude of approximation errors introduced by {$\sf LaughingHyena$} distillation. Given a pretrained {$\sf MultiHyena$} model, we compute the errors between original and distilled filters at each layer, averaged across channels. We repeat this process for different distillation orders (state dimension of the model form of Section \ref{modal_form}). Figure \ref{fig:error_layers} visualizes minimum, maximum and average errors, per-layer errors and the distribution of the singular values of the Hankel operator associated to each filter. We observe distillation orders ($>\!16$) that yield smalls errors to be predicted by the distribution of singular values. 
\begin{figure}
    \label{logit_errors}
    \centering
    \vspace{-3mm}
    \includegraphics[scale=.8]{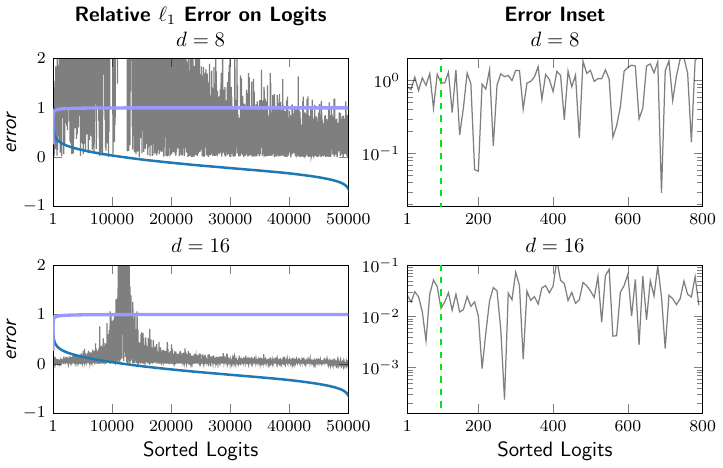}
    \vspace{-4mm}
    \caption{\footnotesize Errors between logits of pretrained and distilled ${\sf MultiHyena}$. In {\color{blue!80}blue}, we plot (ordered) logits, in {\color{blue!40!white}light blue} the cumulative distribution function, and in black the relative errors. The {\color{green!60!black} green} dotted line indicates the 99.99\% percentile. As the errors grow slowly as function of the percentiles, model outputs do not diverge from the base model.
    }
    \vspace{-2mm}
\end{figure}
Thus, analysis of the Hankel operator's spectrum is verified to be an effective approach to direct estimation of the optimal distillation order. We also note that the optimal order changes across layers, offering options for further optimization.  
\paragraph{Output errors}

Next, we compute relative $\ell_1$ error between output logits of pre-trained and distilled models to ensure ${\sf LaughingHyena}$ can be used in generation workloads. The optimal minimal distillation order estimated via Hankel operators ($16$) is sufficient to keep the output distribution over the vocabulary ($>\!50$k entries) close to the pre-trained model, as shown in Figure \ref{logit_errors}. Inspecting the error profile over logits sorted by magnitude reveals our approach to be robust to different sampling strategies for generation, including greedy decoding, top-$k$, top-$p$ \cite{holtzman2019curious}. Indeed, the relative errors are $<\!10^{-2}$ up to and including the $99.99\%$ percentile of the distribution, meaning e.g., a top-$p$ sampling strategy with large $p$ can be used on a distilled model without drift in outputs (mis-classified tokens). We note that the relative errors are maximum on small-norm logits, which are not required by most sampling strategies.

In Appendix \ref{app:errors}, we provide a similar distillation error analysis for ${\sf Hyena}$ and ${\sf H3}$ models. We find that ${\sf Hyena}$ and can be distilled with less than $32$ orders and ${\sf H3}$ with less than $8$.

\subsection{Downstream Evaluation}
We check how distillation affects downstream performance on language benchmarks. We apply distillation of order $8$, $16$ and $32$ to our {\sc The Pile}-pretrained ${\sf MultiHyena}$ language model and benchmark (Table \ref{helm}) its performance on a suite of canonical (zero shot) tasks from LM-Eval-Harness \cite{eval-harness} and HELM \cite{liang2022holistic}. The results are consistent with our error analysis: distillation orders equal or greater to $16$ introduce little-to-no quality degradation.

\begin{table}[b]
    \centering
    \label{helm}
    \setlength{\tabcolsep}{4pt}
    \begin{tabular}{@{}l|ccccccc@{}}
    \toprule
    Model & LAMBADA & Winogrande & PIQA & HellaSwag  & OpenbookQA  \\
    & acc & acc & acc &acc norm. & acc norm. \\ 
    \midrule 
    Pythia (160M) & 32.8 & \textbf{53.1} & 61.6 & 31.6 & \textbf{29.2}  \\
    \midrule  
    {$\sf MultiHyena$} (154M) & \textbf{43.2} & 52.7 & \textbf{64.6} & \textbf{34.1} &  29.0  \\
    {$\sf LaughingHyena$}-16 &  43.1 & 52.6 & 64.7 & 34.1 &  28.9  \\ 
    {$\sf LaughingHyena$}-8 & 0.0 & 51.8 & 51.5 & 32.7 &  28.2  \\ 
    {$\sf LaughingHyena$}-4 &0.0 & 49.6 & 53.7 & 26.4 & 26.4  \\ 
    \bottomrule 
    \end{tabular}
    \caption{\footnotesize Evaluation of {$\sf LaughingHyena$}-distilled models pre and post modal distillation. We test on LM-Eval-Harness tasks, reporting Pythia \cite{biderman2023pythia} performance as a Transformer baseline trained on the same data. {$\sf LaughingHyena$}-$d$ is a {$\sf MultiHyena$} model with each filter distilled of order $d$.}
\end{table}

\subsection{Benchmarking}

\begin{figure*}[!bt]
    \centering
    \includegraphics[width=.9\linewidth]{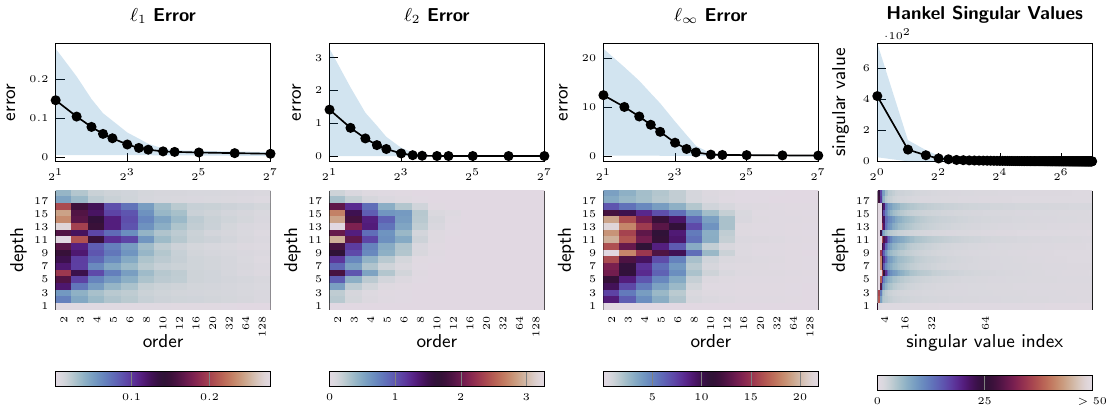}
    \vspace{-4mm}
    \caption{\footnotesize Approximation error profiles (min, max, average) on the filters of ${\sf MultiHyena}$ model after distillation at different orders. We also visualize the distribution of Hankel singular values: if the spectrum decays after $n$ singular values, order $n$ distillation yields low errors.}
    \vspace{-4mm}
    \label{fig:error_layers}
\end{figure*}

We measure throughput, latency and memory usage of {$\sf LaughingHyena$} for auto-regressive generation workloads, with initial prompt length $T$ and number of generated tokens $K$. The throughput is computed as number of generated tokens over latency. For each setting (and additional benchmarks), we provide details in Appendix \ref{app:benchmarking}.

\paragraph{Peak throughput}

\begin{figure}[b!]
    \vspace{-3mm}
    \begin{minipage}[b]{0.5\linewidth}
        \centering
        \pgfplotsset{every tick label/.append style={font=\small}}
\begin{tikzpicture}[font=\footnotesize]
    \definecolor{crimson2143940}{RGB}{214,39,40}
    \definecolor{darkgray176}{RGB}{176,176,176}
    \definecolor{darkorange25512714}{RGB}{255,127,14}
    \definecolor{forestgreen4416044}{RGB}{44,160,44}
    \definecolor{lightgray204}{RGB}{204,204,204}
    \definecolor{steelblue31119180}{RGB}{31,119,180}
    \begin{axis}[
    legend cell align={left},
    legend columns=2,
    legend style={
      fill opacity=0.,
      draw opacity=1.,
      text opacity=1.,
      nodes={scale=0.8, transform shape},
      at={(-0.1,1)},
      anchor=south west,
      draw=none
    },
    width=.9\linewidth,
    height=3.526cm,
    xlabel={${\sf Prompt~Length}$},
    xlabel style={at={(.5,-.15)}},
    xmajorgrids,
    xmin=256, xmax=1536,
    xtick style={color=black},
    ylabel style = {align=center}, 
    ylabel={${\sf Throughput}~[{\sf tok}/s]$},
    ymajorgrids,
    xtick={512, 1024, 1536},
    ymin=500, ymax=3100,
    ]
    \addplot [very thick, steelblue31119180]
    table {%
    256 3057
    512 2744
    1024 2147
    1536 1756
    };
    \addlegendentry{$\sf Laughing~Hyena$~$\sf 1.3B$}
    \addplot [very thick, blue!50!white, dashdotted]
    table {%
    256 2174
    512 1983
    1024 1561
    1536 1420
    };
    \addlegendentry{$\sf Hyena$~$\sf 1.3B$}
    \addplot[very thick, black, dotted] 
    table {
    256 1995
    512 1303
    1024 755
    1536 508
    };
    \addlegendentry{$\sf Transformer~1.3B$}
    \addplot [very thick, black, dashed]
    table {%
    256 2138
    512 1955
    1024 1525
    1536 1289
    };
    \addlegendentry{$\sf H3$~$\sf 1.3B$}
    \end{axis}
\end{tikzpicture}
        \vspace{-5mm}
        \caption{\footnotesize Scaling in prompt length $T$.}
        \label{fig:prompt}
    \end{minipage}
    \hfill%
    \begin{minipage}[b]{0.5\linewidth}
        \centering
        \pgfplotsset{every tick label/.append style={font=\small}}
\begin{tikzpicture}[font=\footnotesize]
    \definecolor{crimson2143940}{RGB}{214,39,40}
    \definecolor{darkgray176}{RGB}{176,176,176}
    \definecolor{darkorange25512714}{RGB}{255,127,14}
    \definecolor{forestgreen4416044}{RGB}{44,160,44}
    \definecolor{lightgray204}{RGB}{204,204,204}
    \definecolor{steelblue31119180}{RGB}{31,119,180}
    
    \begin{axis}[
    legend cell align={left},
    legend columns=2,
    legend style={
      fill opacity=0.,
      draw opacity=1.,
      text opacity=1.,
      nodes={scale=0.8, transform shape},
      at={(-0.1,1)},
      anchor=south west,
      draw=none
    },
    width=.9\linewidth,
    height=3.526cm,
    xlabel={${\sf Generation~Length}$},
    xlabel style={at={(.5,-.15)}},
    xmajorgrids,
    xmin=64, xmax=512,
    xtick style={color=black},
    ylabel style = {align=center}, 
    ylabel={${\sf Memory}~[{\sf GBs}]$},
    ymajorgrids,
    xtick = {64, 256, 512},
    ytick={10,20,30},
    ymin=0, ymax=35,
    ]
    \addplot [very thick, steelblue31119180]
    table {%
    64 11.728646144
    256 11.728646144
    512 11.728646144
    };
    \addlegendentry{$\sf Laughing~Hyena$~$\sf 1.3B$}
    \addplot [very thick, blue!50!white, dashdotted]
    table {%
    64 22.913190912
    256 27.945949696
    512 34.627728896
    };
    \addlegendentry{$\sf Hyena$~$\sf 1.3B$}
    \addplot[very thick, black, dotted] 
    table {
    64 22.162777600
    256 26.994299392
    512 35.106135040
    };
    \addlegendentry{$\sf Transformer~1.3B$}
    \addplot [very thick, black, dashed]
    table {%
    64 10.598473728
    256 10.799800320
    512 12.379949056
    };
    \addlegendentry{$\sf H3$~$\sf 1.3B$}
    \end{axis}
    
    \end{tikzpicture}
        \vspace{-5mm}
        \caption{\footnotesize Peak GPU memory for generation.}
        \label{fig:memory}
    \end{minipage}
    
\end{figure}

Distilled models do not need $kv$-caches. This reduces memory requirement during generation, enabling higher peak throughput in large-batch workloads. We achieve $10\!\times$ higher throughput than Transformers at size $1.3$ billion parameters (Figure \ref{fig:throughput_batch}). Throughput is higher than Transformers even at fixed batch sizes, indicating lower latency.
%

\paragraph{SSM state dimension and throughput}

For typical distillation orders ($<100$), peak throughput is not greatly affected. We measure a $2\%$ reduction in throughput from $32$ to $64$. 

\paragraph{Prompt length}
The throughput of {$\sf LaughingHyena$}-distilled models is $4\!\times$ larger than Transformers at fixed batch size $64$ and prompt length $1536$ (Figure \ref{fig:prompt}). As prompt length increases, the runtime gap between pre-filling via convolutions in LCSMs and pre-filling in Transformers widens (e.g., $\tilde\cO(T)$ as detailed in Section \ref{deploy}, compared to $\mathcal{O}(T^2)$).

\paragraph{Memory footprint} Recurrent models do not require $kv$-caches and use constant memory for generation of an arbitrary number of tokens (Figure \ref{fig:memory}). 

\section{Conclusion}

We study the efficiency and quality of state-of-the-art long convolutional sequence models. First, we introduce ${\sf LaughingHyena}$, a novel distillation method inspired by rational function approximation and model-order reduction techniques. ${\sf LaughingHyena}$ can be applied after training to extract compact state-space models from each convolutional filter, without loss of quality. Distilled models achieve higher throughput than equivalently-sized Transformers, and can perform auto-regressive generation in constant memory by sidestepping the need to cache previous outputs. We theoretically and empirically investigate the trade-offs of different strategies for fast inference of recurrent models, and introduce architectural improvements to ${\sf Hyena}$ that improve pretraining quality.

\clearpage
\section*{Acknowledgments}

We would like to thank Together Computer for providing the compute used to train models in this paper. We gratefully acknowledge the support of NIH under No. U54EB020405 (Mobilize), NSF under Nos. CCF1763315 (Beyond Sparsity), CCF1563078 (Volume to Velocity), and 1937301 (RTML); US DEVCOM ARL under No. W911NF-21-2-0251 (Interactive Human-AI Teaming); ONR under No. N000141712266 (Unifying Weak Supervision); ONR N00014-20-1-2480: Understanding and Applying Non-Euclidean Geometry in Machine Learning; N000142012275 (NEPTUNE); NXP, Xilinx, LETI-CEA, Intel, IBM, Microsoft, NEC, Toshiba, TSMC, ARM, Hitachi, BASF, Accenture, Ericsson, Qualcomm, Analog Devices, Google Cloud, Salesforce, Total, the HAI-GCP Cloud Credits for Research program,  the Stanford Data Science Initiative (SDSI), Department of Defense (DoD) through the National Defense Science and Engineering Graduate Fellowship (NDSEG) Program, and members of the Stanford DAWN project: Facebook, Google, and VMWare. This work is supported by NSF (1651565), AFOSR (FA95501910024), ARO (W911NF-21-1-0125), ONR, DOE (DE-SC0022222), CZ Biohub, and Sloan Fellowship. The U.S. Government is authorized to reproduce and distribute reprints for Governmental purposes notwithstanding any copyright notation thereon. Any opinions, findings, and conclusions or recommendations expressed in this material are those of the authors and do not necessarily reflect the views, policies, or endorsements, either expressed or implied, of NIH, ONR, or the U.S. Government. AR's work is supported by NSF grant\# CCF-2247014.

\section*{Broader Impact}

In this work, we focus on advances related to efficient models for long sequences. 

\paragraph{Efficiency} Our distillation methods for constant-memory, high throughput inference in \textit{long convolution sequence models} ({$\sf LCSMs$}) can lead to energy savings during model deployement, enabling processing of longer-form content at a fraction of the cost and reducing environmental impact. Improved efficiency may also affect other aspects of AI safety, as it may make it easier produce malicious or harmful content.

\paragraph{Accessibility} By improving the efficiency of training and generation,{$\sf LCSMs$} and {$\sf LaughingHyena$} may contribute to increased accessibility of large language models, lowering the hardware barrier to entry for individuals and organizations with limited resources. 

\paragraph{Steerability}
New method based on {$\sf LCSMs$} enable sequence models to process long-form prompts previously inaccessible by Transformers, which may lead to increased control over models via e.g., conditioning on additional instructions \cite{bai2022constitutional}.

\clearpage
\printbibliography
\newpage
\clearpage

\appendix

\rule[0pt]{\columnwidth}{1pt}
\begin{center}
    \huge{\sc Laughing Hyena Distillery} \\
    \vspace{0.15cm}
    \emph{Supplementary Material}
\end{center}
\rule[0pt]{\columnwidth}{1.5pt}

\doparttoc
\tableofcontents

\section*{Authors Contribution}
\noindent
\begin{tabular}{ll}
    \textbf{S.M.} & Conceptualized the research; coordinated collaborations; \\
    & lead theory development; conducted distillation experiments. \\
    \textbf{M.P.} & Conceptualized the research; coordinated collaborations; \\
    & lead the experimental (model pre-training, distillation, benchmarks, \\
    & downstream evaluation) efforts; coordinated writing and conference submission; \\
    & optimized inference stack. \\
    \textbf{D.Y.F.} & Assisted in development of $\sf MultiHyena$; assisted in pre-training and \\
    & subsequent benchmarking of distilled models; assisted in writing. \\
    \textbf{H.K.} & Developed benchmarking suite and interpreted results; \\
    & assisted in writing. \\
    \textbf{R.N.P.} & Assisted in theory and algorithmic development; \\
    & performed model-order reduction experiments of $\sf H3$ models; \\
    & assisted in writing. \\
    \textbf{A.T.} & Conceived and proved Theorem~\ref{thm: hyena-complexity}; \\
    & assisted in writing. \\
    \textbf{D.W.R.} & Assisted in Hankel operator spectral analysis; \\
    & Assisted in writing. \\
    \textbf{Q.M.} & Assisted in theory development. \\
    \textbf{B.C.} & Supervised development of benchmarking suite and model deployment. \\
    \textbf{A.R.} & Supervised theory development (solving associative recall \\
    & with $\sf MultiHyena$, Th. \ref{thm: hyena-complexity}). \\
    \textbf{C.Z.} & Supervised research; secured compute resources. \\
    \textbf{C.R.} & Supervised research; reviewed manuscript; \\
    & secured compute resources. \\
    \textbf{S.E.} & Supervised research; reviewed manuscript. \\
    \textbf{Y.B.} & Supervised research; reviewed manuscript. \\
\end{tabular}

\noindent \textit{Stefano Massaroli, Michael Poli, and Dan Fu contributed equally to this work. Christopher R\'{e}, Stefano Ermon, and Yoshua Bengio share equal senior authorship.}

\noindent All authors read and approved the final manuscript.

\clearpage
\section{Linear Systems}

\subsection{Extended Notation and System Theory Preliminaries} We first introduce the notation and some mathematical concepts that will be used throughout the paper.
By $\bZ$ we denote the set of integers, by $\R$ the set of reals, and by $\bC$ the set of complex numbers. The variable $t$ stands for \textit{time}. $\ell_p(\bZ)$ denotes the Banach space of complex-valued sequences $(x_t)_{t\in\bZ}$ with finite energy, i.e. $\|x\|_p\coloneqq [\sum_{t\in\bZ} |x_t|^p]^{1/p} < \infty$ for some $1\leq p<\infty$. $\ell_\infty(\bZ)$ is instead is the space of sequences for which $\|x\|_\infty\coloneqq \sup_{t\in\bZ} |x_t| < \infty$. With $\bS$ denoting the unit circle in the complex plane, $\bS\coloneqq\{z\in\bC:|z|=1\}$ we define $\cH_p(\bS)$ as the space of functions $X$ from $\bC$ to itself such that $\|X\|_p\coloneqq [(1/2\pi) \int_{-\pi}^{\pi}|X(e^{i\omega})|^p\dd \omega]^{1/p}<\infty$ and $\cH_\infty(\bS)$ the space for which $\|X\|_\infty\coloneqq \sup_{z\in\bS}|X(z)|<\infty$. Particularly, $\cK_2(\bS)$ is a Hilbert space with inner product $\langle X,Y\rangle\coloneqq (1/2\pi) \int_{-\pi}^{\pi}X(e^{i\omega})Y^*(e^{i\omega})\dd \omega$ where ``$*$'' denotes complex conjugation. Although we acknowledge we are using the same notation for norms in both $\ell_p(\bZ)$ and $\cH_p(\bS)$, the correct meaning will always be made clear by the context. The $\cZ$-transform of a sequence $x=(x_t)_{t\in\bZ}$ is $X(z)=\cZ[x](z)\coloneqq\sum_{t\in\bZ}x_t z^{-t}$. We embrace the system theory convention of using capital letters to identify transformed sequences. The $\cZ$-transform is a projection of the sequence onto a basis of powers $e_t = r^{-t}e^{i\omega t}$. This basis is not orthogonal unless $r=1$. That is the basis of the discrete-time Fourier transform $\cF$. Hence, $\cF$ is defined as $\cF[x](e^{i\w})=X(e^{i\w})\coloneqq\sum_{t\in\bZ}x_t e^{-i\w t}$. The discrete-time Fourier transform is an isometric isomorphism between $\ell_2(\bZ)$ and $L_2(\bS)$. We say that sequences live in the \textit{time domain} and their $\cZ$ (or $\cF$) transforms in the \textit{frequency domain}.

A \textit{linear system} is a linear operator transforming an input sequence $u$ to an output sequence $y$. If the sequences have continuous support, i.e. $t$ ranges over a continuous set (e.g. $\R$), we have a \textit{continuous-time} system. Conversely, if the sequences have discrete support, i.e. $t$ ranges over a discrete set (e.g. $\bZ$), we have a \textit{discrete-time} or \textit{digital} system. \textbf{In this manuscript we restrict ourselves to \textit{discrete-time} systems}. Systems can be \textit{single-input single-output} (SISO) if $u$ and $y$ are scalar functions or \textit{multi-input multi-output} if either $u$ or $y$ are vector-valued. \textbf{We limit our discussion to SISO systems}. The \textit{impulse response} of a system is the output sequence $y$ when the input sequence $u$ is the Kronecker delta function $\delta_t$ and is usually denoted by the letter $h$. The values $h_t$ of the impulse response sequence are also known as the \textit{Markov parameters} of the system. The most common mathematical representation of a linear system is its convolution form: $y = h * u$, i.e. $y_t = \sum_{j\in\bZ} h_{t-j}u_j = \sum_{j\in\bZ}h_j u_{t-j},~t\in\bZ$. In matrix form the input-output relation is given by the Toeplitz operator $\sT_h$ corresponding to the (possibly infinitely long) sequence $h$, i.e. $y = \sT_h  u$. Taking the $\cZ$-transforms, we can write the input-output relation as $Y(z)=H(z)U(z)$ (this is just the Fourier convolution theorem extended outside the unit circle). $H(z)$ is called the \textit{transfer function} of the system. When $z=e^{i\w}$, $H(e^{i\w})$ is just the discrete-time Fourier transform of $h$ which is called the \textit{frequency response} of the system. A linear system is \textit{causal} if $h_t=0$ for $t<0$. A system is called \textit{stable} if the $\sT_h$ is a bounded operator. If $u,y\in\ell_2(\bZ)$, then stability implies $h\in\ell_\infty$. \textbf{In the following, we mainly focus on causal stable systems}.
\subsection{Systems Norms}\label{app:norms}
When quantitatively characterizing linear systems, several norms play a crucial role. These norms provide measures of various characteristics of the systems, which are essential in both analysis and filter design.
\paragraph{The $\ell_2$ and $\cH_2$ norms} As defined above, the $\ell_2$ norm represents the \textit{energy} of a signal $h$,
\[
\|h\|_2 \coloneqq\big[\sum_{t\in\bZ} |h|^2_t\big]^{1/2}
\]
while $\cH_2$ is the energy of the (continuous) spectrum of $h$,
\[
    \|H\|_2\coloneqq \big[\frac{1}{2\pi} \int_{-\pi}^{\pi}|X(e^{i\omega})|^2\dd \omega]^{1/2}
\]
By Parseval's theorem, the $\ell_2$ and $\cH_2$ norms are equal, $\|h\|_2 = \|H\|_2$. Further these norms are useful to study the approximation of convolutional filter. The following holds:
\begin{tcolorbox}[enhanced, frame hidden, sharp corners, boxsep=0pt, before skip=0pt, after skip=0pt]
\begin{lemma}[$\ell_\infty$ output error]\label{lemma:output_err_inf}
    Consider the class of $\ell_2$ measurable inputs such that $\|u\|_2\leq \zeta$, then for all $H,\hat H\in\cH_2$,
    \begin{equation*}
        \|y - \hat y\|_\infty \leq \zeta\|H - \hat H\|_{2} 
    \end{equation*}
\end{lemma}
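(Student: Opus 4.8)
The plan is to reduce the output discrepancy to a single convolution, bound it pointwise with Cauchy--Schwarz, and then transport the resulting time-domain filter norm into the frequency domain via Parseval.

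First I would exploit linearity: since $y = h * u$ and $\hat y = \hat h * u$ are driven by the \emph{same} input, the error sequence is itself a convolution, $y - \hat y = (h - \hat h) * u$. Written out at a fixed time index this reads $(y - \hat y)_t = \sum_{j\in\bZ} (h - \hat h)_{t-j}\, u_j$, which I recognize as an $\ell_2(\bZ)$ inner product between the reflected-and-shifted filter-difference sequence $j \mapsto (h-\hat h)_{t-j}$ and the input $j \mapsto u_j$.

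Second, I would apply Cauchy--Schwarz to this inner product, giving $|(y-\hat y)_t| \leq \big[\sum_{j} |(h-\hat h)_{t-j}|^2\big]^{1/2} \, \|u\|_2$. The first factor is independent of $t$: reindexing $k = t-j$ shows $\sum_j |(h-\hat h)_{t-j}|^2 = \sum_k |(h-\hat h)_k|^2 = \|h - \hat h\|_2^2$, i.e. the $\ell_2$ norm is invariant under reflection and translation. Hence the estimate $|(y-\hat y)_t| \leq \|h-\hat h\|_2 \, \|u\|_2$ is \emph{uniform} in $t$, and taking the supremum over $t$ yields $\|y - \hat y\|_\infty \leq \|h-\hat h\|_2 \, \|u\|_2$. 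Invoking the hypothesis $\|u\|_2 \leq \zeta$ together with Parseval's identity $\|h-\hat h\|_2 = \|H - \hat H\|_2$ (already established for the $\ell_2/\cH_2$ pair above) delivers the claim $\|y-\hat y\|_\infty \leq \zeta \|H - \hat H\|_2$. As an alternative to re-deriving Cauchy--Schwarz by hand, I note that the pointwise bound is exactly the endpoint case of Young's convolution inequality with $p = q = 2$ and $r = \infty$ (these indeed satisfy $1/p + 1/q = 1 + 1/r$), which the paper has already cited, so one may invoke it directly.

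I expect no genuinely hard step here. The only points requiring care are purely technical: confirming that the convolution sum converges absolutely so that the pointwise bound is meaningful (which Cauchy--Schwarz guarantees from the finiteness of $\|h-\hat h\|_2$ and $\|u\|_2$, both in $\cH_2$ / $\ell_2$ by assumption), and verifying the reflection-and-translation invariance of the $\ell_2$ filter norm, which is the elementary reindexing above. Everything else is immediate from Parseval.
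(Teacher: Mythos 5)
Your proof is correct, but it takes a different route from the paper's. You work entirely in the time domain: you write the error as a single convolution $(h-\hat h)*u$, apply Cauchy--Schwarz to the convolution sum at each fixed $t$ (using the reindexing $k=t-j$ to see that the filter factor is $\|h-\hat h\|_2$ uniformly in $t$), and only invoke Parseval at the very end to convert $\|h-\hat h\|_2$ into $\|H-\hat H\|_2$. The paper instead argues in the frequency domain: it represents $y_t-\hat y_t$ by the inverse Fourier integral, bounds the supremum by $\frac{1}{2\pi}\int_{-\pi}^{\pi}|H(e^{i\omega})-\hat H(e^{i\omega})|\,|U(e^{i\omega})|\,\dd\omega$, applies H\"older (Cauchy--Schwarz) to that integral, and then uses Parseval on the \emph{input} factor to recover $\|u\|_2$. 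The two arguments are dual to one another -- each applies Cauchy--Schwarz on one side of the Fourier isometry and Parseval on the other -- so neither is a gap. Your version is the more elementary one: it needs no inversion formula and no interchange of supremum and integral, and it isolates the clean intermediate statement $\|y-\hat y\|_\infty \leq \|h-\hat h\|_2\,\|u\|_2$, which, as you correctly observe, is exactly Young's inequality at the endpoint $p=q=2$, $r=\infty$ already cited in the main text. The paper's frequency-domain phrasing, on the other hand, transfers verbatim to the finite-sequence/DFT setting of the companion lemma (Lemma~\ref{lemma:irf_err_inf}) and makes the pointwise role of the transfer-function discrepancy $|H-\hat H|$ explicit, which is the quantity the distillation objectives actually optimize.
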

\end{tcolorbox}
\proof
    \begin{equation*}
        \begin{aligned}
            \sup_{t>0}~|y_t - \hat y_t| &= \sup_{t>0} \left|\frac{1}{2\pi}\int_{-\pi}^\pi \left[Y(e^{i\omega}) - \hat Y(e^{i\omega})\right]e^{i\omega t}\dd \omega\right| \\
            &\leq \frac{1}{2\pi}\int_{-\pi}^\pi |Y(e^{i\omega}) - \hat Y(e^{i\omega})|\dd \omega\\
            &= \frac{1}{2\pi}\int_{\pi}^\pi |H(e^{i\omega}) - \hat H(e^{i\omega})|U(e^{i\omega})|\dd \omega \\
            &\leq \left[\frac{1}{2\pi}\int_{-\pi}^\pi |H(e^{i\omega}) - \hat H(e^{i\omega})|^2\dd \omega \right]^{1/2}\left[\frac{1}{2\pi}\int_{-\pi}^\pi |U(e^{i\omega})|^2\dd \omega \right]^{1/2} &&\text{H\"older Inequality}\\
            &\leq \left[\frac{1}{2\pi}\int_{-\pi}^\pi |H(e^{i\omega}) - \hat H(e^{i\omega})|^2\dd \omega \right]^{1/2}\|u\|_2 &&\text{Parseval Theorem}\\
            &\leq \zeta \|H - \hat H\|_{\cH_2}
        \end{aligned}
    \end{equation*}
\endproof
If $u$ is the unit impulse function $u_t = \delta_t$ then $\zeta=1$. The results also holds for finite sequences of length $L$ using the discrete Fourier transform.
\begin{tcolorbox}[enhanced, frame hidden, sharp corners, boxsep=0pt, before skip=0pt, after skip=0pt]
\begin{lemma}
    [Impulse response error on finite sequences]\label{lemma:irf_err_inf}
    Consider filters $h,
    \hat h$ with finite length $L$. Then, the following holds.
    \begin{equation*}
        \|h - \hat h\|_\infty \leq \|H - \hat H\|_2
    \end{equation*}
    where $H$ and $\hat H$ denote the discrete Fourier transforms of $h$ and $\hat h$, respectively.
\end{lemma}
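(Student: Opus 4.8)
The plan is to mirror the proof of Lemma~\ref{lemma:output_err_inf}, replacing the continuous spectral integral with its discrete (DFT) counterpart and the continuous H\"older--Parseval chain with its finite-dimensional analogue. I would set $e \coloneqq h - \hat h$, a length-$L$ sequence, with discrete Fourier transform $E \coloneqq H - \hat H$, and recover $e$ from $E$ through the inverse DFT, $e_t = \frac{1}{L}\sum_{k=0}^{L-1} E_k\, e^{i 2\pi k t / L}$ for $t = 0,\dots,L-1$. Everything then reduces to a uniform bound on $|e_t|$.

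First I would apply the triangle inequality: since $|e^{i 2\pi k t / L}| = 1$, each entry satisfies $|e_t| \le \frac{1}{L}\sum_{k=0}^{L-1} |E_k|$ uniformly in $t$, hence $\|e\|_\infty \le \frac{1}{L}\sum_{k=0}^{L-1} |E_k|$. Next I would apply the discrete Cauchy--Schwarz inequality (H\"older with $p=q=2$) to the sum $\sum_k 1 \cdot |E_k|$, obtaining $\frac{1}{L}\sum_{k=0}^{L-1} |E_k| \le \frac{1}{L}\sqrt{L}\,\big(\sum_{k=0}^{L-1} |E_k|^2\big)^{1/2} = \big(\frac{1}{L}\sum_{k=0}^{L-1}|E_k|^2\big)^{1/2}$. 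This is the exact finite analogue of the H\"older step in the continuous proof, and the unit spectral weight plays the role of $U(e^{i\omega})$ evaluated against the unit-energy impulse input.

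The final step is to identify $\big(\frac{1}{L}\sum_{k=0}^{L-1}|E_k|^2\big)^{1/2}$ with $\|H - \hat H\|_2$. Here I would pin down the normalization: discretizing the continuous norm $\|X\|_2 = [\frac{1}{2\pi}\int_{-\pi}^{\pi}|X(e^{i\omega})|^2\dd\omega]^{1/2}$ at the $L$ equispaced frequencies $\omega_k = 2\pi k/L$ with quadrature weight $\dd\omega \to 2\pi/L$ turns the $\cH_2$ norm into precisely $\|E\|_2^2 = \frac{1}{L}\sum_{k=0}^{L-1}|E_k|^2$ --- which is the discrete Parseval identity for this scaling. Substituting gives $\|h - \hat h\|_\infty \le \|H - \hat H\|_2$, completing the argument.

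The hard part is not the inequalities, which are structurally identical to the continuous case (triangle inequality, then H\"older, then Parseval); it is ensuring the constants cancel exactly. The whole bound hinges on the $\sqrt{L}$ produced by Cauchy--Schwarz cancelling against the $1/L$ of the inverse DFT, which holds only if the DFT and the $\cH_2$ norm are normalized consistently so that discrete Parseval carries a unit constant. I expect the main care to be in stating that convention precisely; once it is fixed, the chain closes with constant one and the result follows.
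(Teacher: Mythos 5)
Your proof is correct and follows essentially the same route as the paper's: an inverse-DFT representation of the filter difference, triangle inequality, Cauchy--Schwarz (H\"older), and a Parseval-type identification of the frequency-domain sum with $\|H-\hat H\|_2$ --- the paper merely phrases it in terms of outputs $y,\hat y$ under a unit-impulse input $u$ (so the factor $|U_n|=1$ appears explicitly where you use the unit spectral weight). If anything, your explicit tracking of the $1/L$ versus $1/2\pi$ normalization is more careful than the paper's own proof, which carries $\tfrac{1}{2\pi}$ factors in its discrete sums where the consistent DFT convention would place $\tfrac{1}{L}$.
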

\end{tcolorbox}
\proof
    \begin{equation*}
        \begin{aligned}
            \|y-\hat y\|_\infty\coloneqq\sup_{t>0}~|y_t - \hat y_t| &= \sup_{t>0} \left|\frac{1}{2\pi}\sum_{n=0}^{L-1} \left[Y_n - \hat Y_n\right]e^{i2\pi n t/L}\right| \\
            &\leq \frac{1}{2\pi}\sum_{n=0}^{L-1} |Y_n - \hat Y_n|\\
            &= \frac{1}{2\pi}\sum_{n=0}^{L-1} |H_n - \hat H_n||U_n| \\
            &\leq \left[\frac{1}{2\pi}\sum_{n=0}^{L-1} (H_n - \hat H_n)^2\right]^{1/2}\left[\frac{1}{2\pi}\sum_{n=0}^{L-1}U_n^2 \right]^{1/2} &&\text{H\"older Inequality}\\
            &\leq \left[\frac{1}{2\pi}\sum_{n=0}^{L-1} (H_n - \hat H_n)^2 \right]^{1/2}\|u\|_2 &&\text{Parseval Theorem}\\
            &= \|H - \hat H\|_{2} && \text{using $\|u\|_2=1$}
        \end{aligned}
    \end{equation*}   
\endproof
\subsection{Transfer Function of State-Space Models}\label{sec:tf_lumped}
The transfer function \eqref{eq:dtssm_tf} is derived by taking the $z$-transform of input and state, $U(z) = \cZ[u](z), X(z) = \cZ[x](z)$. Plugging $U(z),~X(z)$ in the state equation \eqref{eq:dtssm}, it holds
\begin{equation*}
	zX(z) = \sA X(z) + \sB U(z) ~\Leftrightarrow~ X(z) = (z\sI-\sA)^{-1}\sB U(z)
\end{equation*}
Substituting in the output equation yields
\begin{equation*}
	Y(z) = \sC (z\sI-\sA)^{-1}\sB U(z) + h_0 U(z)
\end{equation*}
The transfer function is then defined as
\begin{equation}\label{eq:ssm_tf}
	H(z) = \frac{Y(z)}{U(z)} = \sC(z\sI-\sA)^{-1}\sB + h_0.
\end{equation}
 \paragraph{Alternative derivation} The transfer function can also be derived by direct $z$-transform of the impulse response $h_t$ of the system. This derivation is useful to highlight the region of convergence of the transfer function.
\begin{equation}\label{eq:impulse_response_tf}
	\begin{aligned}
		H(z) &= h_0 + \sum_{t=1}^\infty z^{-t} \sC \sA^{t-1} \sB  && \text{$h_0$ is pulled out via $h_0 z^0=h_0$}\\
		&= h_0 + \sC\left[\sum_{t=1}^\infty z^{-t} \sA^{t-1} \right]\sB && \text{multiplication distributes over sum.}\\
		&= h_0 + z^{-1}\sC\left[\sum_{t=1}^\infty z^{-(t-1)}\sA^{t-1}\right]\sB  && \text{multiply by $z/z$}\\
        &= h_0 + z^{-1}\sC\left[\sum_{t=0}^\infty (z^{-1}\sA )^t\right]\sB && \text{change of index and collect like terms}\\
	\end{aligned}
\end{equation}
We look at the convergence of the series $\sum_{t=0}^\infty \|z^{-1}\sA \|^t_2$. We have 
\begin{equation*}
    \begin{aligned}
        \|z^{-1}\sA \|_2 &\leq \|z^{-1}\|_2\|\sA\|_2\\
        &= \|r^{-1}e^{-i\omega}\|_2\|\sA\|_2 && \text{using $z\coloneqq re^{i\w}\in\bC,~r,\w\in\R$}\\
        &\leq r^{-1}\|\sA\|_2 = r^{-1}\rho(\sA)
    \end{aligned}
\end{equation*}
The series converges to $1 / (1 - r^{-1}\rho(\sA))$ if and only if $r^{-1}\rho(\sA)<1$ i.e. for $r>\rho(\sA)$. Thus, in the exterior of the disk with radius $\rho(\sA)$, $\bD_{\rho(\sA)} \coloneqq \{z\in\bC : |z|>\rho(\sA)\}$, $\sum_{t=0}^\infty (z^{-1}\sA)^t$ converges to $(\sI - z^{-1}\sA)^{-1}$ and 
\begin{equation*}
    z\in\bD_{\rho(\sA)}~\Rightarrow~H(z) = h_0 + z^{-1}\sC(\sI - z^{-1}\sA)^{-1}\sB =  h_0 + \sC(z\sI - \sA)^{-1}\sB
\end{equation*}
The transfer function $H(z)=h_0 +\sC(z\sI -A)^{-1}\sB$ of a stable lumped discrete-time system is defined outside the disc in the complex plane that encloses all the eigenvalues of $\sA$.
\paragraph{Invariance of the transfer function} $H(z)$ as defined in \eqref{eq:ssm_tf} is a \textit{proper}\footnote{i.e. such that the denominator's order is not less than the numerator's one.} rational function of $z$. In case $h_0=0$, $H(z)$ is strictly proper and the denominator is monic:
\begin{equation}\label{eq:ssm_tf_rational}
    H(z) = \frac{b_1z^{-1} + \cdots + b_dz^{-d}}{1 + a_1z^{-1} + \cdots + a_dz^{-d}}
\end{equation}
Specifically, the denominator could be derived from $\sA$ with ${\sf det}(z\sI - \sA)$, and the numerator is ${\sf det}(z\sI - \sA + \sB\sC) + {\sf det}(z\sI - \sA)$. We provide a detailed derivation below in Section \ref{sec:ssm2tf}. While state-space representation involves the analysis and synthesis of model matrices $\sA, \sB, \sC$, the transfer function is entirely characterized by the coefficients $a = (a_n)_{n=1}^d,~b = (b_n)_{n=1}^d$ of numerator and denominator polynomials. Notably, the transfer function is an \textit{invariant} of the system: if we apply a change of variables to the state, the transfer function remains unchanged.
\begin{tcolorbox}[enhanced, frame hidden, sharp corners, boxsep=0pt, before skip=0pt, after skip=0pt]
\begin{lemma}\label{prop:tf_invariant}
    Coefficients $a,b$ are \textbf{invariant} under any invertible change of variables.
\end{lemma}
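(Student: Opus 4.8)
The plan is to prove the stronger fact that the entire transfer function $H(z)$ is invariant under a change of state variables, and then read off invariance of the coefficient tuples $a,b$ from the uniqueness of the canonical representation \eqref{eq:dtssm_tf}. A change of variables is specified by an invertible matrix $T\in\bC^{d\times d}$ acting on the state as $\tilde x_t = T x_t$. Substituting $x_t = T^{-1}\tilde x_t$ into the recurrence \eqref{eq:dtssm} and relabeling yields the equivalent realization $\tilde\sA = T\sA T^{-1}$, $\tilde\sB = T\sB$, $\tilde\sC = \sC T^{-1}$, with passthrough unchanged, $\tilde h_0 = h_0$.

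First I would compute the transfer function of the transformed system from \eqref{eq:ssm_tf}. The key algebraic identity is $z\sI - T\sA T^{-1} = T(z\sI - \sA)T^{-1}$, which follows from $z\sI = T(z\sI)T^{-1}$ and linearity; inverting gives $(z\sI - \tilde\sA)^{-1} = T(z\sI - \sA)^{-1}T^{-1}$. The telescoping cancellation of the inner $T^{-1}T$ and $TT^{-1}$ pairs then gives
\begin{equation*}
\tilde H(z) = h_0 + \sC T^{-1}\,T(z\sI - \sA)^{-1}T^{-1}\,T\sB = h_0 + \sC(z\sI - \sA)^{-1}\sB = H(z),
\end{equation*}
so $\tilde H \equiv H$ on the common region of convergence (the exterior of the disc of radius $\rho(\sA)$, which is itself unchanged since $\sA$ and $T\sA T^{-1}$ are similar and share a spectrum).

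It remains to pass from invariance of the function to invariance of the coefficients. For the denominator I would invoke that, by the derivation in \S\ref{sec:tf_lumped}, its coefficients $a$ are exactly those of the characteristic polynomial $\det(z\sI - \sA)$, a classical similarity invariant, so $a$ is immediately fixed; with the denominator pinned down, the numerator coefficients $b$ are then determined by $H$ (via $H\cdot{\sf den}$) and hence invariant as well. The main point demanding care is this last uniqueness step: the map $(a,b)\mapsto H$ is only injective once we impose the monic normalization of the denominator in \eqref{eq:dtssm_tf} and fix the order at $d$ (equivalently, assume a minimal/coprime representation); otherwise common pole–zero factors would allow different coefficient tuples to encode the same $H$. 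I would state this normalization explicitly so that reading the coefficients off the invariant $H$ is unambiguous — everything else is routine matrix algebra.
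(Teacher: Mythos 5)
Your proof is correct and follows essentially the same route as the paper's: both compute the transformed realization $(\sK\sA\sK^{-1},\sK\sB,\sC\sK^{-1},h_0)$ and use the similarity identity $(z\sI-\sK\sA\sK^{-1})^{-1}=\sK(z\sI-\sA)^{-1}\sK^{-1}$ to telescope and conclude $\hat H\equiv H$. Your additional remark on reading the coefficients off the invariant function (monic normalization, denominator as the characteristic polynomial, a similarity invariant) is a careful finishing step that the paper leaves implicit, but it does not change the nature of the argument.
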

\end{tcolorbox}
\proof The proof can be found in \cite[pp.95]{chen1984linear} and follows from the definition of \textit{equivalence transformation}. 
Consider the state-space matrices of under change of variables $\hat x = \sK x$,
$$
\hat \sA = \sK\sA\sK^{-1},\quad\hat \sB = \sK\sB,\quad \hat \sC = \sC\sK^{-1},\quad \hat h_0 = h_0.
$$
The resulting transfer function $H(z)$ can then be computed as 
$$
    \hat H(z) = \hat \sC(z\sI - \hat \sA)^{-1}\hat \sB + \hat h_0 = \sC\sK^{-1}[\sK(z\sI - \sA)\sK^{-1}]^{-1}\sK\sB + h_0 = H(z)
$$
\endproof
\subsection{Truncated Transfer Functions}\label{sec:truncated_tf}
In the case of generic \textit{truncated} (finite) impulse response filters, such that $h_t=0$ for all $t$ greater than a certain value $L$ (which we refer to as the \textit{length} of the filter), the transfer function is simply a polynomial in the complex variable $z$ of order $L$, i.e.
\begin{equation}\label{eq:fir_tf}
    H(z) = \sum_{t=0}^\infty h_t z^{-t} = \sum_{t=0}^{L} h_t z^{-t} = h_0 + h_1z^{-1} + \cdots + h_{L}z^{- L}
\end{equation}
In case the filter is generated by a finite dimensional (lumped parameters) system, i.e. $h_t = \sC \sA^{t-1} \sB$ $t=1,\dots,L$, then \eqref{eq:fir_tf} can still be represented exactly by a rational function of order $d$. 
\begin{tcolorbox}[enhanced, frame hidden, sharp corners, boxsep=0pt, before skip=0pt, after skip=0pt]
\begin{lemma}[Truncated rational transfer functions]
    Consider the $L$-truncated impulse response $h_t\in\ell_2(\bN)$ of a lumped-parameter filter $(\sA,\sB,\sC,h_0)$,
    \begin{equation*}
        h_t = 
        \begin{cases}
            h_0 & t=0\\
            \sC\sA^{t-1}\sB & 1\leq t\leq L\\
            0 & t> L
        \end{cases}.
    \end{equation*}
    Then its truncated transfer function is
    \begin{equation*}
        H_L(z) = \cZ\{h\}(z) = h_0 + \sC(\sI - z^{-L}\sA^L)(z\sI - \sA)^{-1}\sB   
    \end{equation*}
\end{lemma}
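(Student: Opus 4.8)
The plan is to compute the $\cZ$-transform of the truncated sequence directly and to recognize the resulting finite sum as a matrix geometric series. Since $h_t$ vanishes for $t > L$, the transform collapses to a finite sum $H_L(z) = \sum_{t=0}^{L} h_t z^{-t} = h_0 + \sum_{t=1}^{L} \sC \sA^{t-1}\sB\, z^{-t}$, where the $h_0$ term is separated out because it is the only Markov parameter not carrying a factor of $\sA$. Everything then reduces to evaluating the matrix-valued sum $\sum_{t=1}^{L}\sA^{t-1}z^{-t}$ in closed form.

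First I would reindex with $k = t-1$ to isolate the geometric structure, writing $\sum_{t=1}^{L}\sA^{t-1} z^{-t} = z^{-1}\sum_{k=0}^{L-1}(z^{-1}\sA)^k$. The inner sum is a finite geometric series in the single matrix $M \coloneqq z^{-1}\sA$, which trivially commutes with itself, so the standard identity $\sum_{k=0}^{L-1} M^k = (\sI - M^L)(\sI - M)^{-1}$ applies whenever $\sI - M$ is invertible. Substituting $M = z^{-1}\sA$ gives $\sum_{k=0}^{L-1}(z^{-1}\sA)^k = (\sI - z^{-L}\sA^L)(\sI - z^{-1}\sA)^{-1}$.

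The remaining step is to put the resolvent into the stated form. Factoring $\sI - z^{-1}\sA = z^{-1}(z\sI - \sA)$ yields $(\sI - z^{-1}\sA)^{-1} = z(z\sI - \sA)^{-1}$, and the leading $z^{-1}$ introduced by the reindexing cancels this $z$ exactly, so $\sum_{t=1}^{L}\sA^{t-1}z^{-t} = (\sI - z^{-L}\sA^L)(z\sI - \sA)^{-1}$. Sandwiching this between $\sC$ and $\sB$ and adding back $h_0$ reproduces the claimed expression $H_L(z) = h_0 + \sC(\sI - z^{-L}\sA^L)(z\sI - \sA)^{-1}\sB$.

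The only subtlety worth addressing is the domain of validity. The resolvent manipulation requires $z \notin \operatorname{spec}(\sA)$, i.e. $(z\sI - \sA)$ invertible, yet $H_L$ is genuinely a polynomial in $z^{-1}$ of degree $L$ and hence entire away from $z=0$. This is consistent because the finite geometric identity $(\sI - M)\sum_{k=0}^{L-1}M^k = \sI - M^L$ is a polynomial identity valid for \emph{every} $M$; consequently both sides of the claimed formula are rational functions of $z$ that agree off the spectrum of $\sA$, and the apparent poles of the right-hand side at the eigenvalues of $\sA$ are removable, the factor $\sI - z^{-L}\sA^L$ supplying the cancelling zeros. I would also remark that every matrix appearing here is a polynomial in $\sA$, so all the factors commute and the sum may be treated exactly as a scalar geometric series, which is what licenses the closed form.
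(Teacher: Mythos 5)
Your proof is correct and follows essentially the same route as the paper's: expand the $\cZ$-transform as a finite sum, recognize the partial matrix geometric (Neumann) series $\sum_{k=0}^{L-1}(z^{-1}\sA)^k = (\sI - z^{-L}\sA^L)(\sI - z^{-1}\sA)^{-1}$, and absorb the leading $z^{-1}$ into the resolvent to obtain $(z\sI-\sA)^{-1}$. Your closing remark on removable singularities at the eigenvalues of $\sA$ is a welcome clarification the paper omits, but it does not change the substance of the argument.
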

\end{tcolorbox}
\proof
    By definition of $z$-transform we have
    \begin{equation}\label{eq:trunc_tf_lem}
        \begin{aligned}
            H_T(z) &= \sum_{t=0}^\infty h_t z^{-t} =h_0 + \sum_{t=1}^{L} z^{-t}\sC \sA^{t-1}\sB\\
            &=h_0 + \sC \left[\sum_{t=1}^L z^{-t}\sA^{t-1}\right]\sB =h_0 + z^{-1}\sC\left[\sum_{t=0}^{L-1} (z^{-1}\sA )^t\right]\sB
        \end{aligned}
    \end{equation}
    The sum $\sum_{t=0}^{L-1} (z^{-1}\sA )^t$ is a partial Neumann series and can be manipulated as follows.
    \begin{equation*}
         \begin{aligned}
             \sum_{t=0}^{L-1} (z^{-1}\sA )^t (\sI - z^{-1}\sA) &= \sum_{t=0}^{L-1} (z^{-1}\sA )^t - \sum_{t=0}^{L-1} (z^{-1}\sA )^{t+1} \\
             &= \sI - (z^{-1}A)^{L}.
         \end{aligned}
    \end{equation*}
    Thus, 
    \begin{equation*}
        \sum_{t=0}^{L-1} (z^{-1}\sA )^t = (\sI - z^{-L}\sA^L)(\sI - z^{-1}\sA)^{-1},
    \end{equation*}
    which plugged in \eqref{eq:trunc_tf_lem} gives $H_L(z) = h_0 + \sC(\sI - z^{-L}\sA^L)(z\sI - \sA)^{-1}\sB$, proving the result.
\endproof
Because of truncation, evaluating the transfer function $H_L(z)$ on the $L$ \textit{roots of unity} $z = e^{i\w_k}$, $w_k = 2\pi {k}/{{T}}$ for $k=0,\dots L$ gives the length-$L$ discrete Fourier transform (DFT) of the filter:
\begin{equation*}
    \bar H_{k} \coloneqq H_L(e^{i\w_k}) = \sum_{t=0}^{L-1} h_t e^{-i2\pi k/L},\quad k = 0,\dots, L-1.
\end{equation*}
In practice, this means that $\bar H\in\bC^{L}$ is the $\sf FFT$ of $h$, $\bar H = {\sf FFT}_L[h]$. If we can find an efficient and stable algorithm to evaluate $\bar H$ from the system matrices $(\sA,\sB,\sC,h_0)$, then the $\sf FFT$-based convolution of truncated filter with an input sequence $u\in\R^{L}$ can be evaluated in $\tilde O(L)$ time. 

\paragraph{Reparametrization}
Assume training a {$\sf LCSM$} equipped with SSM filters with input/target sequences to be all of length $L$ (smaller sequences can be padded with zeros to the maximum length). Thus, for training purposes, we are only interested in evaluating $\bar H$ for the $\sf FFT$-based convolution.

The truncated transfer function $H_L$ is equal to the original one with a correction term $\sI - z^{-L}\sA^L$ on the numerator polynomial. As already noted in $\sf S4$ \cite{gu2021efficiently}, $z^{-L}$ is conveniently equal to one on the roots of unity, $z^{i\w_k L} = e^{-i2\pi k} = 1$ for all $k=0,\dots,L-1$. Hence, the correction term due to truncation becomes constant: $H_k = \sC(\sI - \sA^L)(\exp({-i2\pi k/L})\sI - \sA)^{-1}\sB$; in DFT domain the truncated filter behaves as the infinitely long one with a perturbed $\sC$ matrix
\begin{equation*}
    \bar \sC = \sC - \sC\sA^L
\end{equation*}
If --as assumed-- the SSM is stable $\rho(\sA)<1$, $(i)$ the transfer function is defined on the unit circle, term $\sC\sA^L$ will go to zero exponentially fast as $L\rightarrow \infty$ and $\bar \sC = \sC$ (as expected). As advised in \cite{gu2021efficiently}, it is desirable to parametrize directly $\bar \sC$; the expensive computation of the correction term $\sC(\sI - \sA^L)$ is never carried out during training. Instead, the real $\sC$ matrix can be retrieved for recurrent inference by inverting the correction term $\sC = \bar \sC (\sI - \sA^L)^{-1}$, always invertible for stable systems although possibly ill conditioned by eigenvalues too close to the stability margin (the unit circle). 
\subsection{From Transfer Function to State-Space}\label{sec:tf2ssm}
Suppose the coefficients of the numerator and denominator polynomials of a proper transfer function $H$ is given:
\begin{equation}\label{eq:ssm_tf_dfpath}
    H(z) = \frac{b_0 + b_1z^{-1} + ~\cdots~ + b_dz^{-d}}{1 + a_1z^{-1} + ~\cdots~ + a_dz^{-d}}.
\end{equation}
A state-space representation of the form \eqref{eq:dtssm} can be rapidly realized in two steps:
\begin{enumerate}
    \item \textbf{Get delay-free path} From \eqref{eq:ssm_tf_dfpath} we first notice that the \textit{bias} term $h_0$ is $h_0 = b_0$. We thus want to isolate $b_0$ from the rest of the numerator. This can be obtained via long division (see \S\ref{sec:long_division}) and results in 
    \begin{equation}\label{eq:delay_free_tf}
        \begin{aligned}
            H(z) &=  \frac{\beta_1 z^{-1} + ~\cdots~ + \beta_N z^{-d}}{1 + a_1z^{-1} + ~\cdots~ + a_dz^{-d}} + b_0, \quad \beta_n = b_n - b_0a_n
        \end{aligned}
    \end{equation} 
\end{enumerate}    
\begin{itemize}
    \item[2.] \textbf{Get state-space matrices} Given the transfer function $H(z)$ with the isolated pass-through coeffient $b_0$ as in \eqref{eq:delay_free_tf}, we can construct the state-space matrices by \textit{companion} canonical realization:
    \begin{equation}\label{eq:ssm_canon}
        \left[
            \begin{array}{c|c}
            \sA & \sB\\
            \hline
            \sC & h_0
            \end{array} 
        \right]
        =
        \left[
            \begin{array}{c|c}
                \begin{matrix}
                    -a_1 & -a_2 &\cdots & -a_{d-1} &- a_d \\
                    1 & 0 & \cdots & 0 & 0 \\
                    0 & 1 & \cdots & 0 & 0 \\
                    \vdots & \vdots & \ddots & \vdots & \vdots\\
                    0 & 0 & \cdots& 1 & 0
                \end{matrix} & \begin{matrix}1 \\ 0 \\ 0 \\ \vdots \\ 0\end{matrix}\\
                \hline
                \begin{matrix}~~\beta_1 & ~~\beta_2 & ~\cdots~ & ~~\beta_{d-1} & ~~\beta_d\end{matrix} & b_0
            \end{array}
        \right]
    \end{equation}
\end{itemize}
Details on the complete \textit{a la} \cite{chen1984linear} derivation can be found in \S\ref{sec:state_space_construction}.
A linear system with finite-dimensional state can be equivalently characterized: by its state-space matrices $(\sA,\sB,\sC,h_0)$, by its impulse response function $h$, or by the coefficients $a,b$ (or $\beta$) of the transfer function. A fourth representation is its \textit{linear-constant-coefficients difference equation} form
\begin{equation*}
y_t = \sum_{j=0}^d b_j u_{t-j} -  \sum_{n=1}^d a_j y_{t-j},
\end{equation*}
typically used in signal processing literature in the theory of \textit{infinite impulse response} filters (see \cite{oppenheim1999discrete}) and known, in the context of system identification of error-in-variables models, as \textit{auto-regressive moving-average} filters \cite{ljung1998system, guidorzi1991certain}.
\subsubsection{Isolating the $h_0$-term from Transfer Function by Long division}\label{sec:long_division}
If the rational transfer function $H(z)$ accounts for the $h_0$ term, then it is simply proper (order of numerator equals the order of denominator), $h_0$ is necessarily $h_0=b_0$ (the \textit{delay-free} path). Given the transfer function in this form, we can isolate the $b_0$ term and the strictly rational term of \eqref{eq:ssm_tf_rational} by long division. We start by expanding the fraction as 
\begin{equation*}
	H(z) = \frac{q(z)}{p(z)} = \frac{b_0}{p(z)} + \frac{b_1z^{-1} + ~\cdots~ + b_dz^{-d}}{p(z)}.
\end{equation*}
and 
\begin{equation*}
	\frac{b_0}{p(z)} = \frac{b_0z^d}{z^d + a_1z^{d-1} + ~\cdots~ + a_d}
\end{equation*}
We then use the long division method to compute $b_0/p(z)$:
\begin{equation*}
	\arraycolsep=1pt
	\renewcommand\arraystretch{1.2}
	\begin{array}{*1l @{\hskip\arraycolsep}c@{\hskip\arraycolsep} *{11}l} 
			&          & b_0 & & & & & & & & & &  \\
	\cline{2-9}
	z^d + a_1z^{d-1} + ~\cdots~ + a_d & \longdiv & b_0 z^d &   &   &  &      & &      &          \\
			&          & b_0 z^d & + & b_0 a_1z^{d-1} & + &  \cdots & +  & b_0a_d     &        \\
	\cline{3-9}
			&          &   &  - & b_0 a_1z^{d-1} & - &  \cdots &  - &  b_0a_d    & ~~~\text{(reminder)}     \\
	\end{array}
\end{equation*}
to finally get
\begin{equation*}
	\begin{aligned}
		H(z) &= b_0 - \frac{b_0 a_1z^{d-1} + ~\cdots~ + b_0a_d}{z^d + a_1z^{d-1} + ~\cdots~ + a_d} + \frac{b_1z^{-1} + ~\cdots~ + b_dz^{-d}}{p(z)}\\
		&= b_0 + \frac{(b_1 - b_0a_1)z^{-1} + ~\cdots~ + (b_d - b_0a_d)z^{-d}}{1 + a_1z^{-1} + ~\cdots~ + a_dz^{-d}}
	\end{aligned}
\end{equation*}
Note that the coefficients $b_n$ in \eqref{eq:ssm_tf_rational} correspond to $b_n - b_0 a_n$ in \eqref{eq:ssm_tf_dfpath}, $b_n\leftarrow b_n - b_0 a_n$. It is indifferent to parameterize the coefficients of the transfer function in either forms. However, if we choose the simply proper representation \eqref{eq:ssm_tf_dfpath}, we need to apply the derived correction factor to the numerator coefficients when we separate the $h_0$ term and strictly proper part of $H(z)$.
\subsubsection{Construction of the State-Space from the Transfer Function}\label{sec:state_space_construction}
\paragraph{Chen's derivation}
The derivation is based on the steps reported for the continuous-time \textit{multi-input multi-output} case in \cite{chen1984linear}. First, we define a pseudo-state $v$ such that 
\begin{equation}\label{eq:chen_pseudo_state}
	p(z)V(z) = U(z)\quad \Leftrightarrow \quad V(z) = \frac{1}{p(z)}U(z).
\end{equation}
Then, we define the state $x_t\coloneqq(x_t^{1},\dots,x_t^{d})\in\R^{d}$ as 
\begin{equation}\label{eq:chen_state}
	x_t = (v_{t-1}, v_{t-2}, \cdots, v_{t-d}) \quad \Leftrightarrow \quad \cZ\{x\}(z) = X(z) = \begin{bmatrix}
		z^{-1} \\
		\vdots \\
		z^{-d}
	\end{bmatrix}V(z).
\end{equation}
From \eqref{eq:chen_pseudo_state} we have
\begin{equation*}
	\begin{aligned}
		V(z) + a_1z^{-1}V(z) + \cdots + a_dz^{-d}V(z) = U(z) &~~ \Leftrightarrow\\
		V(z) = -a_1z^{-1}V(z) - \cdots - a_dz^{-d}V(z) + U(z) &~~ \Leftrightarrow\\
		v_t = -a_1v_{t-1} - \cdots - a_dv_{t-d} + u_t &~~ \Leftrightarrow && \text{time-delay prop. of $\cZ$-transform}\\
		x^1_{t+1} = -a_1x^1_t - \cdots - a_dx^d_t + u_t &~~ \Leftrightarrow && \text{by def. of state \eqref{eq:chen_state}}.\\
	\end{aligned}
\end{equation*}
Thus, we have the overall recurrence
\begin{equation*}
	\begin{aligned}
		x^1_{t+1} &= -a_1x^1_t - \cdots - a_dx^d_t + u_t \\
		x^2_{t+1} &= x^1_t \\
		&\vdots \\
		x^d_{t+1} &= x^{d-1}_t \\
	\end{aligned}
\end{equation*}
which can be written in matrix form as
\begin{equation*}
	\begin{aligned}
		x_{t+1} &=
		\begin{bmatrix}
			-a_1 & -a_2 & \cdots & -a_N \\
			1 & 0 & \cdots & 0 \\
			0 & 1 & \cdots & 0 \\
			\vdots & \vdots & \ddots & \vdots \\
			0 & 0 & \cdots & 1
		\end{bmatrix}
		x_t + 
		\begin{bmatrix}
			1 \\
			0 \\
			\vdots \\
			0\\
			0
		\end{bmatrix} u_t
	\end{aligned}
\end{equation*}
The output spectrum is then given by 
\begin{equation*}
	\begin{aligned}
		Y(z) &= H(z) U(z) =  \frac{q(z)}{p(z)}U(z) + b_0U(z)\\
				&= q(z)V(z) + b_0U(z) &&\text{by def. of $V(z)$}.
	\end{aligned}	
\end{equation*}
Therefore,
\begin{equation*}
	\begin{aligned}
		Y(z) &= q(z)V(z) + b_0U(z) = \begin{bmatrix}
			\beta_1 & \beta_2 & \cdots & \beta_N
		\end{bmatrix}
		\begin{bmatrix}
			z^{-1} \\
			z^{-2} \\
			\vdots \\
			z^{-d}
		\end{bmatrix}V(z)+ b_0U(z)\\
		& = \begin{bmatrix}
			\beta_1 & \beta_2 & \cdots & \beta_d
		\end{bmatrix}X(z) + b_0U(z)
	\end{aligned}
\end{equation*}
and the output equation in time-domain is given by
\begin{equation*}
	y_t = \begin{bmatrix}
		\beta_1 & \beta_2 & \cdots & \beta_d
	\end{bmatrix}x_t + b_0u_t.
\end{equation*}
yielding state-space matrices \eqref{eq:ssm_canon}.
\subsection{From State-Space to Transfer Function}\label{sec:ssm2tf}
We detail an implementation oriented method to compute the coefficients $(a_n)_{n=1}^d,(b_n)_{n=0}^d$ of a SSM's transfer function. Recall that 
\begin{equation}\label{eq:tf_simple}
    H(z) = \sC[z\sI - \sA]^{-1}\sB + h_0 = \frac{\sC\adj(z\sI - \sA)\sB + {\sf det}(z\sI - \sA)h_0}{{\sf det}(z\sI - \sA)}
\end{equation}
Hence, the denominator coefficients $(a_n)_{n=1}^d$ are simply the coefficients of the characteristic polynomial of matrix $\sA$. They can be easily obtained by 1. computing the eigenvalues of $\sA$ and 2. calculating the coefficients of the polynomial whose roots are such eigenvalues. On the other hand, the numerator apparently involves more complex symbolic manipulation. This can be simplified recalling a classic matrix-determinant identity:
\begin{tcolorbox}[enhanced, frame hidden, sharp corners, boxsep=0pt, before skip=0pt, after skip=0pt]
\begin{lemma}[\cite{sandberg1963theory}]\label{lemma:mat_det_id}
    Let $\sM$, $\sB$, and $\sC$ respectively denote matrices of orders $d\x d$, $d\x 1$, and $1\x d$. Then,
    \begin{equation*}
        {\sf det}(\sM + \sB\sC) = {\sf det}(\sM) + \sC \adj(\sM)\sB.   
    \end{equation*}
\end{lemma}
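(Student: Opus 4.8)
The plan is to exploit multilinearity of the determinant in its columns; this handles the singular and nonsingular cases uniformly and sidesteps any limiting argument. Write $\sC = (c_1,\dots,c_d)$ and let $\sM_{:,j}$ denote the $j$-th column of $\sM$. Because $\sB\sC$ is rank one, its $j$-th column is $c_j\sB$, so the $j$-th column of $\sM+\sB\sC$ is $\sM_{:,j} + c_j\sB$. First I would expand ${\sf det}(\sM+\sB\sC)$ by linearity in each of the $d$ columns, producing $2^d$ terms, each obtained by selecting in every column $j$ either the original column $\sM_{:,j}$ or the perturbation $c_j\sB$.

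The key simplification is that any term selecting the perturbation $c_j\sB$ in two or more columns has two columns proportional to $\sB$, and a determinant with two proportional columns vanishes. Hence only two families survive: the single term selecting $\sM_{:,j}$ in every column, which contributes ${\sf det}(\sM)$; and, for each $j$, the term selecting $c_j\sB$ in column $j$ and the original columns elsewhere, which contributes $c_j D_j$, where $D_j$ is the determinant of $\sM$ with its $j$-th column replaced by $\sB$. I would then cofactor-expand $D_j$ along its $j$-th column to obtain $D_j = \sum_i b_i C_{ij}$, with $b_i = \sB_i$ and $C_{ij}$ the $(i,j)$ cofactor of $\sM$. Summing gives ${\sf det}(\sM+\sB\sC) = {\sf det}(\sM) + \sum_{i,j} c_j C_{ij} b_i$.

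The final step is to recognize the double sum as the quadratic form $\sC\,\adj(\sM)\,\sB$. Since the adjugate is the transpose of the cofactor matrix, $\adj(\sM)_{ji} = C_{ij}$, and therefore $\sC\,\adj(\sM)\,\sB = \sum_{j,i} c_j \adj(\sM)_{ji} b_i = \sum_{i,j} c_j C_{ij} b_i$, which matches exactly, completing the identity. The only genuine obstacle is the index bookkeeping here: one must be careful that $\adj(\sM)_{ji}$ equals $C_{ij}$ and not $C_{ji}$, so that the scalar assembled from cofactor expansion coincides with the stated quadratic form.

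As a faster alternative (which I would relegate to a remark), one may assume $\sM$ invertible, apply the standard rank-one matrix-determinant lemma ${\sf det}(\sM+\sB\sC) = {\sf det}(\sM)\,(1 + \sC\sM^{-1}\sB)$, and use $\adj(\sM) = {\sf det}(\sM)\,\sM^{-1}$ to rewrite the right-hand side as ${\sf det}(\sM) + \sC\,\adj(\sM)\,\sB$. One then extends to singular $\sM$ by noting that both sides are polynomials in the entries of $\sM$ that agree on the dense set of invertible matrices, hence agree everywhere. I prefer the multilinearity argument as the primary proof since it is self-contained and avoids invoking invertibility altogether.
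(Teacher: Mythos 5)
Your proposal is correct. Note, however, that the paper does not prove this lemma at all: it is stated as a known classical identity and attributed to Sandberg (1963), then immediately applied to rewrite the transfer function numerator. So your argument is not an alternative to the paper's proof so much as a self-contained replacement for a citation. Both of your routes are sound. The multilinearity argument is the cleaner of the two: expanding $\det(\sM+\sB\sC)$ column-by-column, killing every term that picks the rank-one perturbation in two or more columns (two columns proportional to $\sB$), and cofactor-expanding the $d$ surviving single-replacement determinants gives exactly $\det(\sM)+\sum_{i,j}c_j C_{ij} b_i$; your index bookkeeping is right, since $\adj(\sM)_{ji}=C_{ij}$ (adjugate equals transpose of the cofactor matrix), so the double sum is $\sC\,\adj(\sM)\,\sB$. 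It also covers singular $\sM$ with no extra work, which matters here because the lemma is applied with $\sM = z\sI-\sA$, a matrix that degenerates at the eigenvalues of $\sA$. Your fallback route (matrix determinant lemma for invertible $\sM$, then extension by polynomial identity on the dense set of invertible matrices) is equally valid and is the more common textbook derivation, but it requires the density argument as an extra step, so your preference for the multilinear expansion as the primary proof is well placed.
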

\end{tcolorbox}
Applying Lemma~\ref{lemma:mat_det_id} to \eqref{eq:tf_simple} we obtain 
\begin{equation*}
    H(z) = \frac{{\sf det}(z\sI - \sA + \sB\sC) + {\sf det}(z\sI - \sA)(h_0 - 1)}{{\sf det}(z\sI - \sA)}.
\end{equation*}
Let ${\sf poly}(r)$ denote the coefficients of the polynomials with roots $r=(r_1,\dots,r_d)$. Then $a = {\sf poly}({\sf eig}(\sA))$.
Since $\sA$ and $\sA - \sB\sC$ are of equal dimension, their characteristic polynomials have equal order and therefore
\begin{equation*}
    b = {\sf poly}({\sf eig}(\sA - \sB\sC)) + {\sf poly}({\sf eig}(\sA))(h_0 - 1)
\end{equation*}
\begin{lstlisting}[language=python, caption={State-space $\to$ transfer function conversion code}, label={lst:recurrence}]
def get_tf_from_ss(A,B,C,h0):
    a = poly(eig(A))
    b = poly(eig(A - outer(B,C))) + (h0-1)*a
    return a, b
\end{lstlisting} 
\subsection{State-Space Representation of Truncated Filters.}\label{sec:fir_to_ssm}
A truncated filter $h_0,\dots,h_L$ -- as the ones found in any standard convolutional neural network -- can be represented by a $L$-dimensional companion canonical SSM. The filter's transfer function $H(z) = h_0 + h_1z^{-1} + \cdots + h_Lz^{-L}$ is polynomial, i.e. a rational function with the  denominator's coefficients set to zero. Following the canonical realization process detailed in Section~\ref{sec:tf2ssm}, the truncated filter has state-space form:
\begin{equation*}
	\begin{aligned}
		x_{t+1} &=
		\begin{bmatrix}
			0 & 0 & \cdots & 0 & 0\\
			1 & 0 & \cdots & 0 & 0\\
			0 & 1 & \cdots & 0 & 0\\
			\vdots & \vdots & \ddots & \vdots & \vdots\\
			0 & 0 & \cdots & 1 & 0 
		\end{bmatrix}
		x_t + 
		\begin{bmatrix}
			1 \\
			0 \\
			\vdots \\
			0\\
			0
		\end{bmatrix} u_t \\
         y_t &= \begin{bmatrix}
		h_1 & h_2 & \cdots & h_L
	\end{bmatrix}x_t + h_0u_t.    
	\end{aligned}
\end{equation*}
If $x_0 = \0_L$ and $u_t=0$ for negative $t$, then at each $t>0$ the state is a shifted copy of the input sequence $x_t = (u_{t-1}, \dots, u_{t-L})\in\R^L$. Nonetheless, the asymptotic complexity of computing one recurrent step is $\cO(L)$ as it requires only a shift operation and a length-$L$ dot product
\begin{equation}
	\begin{aligned}
		x^1_{t+1} &= u_t\\
		x^{2:L}_{t+1} &= {\sf shift}(x_t)\\ 
		y_t &= \langle{h}_{1:L}, x_t\rangle + {h}_0 u_t.
	\end{aligned}
\end{equation}
The memory footprint is also $\cO(L)$. In \cite{dao2022hungry} it is proposed the use of shift-type SSMs to parametrize one of the filters of the $\sf H3$ block. 
\subsection{Efficient Computation of State-Space Models}
\subsubsection{Fast Evaluation of the Transfer Function}
Computing $H(z)$ at any point $z\in\bC$ concerns the evaluation of the $d$-order polynomial of numerator and denominator,
\begin{equation*}
    H(z) = \frac{q(z)}{p(z)} = \frac{\sum_{n=1}^d b_n z^{-n}}{1 + \sum_{n=1}^d a_n z^{-n}}
\end{equation*}
In practice, we are mainly interested in a fast algorithm that allows computing $H$ on the $L$ roots of unity to obtain the DFT of the filter. The DFT of the filter can be then readily used to perform a $\sf FFT$-based convolution with a length-$L$ input sequence $u$ or to recover the impulse response function via inverse DFT. We prove the following:
\begin{tcolorbox}[enhanced, frame hidden, sharp corners, boxsep=0pt, before skip=0pt, after skip=0pt]
\begin{lemma}\label{lemma:tf_to_irf}
    Given the coefficients $a$, $b$ of the transfer function, the frequency and impulse response of the filter can be evaluated in $\tilde\cO(L)$ time.
\end{lemma}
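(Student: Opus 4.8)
The plan is to reduce both evaluations to fast Fourier transforms, exploiting the elementary fact that sampling a polynomial at the $L$-th roots of unity is exactly a length-$L$ DFT. Write $H = q/p$ with $q(z) = \sum_{n=1}^d b_n z^{-n}$ and $p(z) = 1 + \sum_{n=1}^d a_n z^{-n}$, and fix the sampling points $z_k = e^{i\omega_k}$, $\omega_k = 2\pi k/L$, for $k = 0,\dots,L-1$. First I would compute the frequency response. Since $d < L$, zero-pad the coefficient vectors to length $L$, setting $\bar b = (0, b_1, \dots, b_d, 0, \dots, 0)$ and $\bar a = (1, a_1, \dots, a_d, 0, \dots, 0)$, and observe that $q(z_k) = \sum_n \bar b_n z_k^{-n} = \FFT_L[\bar b]_k$ and likewise $p(z_k) = \FFT_L[\bar a]_k$, since both are discrete Fourier transforms of the padded coefficients. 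Two length-$L$ FFTs cost $\cO(L\log_2 L)$, and the frequency response is recovered by the pointwise quotient $\bar H_k = \FFT_L[\bar b]_k / \FFT_L[\bar a]_k$, an $\cO(L)$ operation. The division is well defined because stability $\rho(\sA) < 1$ places all poles strictly inside the unit disk, so $p(z_k) \neq 0$ on $|z| = 1$. This yields $\bar H$ in $\tilde\cO(L)$ time.

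Next I would obtain the impulse response by a single inverse FFT, $\hat h = \IFFT_L[\bar H]$, again $\cO(L\log_2 L)$. The one subtlety worth arguing carefully is aliasing: sampling the transfer function on the unit circle and inverting recovers the periodized sequence $\sum_{m\ge 0} h_{t+mL}$ rather than $h_t$ itself. This is precisely the phenomenon quantified by the truncated transfer function of Section~\ref{sec:truncated_tf}: on the roots of unity $z_k^{-L} = 1$, so $H_L$ and the exact $H$ agree at the $z_k$, and the inverse transform returns the length-$L$ filter exactly when $h$ is treated as $L$-truncated. For a genuinely infinite but stable response, the exponential decay $|h_t| \le C\,\rho(\sA)^t$ bounds the aliasing tail by $\cO\!\big(\rho(\sA)^L/(1-\rho(\sA))\big)$, negligible once $L$ exceeds the effective support length of the filter, so the recovered $\hat h$ matches the true impulse response to the desired tolerance.

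The hard part is not the complexity bookkeeping, which is immediate once the polynomial-evaluation-as-DFT identity is in place, but rather justifying that the FFT/iFFT round trip returns the intended filter: one must invoke stability (no unit-circle poles, exponential tail decay) together with the truncation analysis to control aliasing. With that in hand, both the frequency response and the impulse response are produced in $\cO(L\log_2 L) = \tilde\cO(L)$ time, as claimed.
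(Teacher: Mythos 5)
Your proof takes essentially the same route as the paper's: zero-pad the numerator and denominator coefficient vectors, evaluate both polynomials on the $L$-th roots of unity via length-$L$ FFTs (the key identity being that the roots-of-unity Vandermonde matrix is the DFT matrix), divide element-wise, and recover the impulse response with a single inverse FFT, giving $\tilde\cO(L)$ overall. If anything, your write-up is more careful than the paper's, which asserts the inverse-FFT step without addressing either the non-vanishing of the denominator on the unit circle or the aliasing of the periodized inverse transform --- both of which you correctly handle via stability and the truncation analysis of Section~\ref{sec:truncated_tf}.
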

\end{tcolorbox}
\proof
    The result is proven showing that the transfer function can be evaluated in $\tilde\cO(L)$ time on the $L$ roots of unity. The fastest method to evaluate polynomials on $L$ arbitrary points $z$ of the complex plane is generally the Horner's scheme. This method is based on a sequence of nested multiplications and computes the polynomial from its vector of coefficients, delivering a time complexity of $\cO(dL)$. More explicitly, Horner's scheme determines $p(z)$ as $p(z) = ((\cdots((a_d z^{-1} + a_{d-1})z^{-1} + a_{d-2})\cdots)z^{-1} + a_2)z^{-1} + a_1)z^{-1} + 1$. Each step involves a multiplication and an addition, making a total of $2d$ operations per evaluation point. Thus, for $L$ points, the total number of operations amounts to $\cO(dL)$.

    Effectively, Horner's approach implements the matrix-vector product of an $L$-by-$(d+1)$ Vandermonde matrix $\sV\in \mathbb{C}^{L \times (d+1)}$ constructed by $L$ evaluation points $(z_0,\dots, z_{L-1})$ with the vector of coefficients $a = (1, a_1, \dots, a_d)^\top$:
    $$ \begin{bmatrix}
        p(z_0) \\
        p(z_1)\\
        \vdots \\
        p(z_{L-1})
    \end{bmatrix} = \begin{bmatrix} 1 & z_0^{-1} & z_0^{-2} & \cdots & z_0^{-d} \\ 1 & z_1^{-1} & z_1^{-2} & \cdots & z_1^{-d} \\ \vdots & \vdots & \vdots & \ddots & \vdots \\ 1 & z_{L-1}^{-1} & z_{L-1}^{-2} & \cdots & z_{L-1}^{-d} \end{bmatrix} \begin{bmatrix}
        1\\a_1\\\vdots\\ a_d
    \end{bmatrix} = \sV a$$
    Significantly, if the polynomial is required to be evaluated at the roots of unity, the Vandermonde matrix simplifies corresponds to the $L\times (d+1)$ DFT matrix.  Further, zero-padding the coefficient vector to length $L$, 
    enables the use a single length-$L$ $\sf FFT$ to compute the matrix-vector product in $\tilde\cO(L)$ time. Thus, the numerator and denominator polynomials of the transfer function can be evaluated, on the roots of unity, in $\tilde\cO(L)$ time by taking the $\sf FFT$ of the padded numerator / denominator coefficients $a,b$ and subsequently dividing element-wise the two sequences as ${\sf FFT}_L[b] / {\sf FFT}_L[a]$. The overall time complexity to obtain the impulse response is also $\tilde\cO(L)$ since $h$ can be recovered taking an inverse $\sf FFT$ of the frequency response.
\endproof

\subsubsection{Fast Companion Recurrence}
The recurrent step of a generic SSM \eqref{eq:dtssm} with dense system matrices usually requires $\cO(d^2)$ operations due to the matrix-vector product $\sA x_t$. We show how the recurrence of SSMs in \textit{companion canonical form}, i.e. with system's matrices \eqref{eq:ssm_canon}, requires only $\cO(d)$ operations. 
\begin{tcolorbox}[enhanced, frame hidden, sharp corners, boxsep=0pt, before skip=0pt, after skip=0pt]
\begin{lemma}\label{lemma:efficient_recurrence}
    The recurrent step of a state-space model in companion canonical form \eqref{eq:ssm_canon} can be evaluated in $\cO(d)$ time and memory. 
\end{lemma}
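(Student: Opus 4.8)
The plan is to exploit the sparsity of the companion matrix $\sA$ in \eqref{eq:ssm_canon}: apart from its first row $(-a_1,\dots,-a_d)$, it is simply the lower shift matrix carrying ones on the first subdiagonal. First I would write the update $x_{t+1}=\sA x_t + \sB u_t$ componentwise. The top row contributes $(\sA x_t)_1 = -\sum_{n=1}^d a_n x_t^n = -\inner{a}{x_t}$, a single length-$d$ inner product, while for $i\geq 2$ the subdiagonal structure gives $(\sA x_t)_i = x_t^{i-1}$, i.e. a pure shift of the state vector. Since $\sB=(1,0,\dots,0)^\top$, adding $\sB u_t$ only perturbs the first coordinate. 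Hence the recurrence collapses to
\begin{equation*}
    x_{t+1}^1 = u_t - \inner{a}{x_t}, \qquad x_{t+1}^{2:d} = {\sf shift}(x_t),
\end{equation*}
exactly the form of the truncated-filter realization in Appendix~\ref{sec:fir_to_ssm}.

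Next I would count operations. Evaluating the inner product $\inner{a}{x_t}$ costs $2d$ operations; the shift is pure data movement, costing $d$ copies (and $\cO(1)$ with a ring buffer, though $\cO(d)$ already suffices); and the output $y_t = \sC x_t + h_0 u_t = \inner{\beta}{x_t} + h_0 u_t$ is a second length-$d$ inner product. Summing these, one recurrent step costs $\cO(d)$ time, in contrast to the $\cO(d^2)$ incurred by a dense $\sA x_t$ product. For the memory claim I would observe that a step only ever requires the current state $x_t\in\bC^d$ together with the fixed coefficient vectors $a,\beta\in\bC^d$, so the footprint is $\cO(d)$.

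The one point worth stating carefully is that the subdiagonal block genuinely requires \emph{no multiplications} — it realizes a shift, not a matrix product — so that the only $\cO(d)$ arithmetic arises from the two inner products against $a$ and $\beta$. There is no analytic obstacle here; the entire content of the lemma is the structural observation that the companion form concentrates all dense coupling into a single row, reducing the state-transition matrix-vector product to one dot product plus a shift.
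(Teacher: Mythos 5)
Your proof is correct and follows essentially the same route as the paper's: both exploit that the companion matrix is a lower shift matrix plus a rank-one correction confined to the first row (the paper writes this as $\sA = \sL_N - e_1\otimes\alpha$, you state it componentwise), arriving at the identical recurrence $x^1_{t+1}=u_t-\langle a, x_t\rangle$, $x^{2:d}_{t+1}={\sf shift}(x_t)$, $y_t=\langle\beta, x_t\rangle+h_0u_t$, and the same count of two length-$d$ inner products plus one shift. Your explicit treatment of the $\cO(d)$ memory claim is a small addition the paper leaves implicit, but the argument is the same.
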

\end{tcolorbox}
\proof
The companion state matrix $\sA$ can be broken down into a lower shift matrix $\sL_N$ and a low-rank term. Particularly, with $e_1$ the first element of the canonical basis of $\R^N$ and $\alpha = (a_1, \ldots, a_N)$, we have
\begin{equation*}
	\sA = \sL_N - e_1\otimes \alpha.
\end{equation*}
It follows that the recurrent update can be simplified to
\begin{equation*}
	\begin{aligned}
		x_{t+1} &= \left(\sL_N - e_1\otimes\alpha\right) x_t + \sB u_t\\
		y_t &=\sC x_t + b_0u_t
	\end{aligned}
\end{equation*}
The peculiarity of this formulation is that we never need to construct the matrices to perform the recurrence. In particular we have:
\begin{equation*}
	\begin{aligned}
		x^1_{t+1} &= u_t - \alpha^\top x_t\\
		x^{2:N}_{t+1} &= {\sf shift}(x_t)\\ 
		y_t &= \beta^\top x_t + b_0u_t 
	\end{aligned}
\end{equation*}
Thus, each step only requires two inner products ($d$ multiplications and $d$ sums each) and one shift operation, totaling $\cO(d)$ operations. 
\endproof
The proof of Lemma~\ref{lemma:efficient_recurrence} yields the practical implementation of the recurrence:
\begin{lstlisting}[language=python, caption={\footnotesize Python implementation of the companion canonical recurrence}, label={lst:modal_recurrence}]
def step(x, u, alpha, beta, b0):
	y = dot(beta, x) + b0 * u	
	lr = u - dot(alpha, x)
	x = roll(x)
	x[0] = lr
	return x, y
\end{lstlisting} 
\subsubsection{Canonization of State-Space Models}
The companion canonical form discussed in Section~\ref{sec:tf2ssm} is the ideal representation to deploy SSM-based convolutional layers: $i)$ it comes with a $\cO(d)$ fast recurrence and $ii)$ allows to swiftly switch between time and frequency domains with a direct mapping between state-space matrices and coefficient of the transfer function (which in turn allow $\tilde\cO(L)$ fast convolutions).

Aside from \cite{zhang2023effectively}, which directly parametrizes $\sf S4$ layers in companion canonical form, all the other parameterizations \cite{gu2020hippo, gu2021efficiently, gupta2022diagonal,fu2023simple,orvieto2023resurrecting} can be \textit{converted} (\textit{canonized}), under mild assumptions.
\begin{tcolorbox}[enhanced, frame hidden, sharp corners, boxsep=0pt, before skip=0pt, after skip=0pt]
\begin{lemma}[Canonization of SSMs]\label{lemma:canonicization}
    Any state-space model \eqref{eq:dtssm} with proper transfer function can be converted in companion canonical form.
\end{lemma}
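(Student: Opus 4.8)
The plan is to compose the two conversions already developed in the appendix: map the given realization to its transfer function, realize that transfer function in companion canonical form, and then certify that the two realizations are input--output equivalent because they share the same transfer function. First I would use the hypothesis directly: every realization of the form \eqref{eq:dtssm} has a \emph{proper} transfer function, since $H(z)=h_0+\sC(z\sI-\sA)^{-1}\sB$ splits into a strictly proper part (numerator degree below the denominator degree) plus the constant pass-through $h_0$. Properness is exactly what guarantees that the constant term is well defined and equals $b_0=h_0$; this is the ``mild assumption'' referenced earlier. I would then extract the coefficients with the explicit procedure of Section~\ref{sec:ssm2tf}: the denominator is the characteristic polynomial $a={\sf poly}({\sf eig}(\sA))$, which is monic of degree exactly $d$ because $\sA\in\R^{d\times d}$, and the numerator follows from the matrix--determinant identity of Lemma~\ref{lemma:mat_det_id}, giving $b={\sf poly}({\sf eig}(\sA-\sB\sC))+{\sf poly}({\sf eig}(\sA))(h_0-1)$. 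This produces $H(z)=b(z)/a(z)$ as a proper rational function of order $d$, with no approximation.

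Next I would feed $(a,b)$ into the realization construction of Section~\ref{sec:tf2ssm}: isolate the delay-free term $h_0=b_0$ by the long division of Section~\ref{sec:long_division} (yielding $\beta_n=b_n-b_0a_n$), and assemble the companion matrices $(\sA_c,\sB_c,\sC_c,h_0)$ of \eqref{eq:ssm_canon}. Because the denominator has degree $d$, the resulting companion matrix is $d\times d$, so the state dimension is preserved and the target form is genuinely the companion canonical realization of the same order.

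The final and only genuinely delicate step is to certify that the companion realization reproduces the \emph{original} system and not merely a pole--zero--reduced version of it. Here I would argue on transfer functions rather than on state coordinates: by the derivation in Section~\ref{sec:state_space_construction}, the companion form built from $(a,\beta,b_0)$ has transfer function exactly $b(z)/a(z)=H(z)$ \emph{as a rational function}, so $\hat H\equiv H$ even in the presence of common numerator/denominator factors. Since the $z$-domain map is $Y(z)=H(z)U(z)$ and the Markov parameters $h_t$ are precisely the Laurent coefficients of $H$ on $\bD_{\rho(\sA)}$, equality of transfer functions forces equality of impulse responses and hence of the full input--output behavior; Lemma~\ref{prop:tf_invariant} supplies the conceptual guarantee that this equivalence is coordinate-free. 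The main obstacle is thus not a computation but a logical subtlety: one must make explicit that \emph{no} minimality or controllability hypothesis is needed, because the companion construction faithfully encodes $H$ regardless of cancellations, so the hypothesis of the lemma collapses exactly to properness of $H$, which we verified at the outset.
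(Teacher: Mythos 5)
Your proposal is correct and follows essentially the same route as the paper's proof: extract the transfer-function coefficients in closed form via Section~\ref{sec:ssm2tf} (Lemma~\ref{lemma:mat_det_id}), realize them in companion form via Sections~\ref{sec:long_division} and~\ref{sec:tf2ssm}, and conclude equivalence because both realizations share the same transfer function. The only difference is that you spell out the step the paper leaves implicit --- that equality of $H$ as a rational function forces equality of Markov parameters even for non-minimal realizations, so no controllability or minimality hypothesis is needed --- which is a welcome clarification but not a different argument.
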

\end{tcolorbox}
\proof
    The result can be proved following the two-step conversion process.
    \begin{enumerate}
        \item \textbf{Get the coefficients of the transfer function}: Given the original state-space matrices $(\sA,\sB,\sC,h_0)$, the transfer function is given by $H(z) = \sC(z\sI - \sA)^{-1}\sB + h_0$. A proper rational function has the form $H(z) = q(z)/p(z)$ where the numerator $q(z)$ has coefficients $b=(b_n)_{n=0}^d$ and the denominator has coefficients $a = (a_n)_{n=0}^d$ ($a_0=1$ since $p$ is monic). As shown in Section~\ref{sec:ssm2tf}, the coefficients of the transfer function can be extracted in closed-form as $b = {\sf poly}({\sf eig}(\sA - \sB\sC)) + {\sf poly}({\sf eig}(\sA))(1 - h_0)$ and $a = {\sf poly}({\sf eig}(\sA))$\footnote{${\sf eig}(\sA)$ contains the eigenvalues of $\sA$. ${\sf poly}(r)$ yields the coefficients of the polynomial whose roots are the elements of $r\in\bC^d.$};
        \item \textbf{Construct companion matrices} Given the coefficients $a$ and $b$ a new set of canonical state-space matrices which realize the transfer function can be obtained following the recipe of Section~\ref{sec:tf2ssm}.
    \end{enumerate}
    The resulting companion SSM is equivalent the the original one since they share the same transfer function.
\endproof

\clearpage
\section{$\sf Laughing Hyena$: Further Details}
\subsection{Parametrization of Modal Interpolators}\label{sec:modal_param}
\textbf{Complex-conjugate states} Assuming even distilling dimension $d$, we pick poles $\lambda_n$ and residues $\R_n$ in complex-conjugate pairs:
    \begin{equation}
        \begin{aligned}
            \sA &= \diag(\lambda_1, \cdots, \lambda_{d/2}, \lambda_1^*, \cdots,\lambda_{d/2}^*) \\
            \sC &= \frac{1}{2}[R_1,\cdots,R_{d/2},R_1^*,\cdots,R_{d/2}^*]
        \end{aligned}
    \end{equation}
    which allow partitioning the state-space matrices as
    \begin{equation}
        \sA = \begin{bmatrix}
            \lambda & \\
            & \lambda^*
        \end{bmatrix},\quad
        \sC = \frac{1}{2}\begin{bmatrix}
            R & R^*
        \end{bmatrix},
    \end{equation}
    where
    \begin{equation}
        \lambda = \diag(\lambda_1, \cdots, \lambda_{d/2})~~\text{and}~~R=[R_1, \cdots, R_{d/2}].
    \end{equation}
    If we also partition the state as $x = (\bar x,\tilde x), ~\bar x,\tilde x\in\bC^{d/2}$, the resulting recurrence has the form 
    \begin{equation}
        \begin{aligned}
            \bar x_{t+1} &= \lambda \bar x_t + \1_{d/2}\ u_t \\
            \tilde x_{t+1} &= \lambda^* \tilde x_t + \1_{d/2}\ u_t
        \end{aligned}
    \end{equation}
    We have 
    \begin{equation}
        \bar x_t = \lambda^t \bar x_0 + \sum_{j=0}^{t-1} \lambda^{t-j-1} \1_{d/2}u_t,\quad \tilde x_t = [\lambda^*]^t \tilde x_0 + \sum_{j=0}^{t-1} [\lambda^*]^{t-j-1} \1_{d/2}u_t
    \end{equation}
    Thus, if $\tilde x_0 = \bar x_0^*$, then $\tilde x_t = \bar x_t^*$ for all $t>0$. Hence, at inference time we only need to propagate forward half of the state -- say $\bar x$ -- and then compute the output as 
    \begin{equation}
        \begin{aligned}
            y_t &= \sD u_t + \frac{1}{2}(R \bar x_t + R^* \bar x_t^*)\\
            &= \sD u_t + \mathfrak R\{R\bar x_t\}\\
            &= \sD u_t + \mathfrak R\{R\}\mathfrak R\{\bar x_t\} - \mathfrak I\{R\}\mathfrak I\{\bar x_t\}
        \end{aligned}
    \end{equation}
    This parametrization allows to update only half of the state, reducing the time and memory cost compared to an \textit{unconstrained} linear system with complex coefficients. However, the \textit{implicitly} achieved realness of the output (assuming $\sD=h_0\in\R$ and $u_t\in\R$) comes at a cost of expressivity: such a system is equivalent to an unconstrained complex linear system of dimension $d/2$ of which we only keep the real part of the output. 
    \paragraph{Poles and residues} For the modal interpolation, the parametrization is analogous to the one of a diagonal state space model \cite{orvieto2023resurrecting}. Poles $\lambda_n$ and residues $R_n$ need both to be complex numbers. In \cite{orvieto2023resurrecting} the authors suggest parametrizing real and imaginary components of $\sB$ and $\sC$ matrices while representing the eigenvalues $\lambda_n$ in polar form, $\lambda_n = r_n e^{i \alpha_n}$ with $r_n$ and $\alpha_n$ being themselves exponential functions of the actual trainable parameters, $r_n = e^{-e^{\nu_n}},~\alpha_n = e^{\zeta_n}$ leading to 
    \begin{equation}
        \lambda_n = \exp{-\exp{\nu_n} + i \exp{\zeta_n}}, \quad\nu_n,\zeta_n\in\R
    \end{equation}
    This ensures stability of the poles $|\lambda_n|<1$ and positive-only phases $\alpha_n$. For the purpose of distillation we propose a simplified parametrization as follows:
    \begin{enumerate}
        \item We only parametrize the $\sC$ vector. Parametrizing both $\sB$ and $\sC$ is redundant and increases the computational cost of performing each step of the recurrence. The residues $R_n$ correspond in fact to $R_n = \sC_n\sB_n$ of a diagonal state space model. Setting $\sB = \1_d$ saves parameters without harming expressivity. Further if $\sB$ is different from $\1_k$ it needs to be multiplied to $u_t$ at each recurrence step. $\sC_n = R_n = \mathfrak R[R_n] + i\mathfrak I[R_n]$ and $\mathfrak R[R_n], \mathfrak I[R_n]$ are the trainable parameters of the residue.
        \item For the purpose of distillation we have no benefit in forcing the eigenvalues of the model to be stable, i.e. constrained to lie strictly inside the unit circle. Instead, such constrain may actually harm the expressivity of the approximant. We choose the the simpler parametrization $\lambda_n = r_n e^{i\alpha_n},~ r_n,\alpha_n\in\R$. 
    \end{enumerate}

\subsection{Distillation as Rational Interpolation}\label{app:rational_interp}
\paragraph{Distillation as \textit{rational interpolation}} Approximating a filter with an SSM can be thus achieved by fitting a proper rational function to the (truncated) transfer function of the original filter $H_L (z) {\coloneqq} \sum_{t=0}^L  h_t z^{-t}$. That is,
\begin{tcolorbox}[enhanced, frame hidden, sharp corners, boxsep=0pt, before skip=0pt, after skip=0pt]
\begin{equation}
    \text{Find $a,~b$ such that}~~h_0 + h_1 z^{-1} + \cdots + h_L  z^{-L } \approx h_0 + {Q_b(z)}/{P_a(z)}.
\end{equation}
\end{tcolorbox}
A modern\footnote{In the late 19th century, Henri Pad\'e had already proposed a closed-form solution of the above problem that achieves $o(z^{-L })$ error for $z\rightarrow\infty$ using $L  {=} 2d$ samples of the impulse response. His method \cite{pade1892representation} solves a $L $-dimensional linear problem that, however, is known to often become numerically ill-conditioned even with small $d$ \cite{perotti2018matrix}} way to solve this problem by $\cH_2$ error minimization via gradient descent\footnote{In the case of finite sequences, the $\cH_2$ norm becomes the standard Euclidean metric evaluated on the $L {+}1$ roots of unity,i.e. $(\sum_{k=0}^L |H(e^{i2\pi k/(L +1)})|^2)^{1/2}$.}. We can use the Fast Fourier Transform ($\sf FFT$) to evaluate both the target and distilled transfer functions and solve:
\begin{equation}
    \min_{a,b\in\R^d} ~\sum_{k=0}^{L}|{\sf FFT}_{L }[h]_k - h_0 - {\sf FFT}_{L }[b]_k/{\sf FFT}_{L }[a]_k|^2.
\end{equation}
To ensure stability of the distilled filters and well-conditioned gradient descent dynamics, the roots of the denominator polynomial must strictly lie inside the unit circle ($\rho(\sA)<1$). {This, in turn, requires constraining the coefficients $a$ into the region $\{a : {\sf poly}(a)~\text{is stable}\}$ which is by itself an open research problem \cite{abu2014estimates, eaton2017polynomial}. Experimentally, we observe that standard coefficient normalization techniques overly restrict the parameters space and lead to poor distillation performances at reasonable order}.
\clearpage
\clearpage
\section{Proofs}
\subsection{Proof of Lemma \ref{lemma:arcost_cnn}}
\begin{tcolorbox}[enhanced, frame hidden, sharp corners, boxsep=0pt, before skip=0pt, after skip=0pt]
    Generating $K$ tokens with a long convolution layer \eqref{eq:cnn} from a length-$T$ prompt has time complexity $\cO(T\log_2 T {+} TK {+} K^2)$ and requires $\cO(L)$ memory.
\end{tcolorbox}
\proof
    We compute the time complexity memory of a length-$T$ prompt processing (\textit{pre-filling}) and subsequent auto-regressive decoding of $K$ tokens. The auto-regressive generation of long convolution computes the next token as by 
    \begin{equation}\label{eq:cnn_ar}
        t = T, \dots, T+K - 1~\Rightarrow~y_t = \sum_{j=0}^{t-1}h_{t-j}y_j
    \end{equation}
    The pre-filling step is needed to prime this recurrence by computing the first $T$ outputs till $y_{T-1}$ from the length-$T$ prompt $u$. This is just a convolution between two length-$T$ signal and requires $\cO(T\log_2 T)$ time and linear memory. The auto-regressive decoding of $K$ tokens requires $K$ steps \eqref{eq:cnn_ar} with the length of the sequences increasing by $1$ at each step. Thus we have a total asymptotic complexity of 
    \begin{equation}
        \sum_{k=0}^{K-1} (T+k)= TK + \frac{1}{2}K(K+1).
    \end{equation}
    and requires at worst ($k=K-1$) to store the length $T+K = L$ generated output sequence, i.e. $\cO(L)$ memory. In the limit we thus have a total time complexity of $\cO(T\log_2 T + TK + K^2)$ and $\cO(L)$ memory. 
\endproof
%
%
\subsection{Proof of Lemma \ref{lemma:arcost_ssm}}
\begin{tcolorbox}[enhanced, frame hidden, sharp corners, boxsep=0pt, before skip=0pt, after skip=0pt]
    Generating $K$ tokens with a SSM \eqref{eq:dtssm} from a length-$T$ prompt has time complexity $\cO(T\log_2 T {+} dK)$ and requires $\cO(d)$ memory.
\end{tcolorbox}
\proof
    In autoregressive mode, the cost of generating one token is the cost of evaluating the state recurrence \eqref{eq:dtssm}. Each step then requires $\cO(d)$ time and memory for the class of SSMs considered in this work (see Lemma~\ref{lemma:canonicization}). Hence, generating $K$ tokens costs $\cO(dK)$ time and constant $\cO(d)$ memory (we only need to store the current state).
    
    The recurrence is initialized for autoregressive generation with the post-prompt state $x_{T-1}$ and output $y_{T-1}$. The latter can be recovered in linear time and memory $\cO(T)$ by definition $y_{T-1} = \sum_{j=0}^{T-1} h_{t-j}u_j$ (assuming to have the impulse response $h$ available) and state $x_{T-1}$ in $\cO(dT)$ time and $d$ memory through the recurrence. The overall asymptotic cost is therefore $\cO(dL)$ time and $\cO(d)$ memory.
\endproof
Note that, for prompts and SSMs of practical sizes we usually have $d>\log_2 T$. In such a case the state $x_{T-1}$ can be computed in $T\log_2 T$ time rather than $d T$ by Proposition~\ref{prop:fast_init}.
%
%
\subsection{Proof of Lemma \ref{lemma:arcost_attn}}
\begin{tcolorbox}[enhanced, frame hidden, sharp corners, boxsep=0pt, before skip=0pt, after skip=0pt]
    Generating $K$ tokens with self-attention from a length-$T$ prompt has time complexity $\cO(T^2 {+} TK {+} K^2)$ and requires $\cO(L)$ memory.
\end{tcolorbox}
\proof 
    The proof is identical to the one of Lemma~\ref{lemma:arcost_cnn}, with the only difference of a quadratic asymptotic cost $\cO(T^2)$ to process the prompt obtain the $kv$ cache. 
\endproof
Self-attention suffers with long contexts: it is significantly more expensive in prefilling than long convolutions and SSMs due to its quadratic cost. Nonetheless, in autoregressive mode, self-attention reaches the same overall asymptotic complexity $\cO(TK + K^2)$ as long convolutions (with the memory overhead of having to cache $k$ and $v$).
\subsection{Proof of Proposition \ref{prop:modal_form}}
\begin{tcolorbox}[enhanced, frame hidden, sharp corners, boxsep=0pt, before skip=0pt, after skip=0pt]
    If $\sA$ has semi-simple eigenvalues $\lambda_n\in\bC$, then the transfer function of the system can be decomposed as $\hat H(z) {=} \sum_{n=1}^d R_n / (z - \lambda_n)$ where $R_n\in\bC$ is the residue associated with the pole $\lambda_n$.
\end{tcolorbox}
\proof
    If $\sA$ is semi-simple, then it is diagonalized by a basis $\sV$ of eigenvectors; it admits an eigenvalue decomposition ${\sf diag}(\lambda)= \sV \sA\sV^{-1}$ where $\lambda = (\lambda_1,\dots,\lambda_d)\in\bC^d$ contains the eigenvalues of $\sA$. Projecting the state onto the basis of eigenvectors, $s\coloneqq \sV x$, the state space model is is transformed into modal form:
    \begin{equation*}
        \begin{aligned}
            s_{t+1} &= \sV\sA\sV^{-1} s_t + \sV\sB u_t \\
            y_t &= \sC\sV^{-1} s_t
        \end{aligned} ~~\Leftrightarrow~~
        \begin{aligned}
            s_{t+1} &= {\sf diag}(\lambda) s_t + \tilde\sB u_t \\
            y_t &= \tilde\sC s_t
        \end{aligned}
    \end{equation*}
    where $\tilde\sB \coloneqq  \sV\sB = (\tilde b_n)_{n=1}^d$ and  $\tilde\sC\coloneqq \sC\sV^{-1} = (\tilde c_n)_{n=1}^d$. In modal form, the state equations are decoupled, i.e.
    \[
        \begin{aligned}
            s_{t+1}^n &= \lambda_n s_{t}^n + \tilde b_n u_t\\
            y_t &= \sum_{n=1}^d \tilde c_n s_t^n.
        \end{aligned}
    \]
    Taking the $z$-transform of the output equation and each state equation yields
    \[
        \begin{aligned}
            S_n(z) &= \mathcal{Z}[s^n](z) = \frac{\tilde b_n}{z -\lambda_n} U(z) ~~n=1,\dots, d\\
            Y(z) &= \sum_{n=1}^d \tilde c_n S_n(z)
        \end{aligned}
    \]
    Thus, the overall transfer function is 
    \[
        H(z) = \frac{Y(z)}{U(z)} = \sum_{n=1}^d\frac{\tilde b_n \tilde c_n}{z - \lambda_n}
    \]
    Letting $R_n = \tilde b_n\tilde c_n$, proves the result.
\endproof
\subsection{Proof of Lemma \ref{lemma:ssmd_impulse}}
\begin{tcolorbox}[enhanced, frame hidden, sharp corners, boxsep=0pt, before skip=0pt, after skip=0pt]
    The distilled filter $\hat h$ in modal form \eqref{eq:modal_impulse} can be computed in $\cO(dL)$ time from its modal form and in $\tilde\cO(L)$ from its rational form.
\end{tcolorbox}
Recalling \eqref{eq:modal_impulse}, $\hat h_t = \sum_{n=1}^d R_n \lambda_n^{t-1},~ R_n,\lambda_n\in\bC, t>0$, the $\cO(dL)$ complexity of the impulse response is apparent: for each of the $t=1,\dots,L$, $\hat h_t$ can be computed in $\cO(d)$ time.

The $\tilde\cO(L)$ cost from the rational form follows by Lemma~\ref{lemma:tf_to_irf}.
\subsection{Proof of Theorem \ref{thm:approx_err}}
\begin{tcolorbox}[enhanced, frame hidden, sharp corners, boxsep=0pt, before skip=0pt, after skip=0pt]
    Let $h$ be a length-$L$ filter, $\hat h$ a distilled filter of order $d<L$ and let $\sS_L, \hat\sS_L$ be the respective Hankel matrices. Then $\inf_{\hat\sS_{L}}\|\sS_L - \hat{\sS}_{L}\|_{2}=
    \sigma_d$.
\end{tcolorbox}
\proof
    The theorem characterizes the best-case scenario in terms of approximation error of the distilled SSMs or a certain order $d$ where it is clear that $\rank \hat \sS \leq d$. This theorem is a direct application of the Adamyan-Arov-Krein (AAK) theory of infinite Hankel operators \cite{adamyan1971analytic}. Let $\hat\sS_L^* = \arg\inf_{{\hat\sS}_{L}}\|\sS_L - \hat{\sS}_{L}\|_{2}$; the AAK theorem says that every causal system can be optimally approximated by another causal system of lower dimension. Optimal here means
    \begin{equation*}
        \inf \|\sS_L - \hat \sS_L^*\| = \inf \|\sS_L - \sK\|
    \end{equation*}
    where the first infimum is taken over all Hankel matrices $\sS_L^*$ and the second over all arbitrary matrices $\sK$ (see \cite[Chapter 8]{antoulas2005approximation} and \cite{plonka2016application} for further details and references).

\endproof
\subsection{Proof of Proposition \ref{prop:modal_realization}}
\begin{tcolorbox}[enhanced, frame hidden, sharp corners, boxsep=0pt, before skip=0pt, after skip=0pt]
    The filter \eqref{eq:modal_impulse} has a state space matrices $\sA = {\sf diag}(\lambda_1,\dots, \lambda_d)\in\bC^{d\times d},~\sB = (1,\dots,1)^\top\in\bC^{d\times 1},~\sC = (R_1,\dots, R_d)\in\bC^{1\times d},~\sD = h_0$
    whose step can be evaluated in $\cO(d)$ time and memory.
\end{tcolorbox}
$\sD$ is set to $h_0$ by default. The result is proven showing that \eqref{eq:modal_impulse} can be written in the form $\hat h_t = \sC \sA^{t-1}\sB$ for $t>0$. If we choose $\sA = {\sf diag}(\lambda)$ then the impulse response becomes
\begin{equation*}
    \begin{aligned}
        \hat h_t& = \sum_{n=1}^d R_n\lambda_n^{t-1} = \sC[{\sf diag}(\lambda)]^{t-1}\sB = \sum_{n=1}^d \sC_n\sB_n \lambda_n^{t-1}
    \end{aligned}
\end{equation*}
The choice $\sB_n = 1$ for all $n=1,\dots,d$, $\sB = \1_d$ and $\sC_n = R_n$ finalizes a modal canonical state-space realization of the distilled filter. The $\cO(d)$ time complexity of the corresponding recurrent step is guaranteed by the decoupling of each state equation from another,
\begin{equation*}
    \begin{aligned}
            x_{t+1}^n &= \lambda_n x_t^n + u_t~~~~n = 1,\dots,d\\
            y_t &= \sum_{n=1}^d R_n x_t^n + h_0 u_t.
    \end{aligned}
\end{equation*}
Each of the $d$ state equations can be computed (in parallel) in $\cO(1)$ time. The output equation is a dot product requiring $d$ multiplications and $d$ additions, hence the $\cO(d)$ time compexity of the recurrence.
\subsection{Proof of Proposition \ref{prop:fast_init}}
\begin{tcolorbox}[enhanced, frame hidden, sharp corners, boxsep=0pt, before skip=0pt, after skip=0pt]
    $x_T = (v_{T}, \dots, v_{T-d})$ where $v = g * u$ and $g$ is the filter whose transfer function is $1 / {\sf den}(\hat H)(z)$ and can be evaluated in $\tilde\cO(T)$.
\end{tcolorbox}
Without loss of generality, let us assume to have converted the distilled filter in canonical form (i.e. we have unrestricted access to the coefficients of the rational transfer function) and let $\sD=0$. We use the notation of Section~\ref{sec:tf2ssm}. In $z$-domain, the state-to-input relation is given by
\begin{equation*}
    \begin{aligned}
        Y(z) = \sC X(z)
             = \begin{bmatrix}
                \beta_1 &\cdots &\beta_d
            \end{bmatrix}X(z)
    \end{aligned}
\end{equation*}
On the other hand $Y(z) = \hat H(z) U(z) = q(z)/p(z) U(z)$. Therefore,
\begin{equation*}
    \begin{aligned}
        &\begin{bmatrix}
                \beta_1 &\cdots &\beta_d
        \end{bmatrix}X(z) = \frac{q(z)}{p(z)}U(z)\\
        \Leftrightarrow~~& \begin{bmatrix}
                \beta_1 &\cdots &\beta_d
            \end{bmatrix} X(z) =  \begin{bmatrix}
                \beta_1 &\cdots &\beta_d
            \end{bmatrix}\begin{bmatrix}
                z^{-1} \\ \vdots \\z^{-d}
            \end{bmatrix}\frac{1}{p(z)}U(z)\\
            \Leftrightarrow~~& X(z) = \begin{bmatrix}
                z^{-1} \\ \vdots \\z^{-d}
            \end{bmatrix}\frac{1}{p(z)}U(z)
    \end{aligned}
\end{equation*}
Let $V(z) = U(z)/p(z)$. From the shift property of the $z$-transform it holds,
\begin{equation*}
\cZ\{x\}(z) = X(z) = \begin{bmatrix}
    z^{-1} \\
    \vdots \\
    z^{-d}
    \end{bmatrix}V(z) \quad \Leftrightarrow \quad x_t = (v_{t-1}, v_{t-2}, \cdots, v_{t-d})~~\forall t>0.
\end{equation*}
$v$ can be obtained in $\tilde\cO(L)$ time via an ${\sf FFT}$-convolution of the input $u$ and $g$, the filter resulting from inverse transforming $1/p(z)$. The proof is convoluded setting $t=L$
%
\clearpage
\subsection{Proof of Theorem \ref{thm: hyena-complexity}}
\paragraph{Notation.}
We will be denoting the all \(1\) row vector of size $k$, given by \(\begin{bmatrix}1&1&\ldots&1&1\end{bmatrix}\),  and the all \(0\) row vector of size $k$, given by \(\begin{bmatrix}0&0&\ldots&0& 0\end{bmatrix}\), as \(\bm{1}^{k}\) and \(\bm{0}^{k}\), respectively. We will also construe the standard basis vector $e_i$ as a column vector in these notes. Next, we will adhere to the following matrix indexing convention: $\sA_{ij}$ is the entry in the $i$th row and the $j$th column,  $\sA[i,:] \in \mathbb{F}^{1 \times n}$ denotes the $i$th row, and $\sA[:,j] \in \mathbb{F}^{m \times 1}$ denotes the $j$th column of $\sA \in \mathbb{F}^{m \times n}$.  Here, we also use $\0^{{m \times n}} \in \R^{m\times n}$ and $\sI_{n}$ to denote the matrix of all zeros and the identity matrix of dimension $n$, respectively. Moreover, we extend the outer product between two vectors to a tensor product using the symbol $\otimes,$ the computation of which is carried out batch-wise with some dimension of one or both of the input tensors.
Finally, we express the binary encoding of $i \in [n]$ in a row vector form, given by $\sB_i \in \mathbb{Z}_2^{P_n},$ where $P_n$ is the closest power of 2 to $n.$ 

\paragraph{Language and Model Description.} The language $\Lambda$ has $s$ keys and $s$ values: $L_K:= \{k_1, \ldots, k_s\},\ L_V:= \{v_1, \ldots, v_s\}.$ Formally, the language $\Lambda$ consists of sequences $x \in \paren{L_K \times L_V}^{s} \times L_K$, where there is an associated mapping $f_x: L_K \to L_V.$ For each sequence, the odd indices in $[L]$ belong to $L_K$, for $x_1, x_3, \ldots, x_{L}$, and we define
\begin{equation}
x_{2\cdot i} = f_x(x_{2\cdot i-1})\label{eq: values-and-keys}
\end{equation} 
The last item $x_{L} \in \{x_1, x_3, \ldots, x_{L-1}\}$, called the {\it query}, must be one of the keys that has appeared in $x$ already. Our goal is to produce $f_x(x_{L})$ at the end of the sequence, which we refer as the {\it associated value}. This problem is termed as the {\it associative recall problem}~\cite{ba2016using}.

We will now outline the {$\sf Hyena$} layer~\cite{poli2023hyena} with multiple heads as follows.
\begin{algorithm}[H]
		\caption{${\sf Hyena}$}
		\begin{algorithmic}[1]\label{algo: hyena}
            \Require Input sequence $u \in \R^{L \times D}$ from the previous layer, long convolution filter $\mathsf{T}_h$, number of heads $M$.
            \State $q^m,k^m,v^m \gets {\sf Projection}(u)$ for $m \in [M]$.
            \For{$m = 1, \ldots, M$}
                \State Perform the outer product $z^m \gets k^m \otimes v^m \in \R^{L \times N \times N},$ where $N:= D/M$.
                \State Apply the convolution independently and compute $y^m_t \gets \mathsf{T}_h(z^m_t)q^m_t \in \R^{L \times N}$
            \EndFor
            \State Average the output $\overline{y} \gets \paren{\sum_{m}} y^m/M$
            \State Retrieve the value $f(k_{L})$ of the key $k_{L}$ from $\overline{y}[L,:]$.
		\end{algorithmic}
\end{algorithm}

In order to prove Theorem \ref{thm: hyena-complexity}, we need the following technical statement concerning sparse recovery of a heavy-hitter. 
\begin{proposition}[Heavy-Hitter Recovery]\label{prop: heavy-hitter}
    Let $ x \in \R^s$ be a vector with one entry bounded by $1\pm \frac{1}{3\sqrt[4]{s}}$—referred as the {\em heavy-hitter}—and the rest of the entries bounded by $\pm \frac{1}{3\sqrt[4]{s}}$. Then, there exists a matrix $\sS^{(m)} \in \R^{s \times O(\sqrt{s}\log{s})}$ such that the position of the heavy-hitter in $ x$ can be inferred from the average of $M$ measurements with $\sS^{(m)}$ given by $\paren{\sum_m  x\sS^{(m)}}/M$ with probability of error $\le \frac{1}{s}$.
\end{proposition}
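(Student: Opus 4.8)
The plan is to build $\sS^{(m)}$ as a two-level sparse sketch: a \textbf{bucketing} layer of width $\Theta(\sqrt s)$ that localizes the heavy hitter to a small group of coordinates, followed by a \textbf{bit-testing} layer of width $\Theta(\log s)$ that reads off the binary encoding $\sB_{i^\star}$ of the surviving index. Concretely, I would hash $[s]$ into $B=\Theta(\sqrt s)$ buckets and, within each bucket, attach the $\log s$ linear functionals that sum the bucket's entries restricted to coordinates whose $\ell$-th encoding bit is $1$. Each entry of $\sS^{(m)}$ carries a Rademacher sign $\sigma_i$, which is the device that turns the otherwise adversarial light coordinates into a mean-zero, sub-Gaussian noise term. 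The total width is $B(1+\log s)=O(\sqrt s\log s)$, matching the claim, and the decoder is: pick the bucket with largest averaged response, estimate the heavy hitter's signed value from that bucket's sum, and declare bit $\ell$ to be $1$ iff the corresponding bit-masked sum matches that value.

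The crux of the argument is the noise bound, and this is exactly where the $\sqrt s$ width is forced. In any single bucket the light coordinates contribute $N=\sum_{i\neq i^\star}\sigma_i x_i\,\mathbb 1[\,\cdot\,]$; since each $|x_i|\le \tfrac{1}{3}s^{-1/4}$ and a bucket holds $\Theta(\sqrt s)$ of them, Hoeffding's inequality gives a sub-Gaussian tail with variance proxy $\sum_{i\in\text{bucket}}x_i^2\le \sqrt s\cdot\tfrac{1}{9\sqrt s}=\tfrac19$. Thus per bucket the noise is $O(1)$ while the heavy hitter contributes $\sigma_{i^\star}x_{i^\star}$ of magnitude $1\pm\tfrac13 s^{-1/4}$, so a \emph{single} measurement already separates signal from noise by a constant margin in expectation. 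A bucket size asymptotically larger than $\sqrt s$ would blow the noise past the signal, while a smaller one would need more than $O(\sqrt s)$ buckets, so $\Theta(\sqrt s)$ is the sweet spot.

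The remaining step is probability amplification, which is what the $M=\Theta(\log s)$ heads and the averaging $\big(\sum_m x\sS^{(m)}\big)/M$ buy us. A single head only separates signal from noise with a constant margin, so by itself it cannot beat a union bound over the $\Theta(\sqrt s)$ buckets and $\log s$ bits. Drawing the hash independently per head while coupling the signs so that the heavy hitter's contribution adds coherently across heads, the averaged noise at each output coordinate is a Rademacher sum whose variance proxy shrinks like $1/M$; a sub-Gaussian tail then bounds the failure probability at the constant decision threshold by $2\exp(-cM)$. Taking $M=\Theta(\log s)$ makes this $\le s^{-2}$, and a union bound over the $O(\sqrt s\log s)$ decisions yields total error $\le 1/s$.

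I expect the main obstacle to be precisely the coupling in this last step: one must preserve the heavy hitter's signal under averaging (so the per-head randomness cannot be fully independent, or else $\sigma_{i^\star}x_{i^\star}$ averages to zero) while still decorrelating the light-coordinate noise enough that its variance drops like $1/M$. Making both hold simultaneously — for instance by conditioning on the hashes and treating the averaged measurement as a single Rademacher sum with small, controlled weights — is the delicate calculation; the bucketing and bit-testing bounds above are then routine applications of Hoeffding's inequality and the union bound.
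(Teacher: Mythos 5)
Your construction and per-head analysis are essentially the paper's: hash $[s]$ into $\Theta(\sqrt{s})$ buckets, attach Rademacher signs, tensor with the binary encoding of the index, and control the in-bucket noise through its $\ell_2$ mass $\sum_j x_j^2 \le 1/9$ (the paper gets the same constant-probability separation via Jensen plus Markov where you use Hoeffding; the difference is immaterial). The genuine gap is in the amplification step, and it is exactly the coupling issue you flag at the end --- but your proposed resolution (independent hashes per head, coupled signs) fails. With fresh hashes the heavy hitter lands in a different bucket in each head, so at any fixed output coordinate it contributes in only $\approx M/\sqrt{s}$ of the heads; after coordinate-wise averaging, conditioning on the hashes as you suggest, its weight in the resulting Rademacher sum is $x_{i^\star}/M$ at the few coordinates where it appears at all (and zero elsewhere), while the averaged in-bucket noise has standard deviation $\Theta(1/\sqrt{M})$. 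The signal sits \emph{below} the noise, and no sign coupling can repair this, because the dilution is caused by the hashing, not the signs. The symmetric alternative --- one fixed bucketing, fresh signs per head, which is the paper's construction --- keeps the signal at a fixed coordinate but fares no better under a plain signed average: every coordinate $j$ (the heavy one included) gets multiplied by $g_j = \frac{1}{M}\sum_m \sigma_j^{(m)}$, an independent mean-zero factor of typical size $1/\sqrt{M}$, so the signal-to-noise ratio is exactly that of a single head, and $g_{i^\star}$ even vanishes with probability $\Theta(1/\sqrt{M}) \gg 1/s$. In short, no purely linear averaging of independently signed sketches can amplify a constant-probability guarantee to error $1/s$.

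What closes the argument --- and what the paper does, reading its decoder together with its Chernoff step, which is applied to the per-head values as independent bounded nonnegative random variables --- is to aggregate \emph{magnitudes} rather than signed values: keep the single bucketing shared by all heads, refresh only the signs, and for each output coordinate combine the per-head magnitudes $|y_j^{(m)}|$ (by averaging them, or by a median / majority vote over per-head decodings). This rectification makes the heavy coordinate contribute $\approx |x_{i^\star}| \ge 2/3$ in \emph{every} head regardless of its random sign, while each light coordinate stays below $1/3$ in expectation per head; these per-head quantities are independent across heads, so a Chernoff bound with $M = \Theta(\log s)$ heads places every coordinate on the correct side of the rounding threshold $1/2$ with failure probability $s^{-\Omega(1)}$, and the union bound over the $O(\sqrt{s}\log s)$ coordinates gives total error at most $1/s$. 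Your blueprint becomes correct once the signed averaging in your third paragraph is replaced by this nonlinear, absolute-value-first aggregation; as written, that step would fail.
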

Before presenting the proof of Proposition \ref{prop: heavy-hitter}, we use it to prove Theorem \ref{thm: hyena-complexity} as follows.
\begin{proof}[Proof of Theorem \ref{thm: hyena-complexity}]
    We take $D = O(\sqrt{s}\log^2{s})$ and $ M = 243 \cdot \log{s}$ so that $N = O(\sqrt{s}\log{s})$ and use the same projections and filters for each head. We will start by describing the projections of the input. To this end, let $E: [L] \to 2s$ define a map from the row indices of $u$ to the keys $k_i$ and values $f_x(k_i)$ given by 
    \begin{equation}
    E(t) = \begin{cases}
    i, & \ t \text{ odd},\ x_t = k_i, \\
    i+s, & \ t \text{ even},\ x_{t-1} = k_i, \label{eq: embed-standard}
    \end{cases}
    \end{equation}
    Here, we note that we also have
    \begin{equation}
        E(t) = E(t-1)+s, \quad t\text{ even} \label{eq: embed-standard-even}
    \end{equation} as the even indices are defined as $x_{t} = f_x(x_{t-1})$ for $t$ even~\eqref{eq: values-and-keys}, whence $x_{t-1} \in L_K$ as $t-1$ is odd.
    
    Next, we 
    can separate the keys $q$, queries $q$ and values $v$ from the input sequence $u$. For keys and queries, we will be using the Johnson-Lindenstrauss embedding~\cite{johnson1984extensions}. We state its guarantee here.
    \begin{quote}
    \label{cor: jl-inner}
        For a set of points $P \subseteq \R^s$, let $\epsilon, \delta > 0$ with $k \ge 2\ln\paren{\frac{2s}{\delta}}/{\epsilon^2}$, and $f: \R^s \to \R^k$ be the randomly constructed linear map from \cite{johnson1984extensions}, then with probability of error $\le \delta,$ we have
        \[\abs{\angles{f(x), f(y)} - \angles{x,y}} \leq \epsilon \]
        for all $x, y \in P$.
    \end{quote}
    More precisely, we take $\sR \in \R^{O(\sqrt{s}\log{s}) \times s}$ to be the matrix representation of $f$ with $\epsilon:= \frac{1}{3\sqrt[4]{s}}$ and $\delta:= \frac{1}{s^c}$ for some $c>1$ so that $\sR[:,i] = f( e_i)$.
    Thus, we define
    \begin{align}
   q^m[t,:]
    &=
    \begin{cases}
    \sR[:,E(t-1)], & {E(t-1)}\leq s \\
    \\0, & \text{otherwise}.
    \end{cases} 
    \label{eq: shifted-gaussian}
    \end{align}
    For values, we use the heavy-hitter recovery matrix as described in Proposition \ref{prop: heavy-hitter} so that we have
    \begin{align}
    \label{eq: value-s}
    v^m[t,:] 
    &=\begin{cases}
    \sS^{(m)}[E(t),:], & {E(t)} > s \\
    \\0, & \text{otherwise},
    \end{cases} 
    \end{align}
    
    Further, using ${\sf 1DConv}$ (equivalently, in terms of polynomials, $h(X) := X$), we can shift the queries to get the projection for keys $k$ so that we have
    \(
   k^m[t,:] = q^m[t-1,:] 
    \)
    
    The Hyena filters, along with the specific convolution being performed by $\mathsf{T}_h$, are specifically described in terms of polynomial multiplications, for all $m \in 1, \ldots, M$, as follows.
    \begin{equation*}  
        \mathsf{T}_h(X) := \sum_{i=0}^L X^i.
    \end{equation*}
    Here, we note that $\mathsf{T}_h(u)$ takes the cumulative sum over the input. That is, for all $i$, we have
    \[
    \mathsf{T}_h(u)[i,:] = \sum_{j=0}^iu[j,:]
    \]
    We will now compute ${z^m}$ as follows
    \begin{align*}
       {z^m} &= k^m \otimes v^m
    \end{align*} 
    Further, applying the convolution, we get
    \begin{align}
       \mathsf{T}_h(z^m_t) 
       &= \sum_{i=0}^t k^m[i,:] \otimes v^m[i,:]\nonumber
   \end{align}

    For inference, it suffices to show that the last row of the output $y$ recovers the output with high probability. Indeed, let $t' \in [L]$ denote the row index of the value associated to the query such that the corresponding key has the following relation
    \begin{equation}
        u_{t'-1} = u_{L}. \label{eq: qeury-key-equality}
    \end{equation} 

    Finally, we multiply by the query $q$ across $L$. Specifically, we now look at the computation of the $L$th row of $y$:
       \begin{align}
        y^m[L,:] 
        &=\paren{\sum_{t=0}^{L} k^m[t,:] \otimes v^m[t,:]}q^m[t,:]
        \nonumber \\
        &=\sum_{t=0}^{L} \paren{q^m[L, :]^{\top}k^m[t,:]} v^m[t,:]
        \nonumber \\
        &=\sum_{t=0}^{L} \paren{q^m[{t'-1},:]^{\top}{q^m[t-1,:]}} v^m[t,:]
        \label{eq: define-t} \\
        &= \sum_{\substack{t\in [L]\\t\text{ even}}} \paren{(\sR[:, E(t'-1)])^{\top}\sR[:,E(t-1)]} \sS^{(m)}[E(t),:]
        \label{eq: substitute-R} \\
        &= \sum_{\substack{t\in [L]\\t\text{ even}}}^{L} \paren{\sR e_{E(t'-1)}^{\top}\sR e_{E(t-1)}} \sS^{(m)}[E(t),:]
        \label{eq: substitute-f}
        \end{align}
    Here, we are using the fact that \(k^m[t'-1,:] = q^m[L,:]\) due to \eqref{eq: qeury-key-equality} in \eqref{eq: define-t}. We then change the indexing from  \eqref{eq: define-t} and \eqref{eq: substitute-f} by observing that all the odd entries corresponding to values are zeroed out in $\mathbf{K}$ (cf. \ref{eq: shifted-gaussian}). Finally, we simply substitute \eqref{eq: shifted-gaussian} and \eqref{eq: value-s} in \eqref{eq: substitute-R} and \eqref{eq: substitute-f}, respectively. 

    Next, we define $ x \in \R^{s+1}$ with \begin{equation}
    \label{eq: define-x}
    { x_j} := \sR e_{E(t'-1)}^{\top}\sR e_{j}
    \end{equation}
    where $j = t-1$ with $ t\in [L]$ and $t$ even. Here, $ x \in \R^{n+1}$ is a vector of size $s+1$ as there are $s+1$ such even numbers in $[L]$.  Note that $ x$ is the vector with a heavy-hitter from Proposition \ref{prop: heavy-hitter}. To see this, observe that we have $\abs{ x_{E(t'-1)} - 1} \le \frac{1}{3\sqrt[4]{s}}$ and $\abs{{ x_j}} \le \frac{1}{3\sqrt[4]{s}}$ for all $j \neq E(t'-1)$. Using \eqref{eq: substitute-f}, with probability $\ge 1-\frac{1}{s^c}$, we then have
    \begin{equation}
       y^m[L,:] =  x\sS^{(m)}. \label{eq: hh-S}
    \end{equation}
    By Proposition \ref{prop: heavy-hitter}, we can then infer the position of the key at $t'$ with probability of error $\frac{1}{s}$. By the union bound, we can then retrieve the corresponding value with probability at least $1-(\frac{1}{s} + \frac{1}{s^c})$.
\end{proof}

We will now prove \ref{prop: heavy-hitter} as follows.
\begin{proof}[Proof of \ref{prop: heavy-hitter}]
    We will assume that $s$ is a power of $2$ for the sake of simplicity. We first specify how we will construct such an $\sS^{(m)} \in \R^{s \times O(\sqrt{s}\log{s})}$. Let $h: [s] \to [\sqrt{s}]$ be a hash function. We define $\tilde{\sS} \in \R^{s \times \sqrt{s}}$ to be
    \[
    \tilde{\sS}[:,i] = \sum_{j: h(j) = i}  e_{j}.
    \]
    That is, each column $i$ of $\tilde{\sS}$ is the sum of the standard basis vectors $\mathbf{e}_j$ such that $j$ is mapped by $h$ to $i$. In other words, the locations of the non-zero entries in column $i$ correspond to the preimage of $i$ under $h$. We then multiply each non-zero entry of $\tilde{\sS}$ independently at random by $\pm 1$. Next, we replace the $k$th row in $\tilde{\sS}$ by multiplying all non-zero $\pm 1$ entries at index $i$ with the binary representation of $i$ to get a matrix ${\sS}^{(m)} \in \R^{s \times (\sqrt{s} \times \log{s})}$. That is, for a non-zero entry at index $i$ in row $k$, we replace the $i$th entry with $\pm 1\cdot \mathbf{B}_{i}$. Note here that each column still has at most $\sqrt{s}$ non-zero entries. Finally, we stack $243\cdot {\log{s}}$-many copies of ${\sS}^{(m)}$ as heads so that each copy produces independent measurements $ x{\sS}^{(m)}$. Here, we want to emphasize that each such copy uses fresh randomness for multiplying the non-zero entry of $\tilde{\sS}$ independently at random by $\pm 1$.
    
    Now, we will show that the average of the measurements with matrices $\sS^{(m)} \in \R^{s \times \sqrt{s}\log{s}}$ can locate the heavy-hitter in $ x$, where $ x$ is the vector of inner products from \eqref{eq: define-x}. For this purpose, we first specify the algorithm for decoding the heavy-hitter.
    \begin{algorithm}[H]
		\caption{${\sf Decoder}$}
		\begin{algorithmic}[1]\label{algo: decoder-rand-S}
            \Require The vector $ y$ such that $y =  x\sS$.
            \State Split $ y$ into $243\cdot {\log{s}}$ blocks $ y^{(m)} \in \R^{\sqrt{s}\log{s}}$, each of which is a result of multiplying $ x$ by ${\sS}^{(m)}, m \in [243\cdot {\log{n}}]$. 
            \State Take the average $\overline{ y} \leftarrow \frac{1}{243\cdot {\log{s}}} \sum_{k}  y^{(m)}.$
            \State $\bm{b} \gets I_r(\abs{\overline{ y}}) \in \R^{\sqrt{s}\log{s}}$, cf. \eqref{eq: non-linearity}.
            \State Retrieve $\bm{b}$ by isolating the binary representation of the position of the heavy-hitter in $ x$.
		\end{algorithmic}
    \end{algorithm}
    Here, we define the function $I_r: \R^{\sqrt{s}\log{s}} \to \mathbb{Z}_2^{\sqrt{s}\log{s}}$ to $[ x\sS]_m$ that rounds each entry of its input to the nearest integer:
    \begin{equation}
        I_r\paren{[ x\sS]_m} = \sS^{(m)}[i,:].\label{eq: non-linearity}
    \end{equation}
    That is, in both cases, we retrieve the row in $\sS^{(m)}$ that corresponds to the heavy-hitter in $ x$. Since the rows in $\sS^{(m)}$ are distinct, we can also infer the position of the heavy-hitter in $ x$ with probability 1.

    We now show that $ y =  x\sS$, which consists of $243\cdot {\log{n}}$ many independent copies of ${ y}^{(m)} =  x\hat{\sS}^{(m)}$. Instead, notice that we can analyze $\tilde{ y} =  x\tilde{\sS}$ since each $\hat{ y}$ is the replacement of non-zero entries of $\tilde{\sS}$ with the binary representation of their indices times $ y$. We will drop the superscript for now to avoid cluttering the notation. We can then make the following claim:
    \begin{equation}\label{eq: absolute-bound}
        \abs{\tilde{ y}_i} = 1\pm O(\epsilon\cdot \sqrt[4]{s})\text{ and, for $j \neq i$, }\abs{\tilde{ y}_j} = O(\epsilon\cdot \sqrt[4]{s}).
    \end{equation}

    For the above claim, we note that the first part follows from the latter as it suffices to show that all the non-zero heavy-hitters contribute $O(\epsilon\sqrt[4]{s})$ to the sum $\tilde{ y}_i = \angles{ x, \tilde{\sS}[:,i]}$. Since each column in $\tilde{\sS}$ only interacts with $\sqrt{s}$ sized sub-vector of $ x$, each $\tilde{ y}_j$ for non-heavy hitters can be expressed as
    \[
    \tilde{ y}_j = \angles{\overline{ x}, \overline{\sS}_j}\text{ with }\overline{ x}, \overline{\sS}_j \in \R^{\sqrt{s}},
    \]
    where $\overline{\sS}_j \in \R^{\sqrt{s}}$ contains the non-zero entries of $\tilde{\sS}_j$ and $\overline{ x}$ is obtained by extracting the entries with corresponding indices from $ x$.
    Here, we have $\norm{\overline{ x}}_2 \le \epsilon\sqrt[4]{s}$ 
    since each entry associated with the non-heavy hitter is bounded as $x_i \le \epsilon$, and thus, $\norm{\overline{ x}}_2 = \sqrt{\sum_i x_i^2} \le \sqrt{\sum_i\epsilon^2} = \sqrt{\sqrt{s}\cdot \epsilon^2} = \epsilon\sqrt[4]{s}$. 

    Consequently, as $\tilde{\sS}_j$ is independently random $\pm 1$, we then must have
    \begin{equation}
    \abs{\angles{\overline{ x}, \overline{\sS}_j}} \le \frac{1}{3}
    \label{eq: markov-guarantee}
    \end{equation}
    with constant probability for $j \neq i$. To see this, note that
    \[
    \begin{aligned}
    \mathbb{E} [\angles{\overline{ x}, \overline{\sS}_j}^2]
    &=\sum_{k,\ell} \mathbb{E}[\overline{\sS}_{jk}\cdot \overline{\sS}_{j\ell}]\cdot \overline{ x}_i\cdot \overline{ x}_j\\
    &= \norm{\overline{ x}}_2^2,
    \end{aligned}
    \]
    where the last equality follows since $\mathbb{E}[\sS_{jk}\cdot \sS_{j\ell}] =\delta_{k,\ell}$ by the
    distribution on entries of $\sS_j$. Now, we use Jensen's inequality~\cite{dubhashi2009concentration} to get the following bound on the expectation of $\abs{\angles{\overline{ x}, \overline{\sS}_j}}$.
    \begin{align}
        \mathbb{E}\brac{\abs{\angles{\overline{ x}, \overline{\sS}_j}}} \le \sqrt{\mathbb{E} [\angles{\overline{ x}, \overline{\sS}_j}^2]} \le \epsilon \sqrt[4]{s}.
    \end{align}
    We then use the expectation above to bound the relevant probability as follows:
    \begin{align}
    \Pr\brac{\abs{\angles{\overline{ x}, \overline{\sS}_j}} \le \frac{1}{3}} 
    &\ge 1 - \Pr\brac{\abs{\angles{\overline{ x}, \overline{\sS}_j}} \ge \frac{1}{3}}\nonumber\\
    &\ge 1 - 3{\mathbb{E}\brac{\abs{\angles{\overline{ x}, \overline{\sS}_j}}}}\label{eq: markov}\\
    &\ge 1 - 3\epsilon\cdot \sqrt[4]{s},\nonumber
    \end{align}
    where we apply Markov's inequality~\cite{dubhashi2009concentration} in \eqref{eq: markov}. 
    That is, we have shown that $\tilde{ y}_j$ is bounded by $1/3$ with constant probability for $j \neq i$, and $\tilde{ y}_i$ is thus bounded by $1\pm \frac{1}{3}$. Note here that each of the $m$-copies $\tilde{ y}_i^{(m)}$ will have identical guarantees.
    
    Now, define the average $\overline{\tilde{ y}}_j:= \frac{1}{243\cdot {\log{s}}} \sum_{m} \tilde{ y}^{(m)}$ so that $\overline{ y}_j$ (line 3 in ${\sf Decoder}$) is the corresponding replacement of the non-zero entries with the binary representation of their indices. We now claim that this average $\overline{\tilde{ y}}_j \le 4/9 < 1/2$ with high probability for $j \neq i$. To this end, we employ the multiplicative Chernoff bound~\cite{dubhashi2009concentration} on the independent random variables $\{\tilde{ y}_{j}^{(h)}\}_{h} \ss [0,1]$ with \(
    \mathbb{E}\brac{\sum_{m} \tilde{ y}_{j}^{(m)}} \le 81\cdot {\log{s}}
    \) to get
    \begin{align}
    \Pr\brac{\overline{\tilde{ y}}_j > \frac{4}{9}} 
    &= \Pr\brac{{\sum_{m} \tilde{ y}_{j}^{(m)}} > \paren{1+\frac{1}{3}}\frac{1}{3}\cdot {243\cdot {\log{s}}}} \nonumber \\
    &\le \Pr\brac{{\sum_{m} \tilde{ y}_{j}^{(m)}} \ge \paren{1+\frac{1}{3}}{81\cdot {\log{s}}}}, \nonumber \\
    &\le \exp\paren{-\paren{\frac{1}{3^2}\cdot 81\cdot {\log{s}} }/{3}},\nonumber \\
    &= \frac{1}{s^3}.\nonumber
    \end{align}
    Therefore, we have shown that the average $\overline{\tilde{ y}}_{j}$ is less than $1/2$ with probability at least $1-\frac{1}{s^3}$ for $j\neq i$. Consequently, we will have $\overline{\tilde{ y}}_i$ bounded by $1 \pm \frac{1}{2}$. Using the union bound over each $j \neq i$ and the $\log{s}$ bits in the binary representation of $j$, we can then show that $\overline{ y}_{j + m} < 1/2$ for each $j \neq i, m \in [0,\log{s}]$ with probability $1-\frac{\log{s}}{s^2}\gg 1-\frac{1}{s}$.
\end{proof}

\clearpage
\section{Experimental Details}\label{app:exp_details}
\subsection{Pre-training}

To verify the effect of introducing heads to ${\sf Hyena}$ as described in Section \ref{mhyena}, we train a series of models on {\sc The Pile} \cite{gao2020pile}. All ${\sf MultiHyena}$ models are set to $8$ heads, and otherwise use the same hyperparameters of ${\sf Hyena}$ models of equivalent size. We set weight decay of Hyena filter parameters to $0$, and lower the frequency of sine activations in the implicit MLP to $4$. We follow the setup of \cite{poli2023hyena}, and first train models for $5$, $10$ and $15$ billion tokens, adjusting the learning rate scheduler accordingly. Then, we train for $300$ billion tokens. The results are reported in Tables \ref{pile1} and \ref{pile2}.

\subsection{Distillation Analysis}\label{app:errors}
Distilling pre-trained long convolution sequence models ({$\sf LCSM$}) with ${\sf LaughingHyena}$ can introduce errors on the convolution filter, which then propagate to the outputs.

\paragraph{Setup}

We perform a series of extensive experiments on all variants of {$\sf LCSM$}, including pre-trained {$\sf H3$} models of sizes $125$ million, $355$ million, $1.3$ billion and $2.7$ billion parameters; {$\sf Hyena$} of size $153$ million parameters, and {$\sf MultiHyena$} of size $153$ million parameters. For {$\sf H3$} models, we report approximation errors on both shift as well as diagonal SSMs (reported as IIR and FIR). Each point corresponds to distillation carried out at a particular order, using ${\sf LaughingHyena}$ modal interpolation. To optimize the parameters of the modal form, we use gradient-based optimization and minimize the $\ell_2$ discrepancy between filters in time domain. In particular, we use the {\sc AdamW} \cite{loshchilov2017decoupled} optimizer with learning rate $3 \cdot 10^{-4}$, and a cosine annealing decay schedule down to $10^{-6}$ after $30$ thousand iterations. Each individual filter of every layer is distilled in the same way.

\paragraph{Discussion}

The errors are shown in Figures \ref{fig:h3_err1}, \ref{fig:h3_err1}, \ref{fig:h3_err2}, \ref{fig:h3_err3}, \ref{fig:h3_err4} and \ref{fig:hyena_err}.
We observe {$\sf H3$} filters to be easier to distill into recurrences with small state without introducing significant errors, whereas {$\sf Hyena$} variants learn filters with larger effective dimensions. This provides further evidence that training with implicit convolutions may yield in general more expressive filters.  

\begin{figure}[h]
    \centering
    \includegraphics[width=0.85\linewidth]{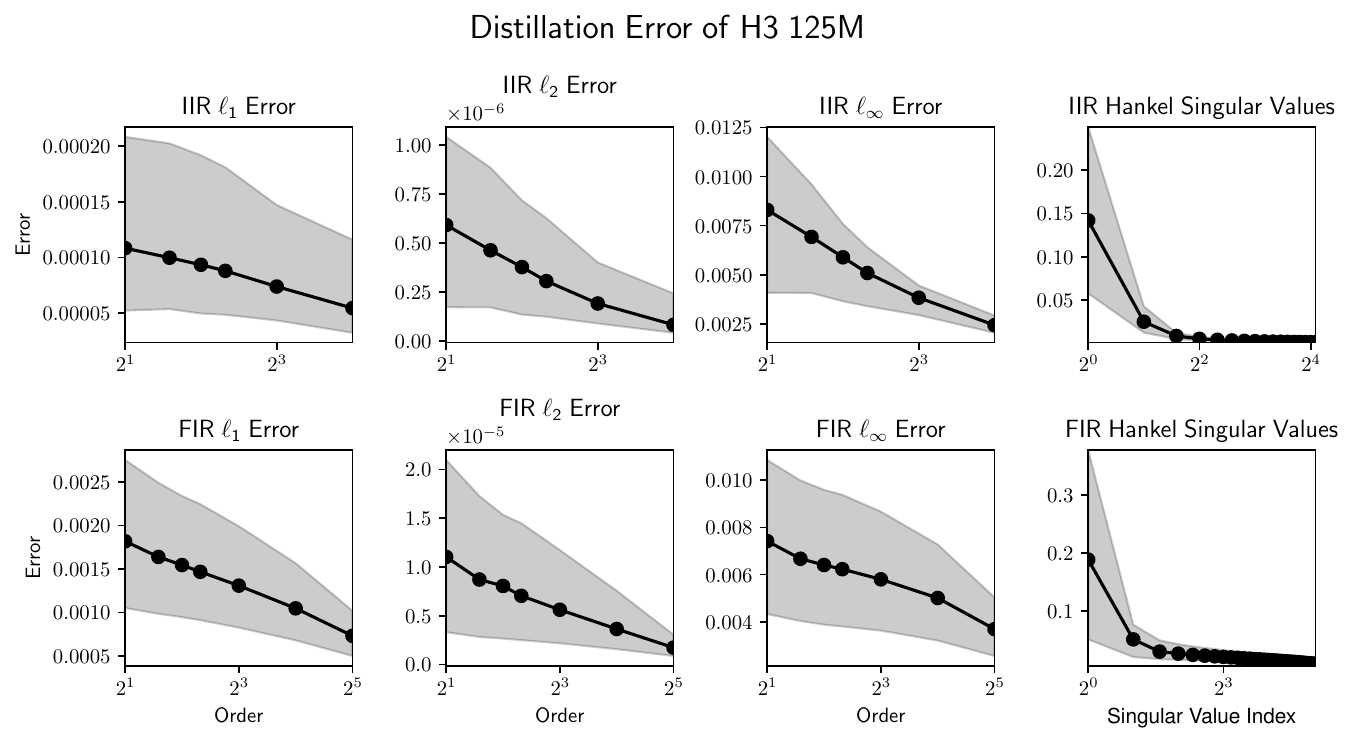}
    \vspace{-3mm}
    \caption{\footnotesize Mean, lower and upper bounds across channels and layers of the distillation errors on 125M {$\sf H3$} model for both its IIR and FIR filters.}
    \label{fig:h3_err1}
\end{figure}
\begin{figure}[h]
    \centering
    \includegraphics[width=0.85\linewidth]{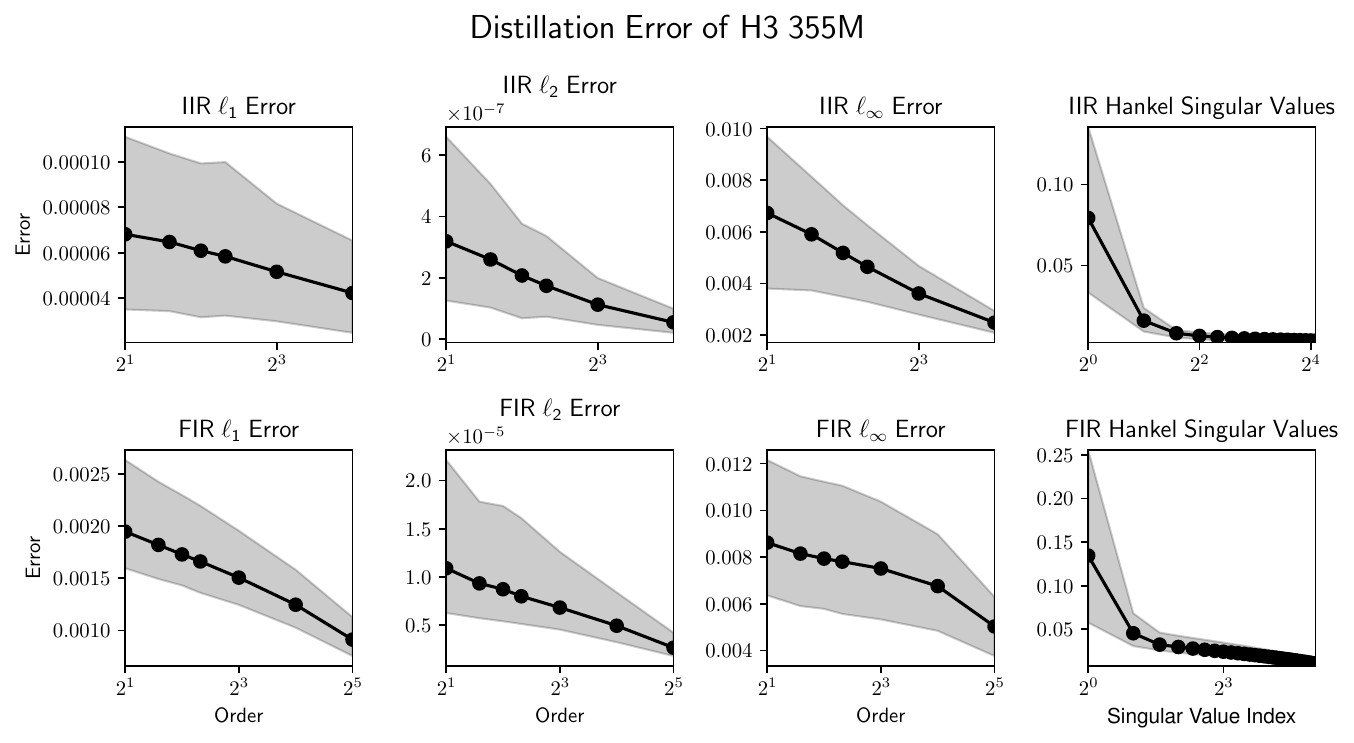}
    \vspace{-3mm}
    \caption{\footnotesize Mean, lower and upper bounds across channels and layers of the distillation errors on 355M {$\sf H3$} model for both its IIR and FIR filters.}
    \label{fig:h3_err2}
\end{figure}
\begin{figure}[h]
    \centering
    \includegraphics[width=0.85\linewidth]{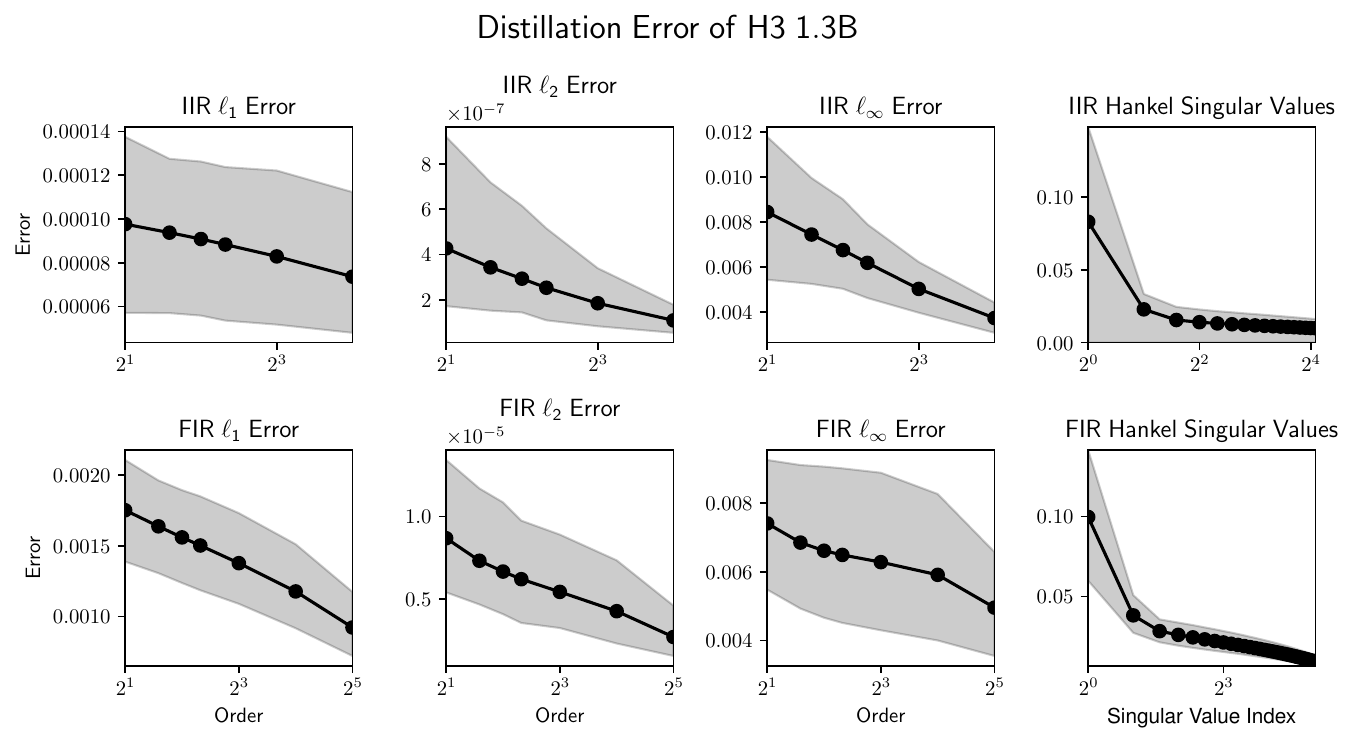}
    \vspace{-3mm}
    \caption{\footnotesize Mean, lower and upper bounds across channels and layers of the distillation errors on 1.3B {$\sf H3$} model for both its IIR and FIR filters.}
    \label{fig:h3_err3}
\end{figure}
\begin{figure}[h]
    \centering
    \includegraphics[width=0.85\linewidth]{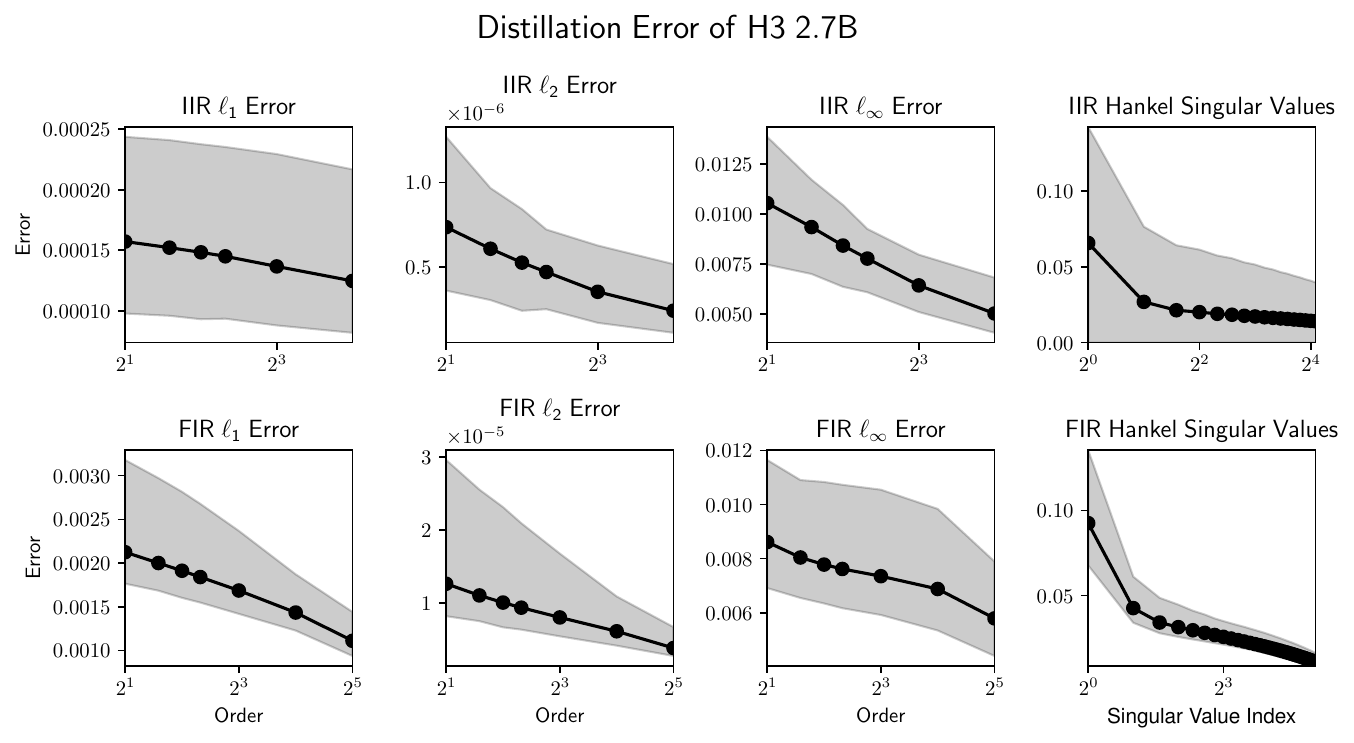}
    \vspace{-3mm}
    \caption{\footnotesize Mean, lower and upper bounds across channels and layers of the distillation errors on 2.7B {$\sf H3$} model for both its IIR and FIR filters.}
    \label{fig:h3_err4}
\end{figure}
\begin{figure}[h]
    \centering
    \includegraphics[width=0.9\linewidth]{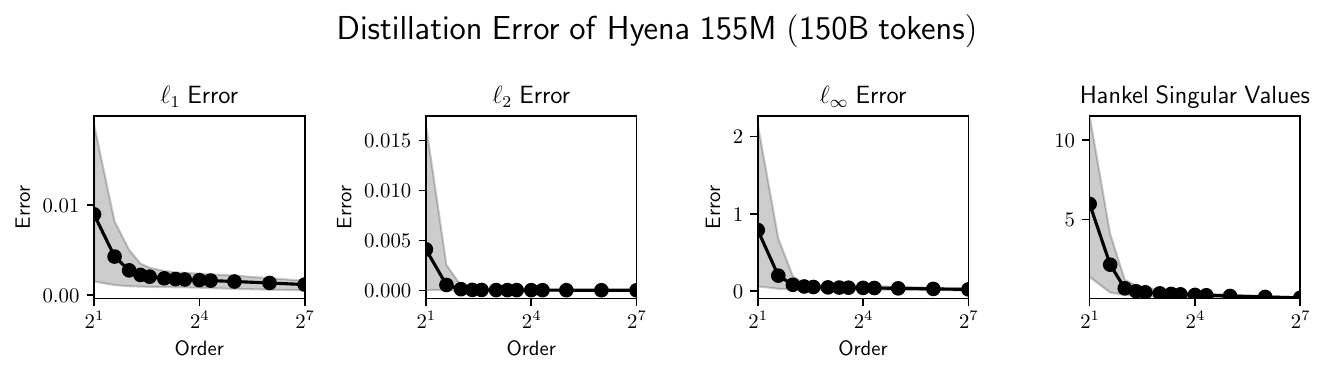}
    \includegraphics[width=0.75\linewidth]{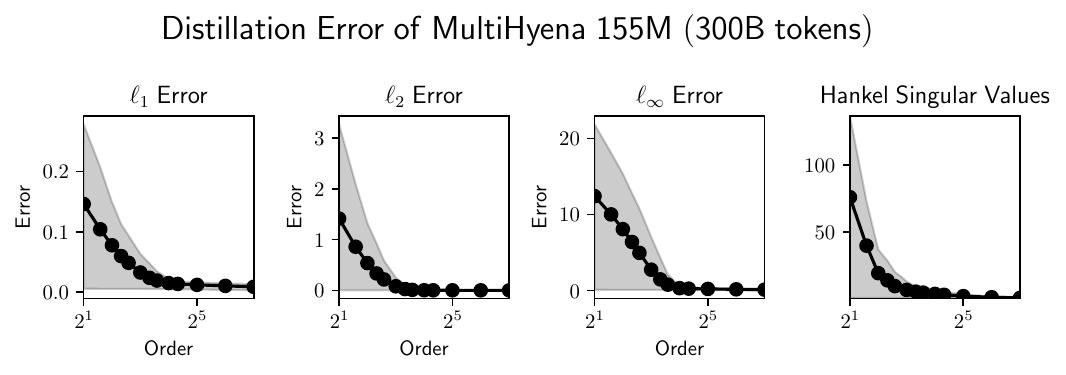}
    \vspace{-3mm}
    \caption{\footnotesize Mean, lower and upper bounds across channels and layers of the distillation errors on {$\sf Hyena$} and ${\sf MultiHyena}$ models.}
    \label{fig:hyena_err}
\end{figure}

\subsubsection{Pretrained Filters: Effective Dimension}
\paragraph{Visualizations}

We qualitatively investigate {$\sf LCSM$} filters at initialization and after pretraining. This visual inspection (Figures \ref{fig:h3_150_filters}, \ref{fig:hyena_150_filters} and \ref{fig:hyena_355}) complements the distillation error analysis of Section \ref{app:errors}.

\begin{figure}[h]
    \centering
    \includegraphics[width=\linewidth]{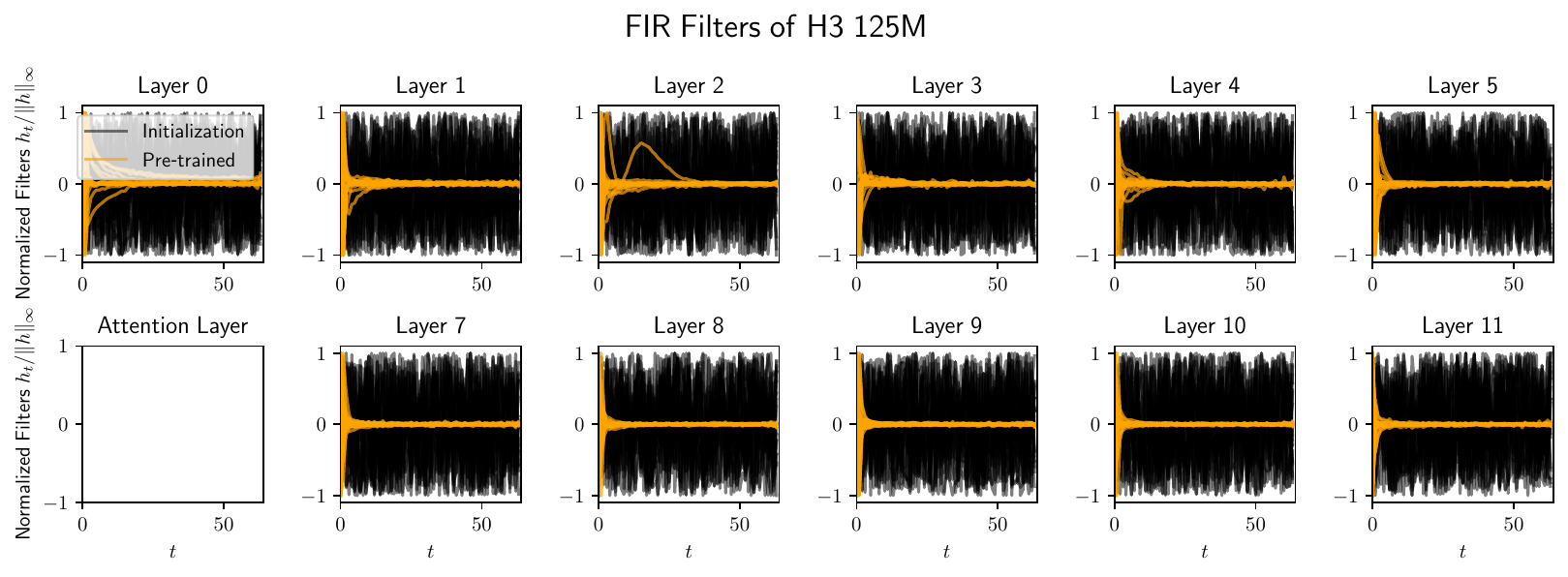}
    \includegraphics[width=\linewidth]{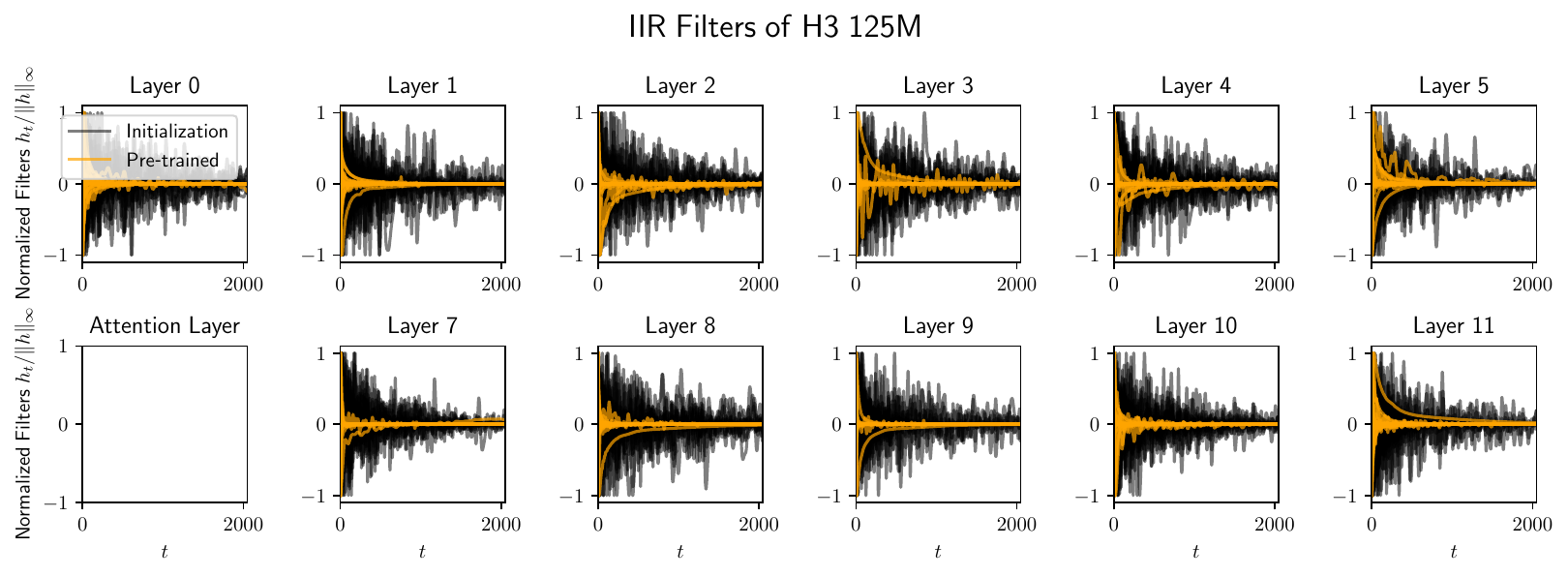}    
    \vspace{-5mm}
    \caption{\footnotesize Initialized and pre-trained convolution filters of {$\sf H3$}.}
    \label{fig:h3_150_filters}
\end{figure}

\begin{figure}[h]
    \centering
    \includegraphics[width=\linewidth]{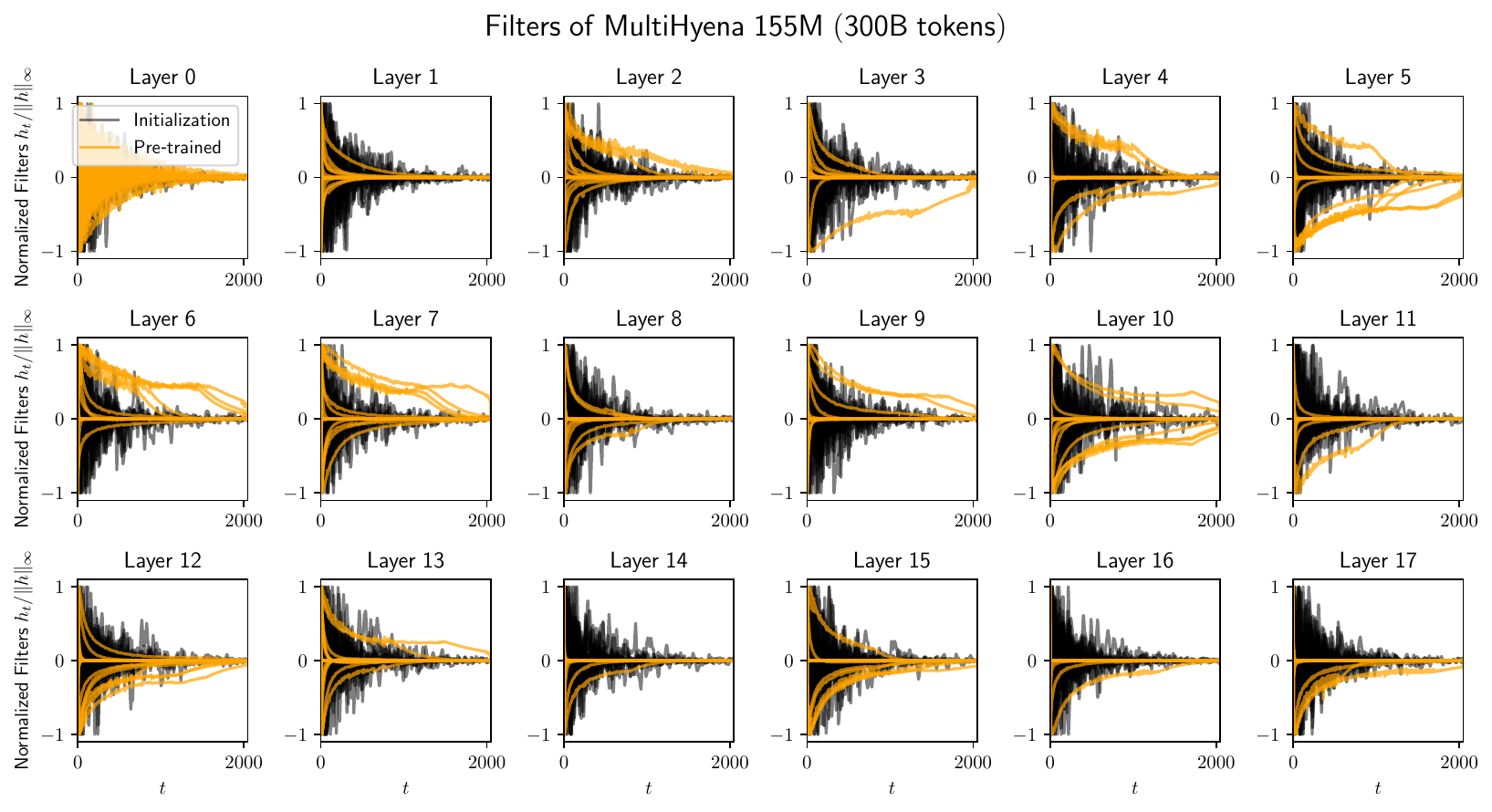}
    \caption{Initialized and pre-trained long convolution filters of {$\sf MultiHyena$}.}
    \label{fig:hyena_150_filters}
\end{figure}

\begin{figure}[h]
    \centering
    \includegraphics[width=\linewidth]{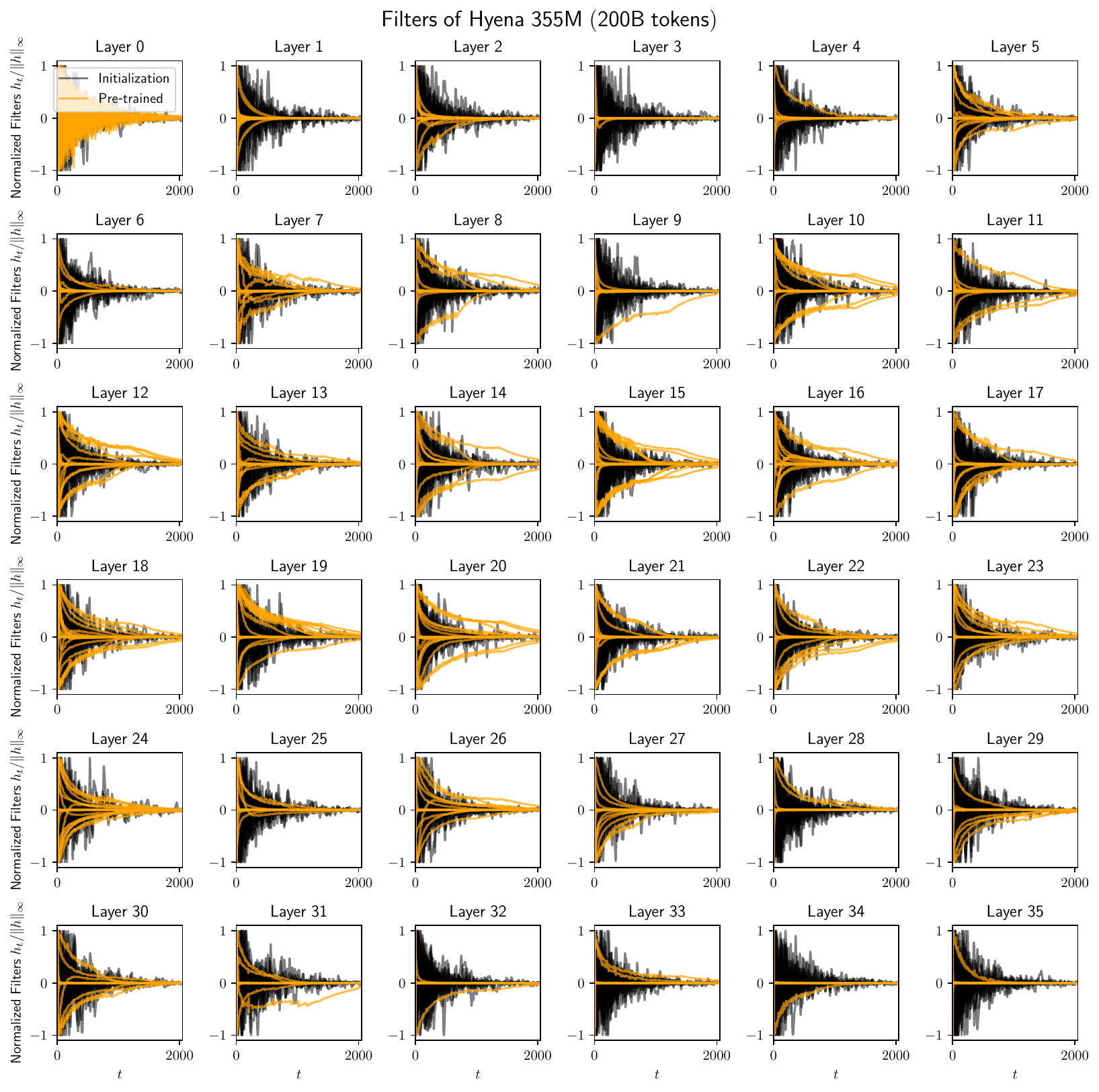}
    \caption{Initialized and pre-trained long convolution filters of {$\sf Hyena$} (355 $M$).}
    \label{fig:hyena_355}
\end{figure}

\paragraph{Distribution of Hankel singular values}
We further compute the distribution of Hankel singular values of each long convolution filter in different models. The decay in the spectrum quantifies how \textit{easy} it is to find a compact modal form with ${\sf LaughingHyena}$, and serves as a proxy measure of effective dimension of the convolution. The results are shown in Figures \ref{fig:hyena_hsv} and \ref{fig:h3_hsv}.
\begin{figure}[!h]
    \centering
    \includegraphics[width=\linewidth]{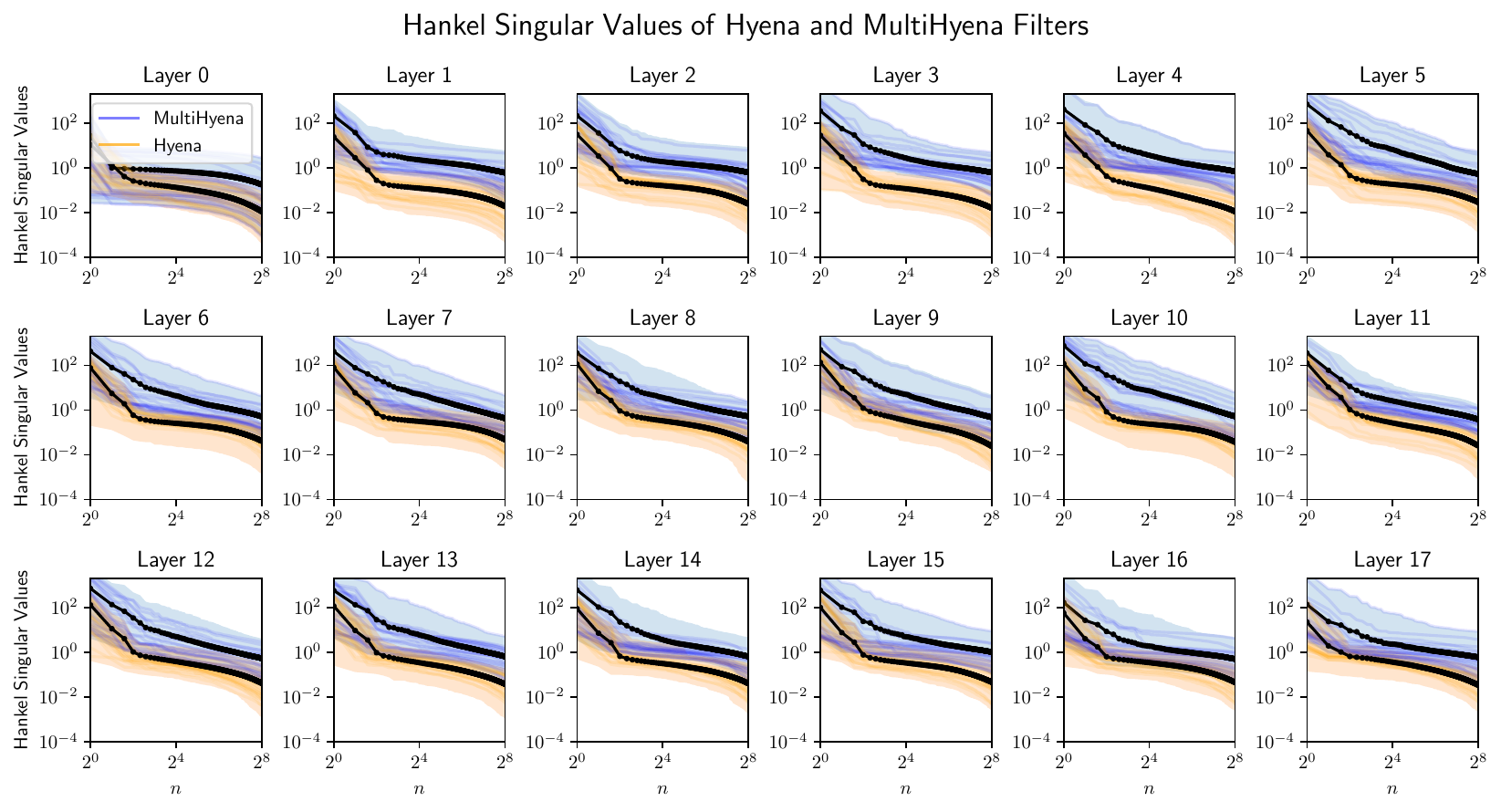}
    \vspace{-5mm}
    \caption{Distribution of Hankel singular values for ${\sf Hyena}$ and ${\sf MultiHyena}$ long convolution filters. ${\sf MultiHyena}$ filters have larger effective dimension, as evidenced by slower decay.}
    \label{fig:hyena_hsv}
\end{figure}

\begin{figure}[!h]
    \centering
    \includegraphics[width=\linewidth]{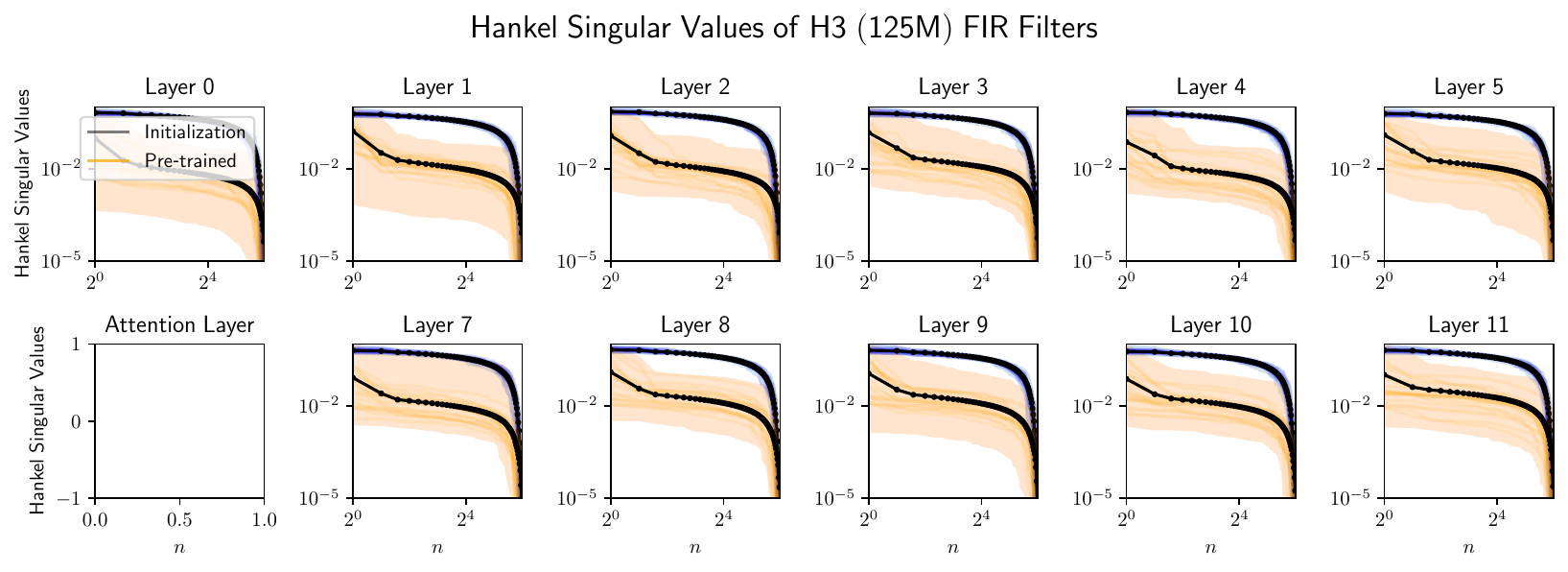}
    \includegraphics[width=\linewidth]{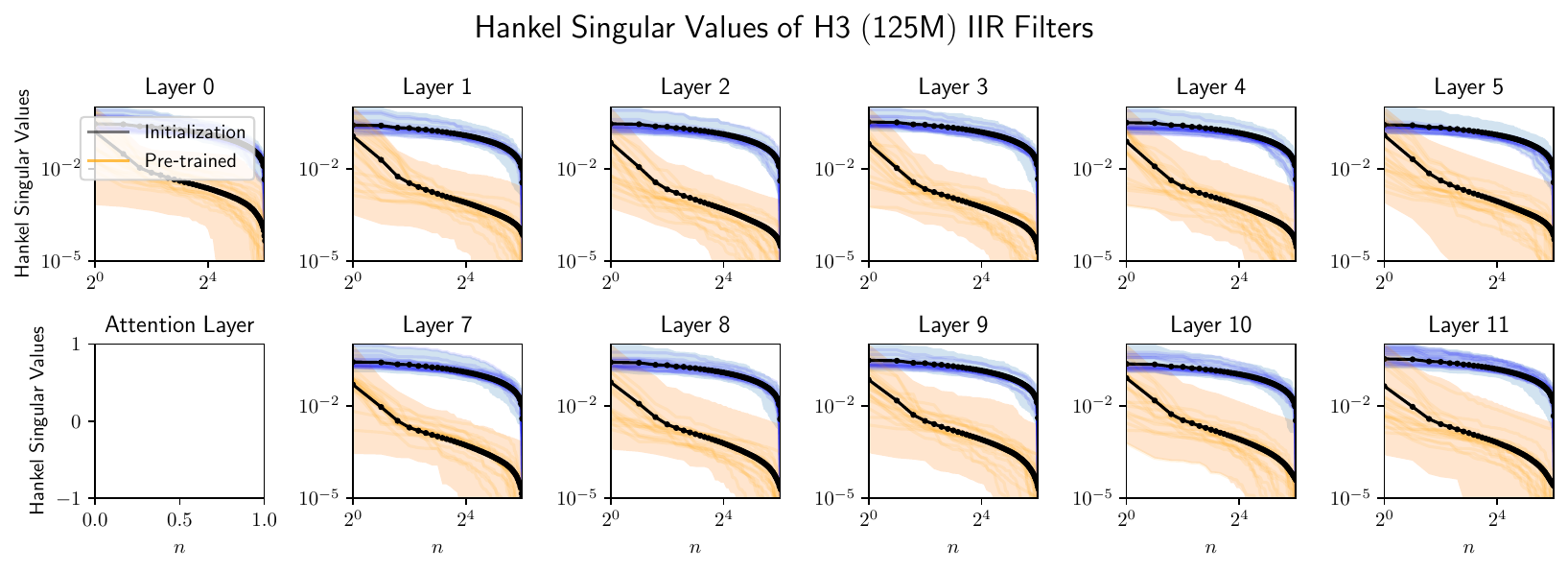}  
    \vspace{-5mm}
    \caption{Distribution of Hankel singular values for ${\sf H3}$ long convolution filters. The values decay rapidly.}
    \label{fig:h3_hsv}
\end{figure}
\subsection{Downstream Evaluation}
We benchmark the downstream performance of ${\sf MultiHyena}$ and distilled ${\sf MultiHyena}$ on standard language modeling tasks from the LM-Eval-Harness \cite{eval-harness} and HELM \cite{liang2022holistic} suites. As a reference baseline, we evaluate Pythia \cite{biderman2023pythia} $160$M.

Our objective is to quantify the absolute performance of ${\sf MultiHyena}$ and the downstream impact of distillation. We use the same procedure outlined in Section \ref{app:errors} to distill ${\sf MultiHyena}$.

\subsection{Benchmarking}\label{app:benchmarking}
To demonstrate the superior performance of {$\sf Laughing~Hyena$} for autoregressive generation, we conduct a series of experiments to benchmark its latency, throughput, and memory usage for autoregressive generation with initial prompt length $T$ and number of generated tokens $K$. For each experiment,  we compare the performance of {$\sf Laughing~Hyena$} against a Transformer, a hybrid H3-attention model with 2 attention layers and a {$\sf Hyena$} model. The latter two have been shown to match or achieve lower perplexity than Transformers on standard datasets (\textsc{Wikitext103} and \textsc{The Pile}). All experiments are carried out on a NVIDIA A100 with 80GB in float16 precision. Missing measurements for any model indicate Out of Memory (OOM) errors while doing autoregressive inference for that particular model.\newline

\paragraph{Peak throughput}
We first evaluate the throughput (number of tokens generated per second) across different batch sizes, using a typical generation workload consisting of a prompt of length 512 and generating 256 tokens. Figure \ref{fig:throughput_batch} measures peak throughput of different models. Since {$\sf Laughing~Hyena$} does not require caching intermediate kv-projections during generation, reduced memory requirements at a fixed model size allow it to process larger batch sizes. 

\paragraph{Prompt length}

Autoregressive generation in {$\sf Laughing~Hyena$} is achieved through a two-step process: an initial prefill step that uses the length$-T$ prompt to initialize the state $x_T$ and that generates all $K$ tokens. In Figure \ref{fig:prompt} we demonstrate how the prefill step scales for different prompt lengths, keeping batch sizes fixed at $64$. Since prefilling in {$\sf Laughing~Hyena$} is carried out efficiently via convolutions (as described in Section \ref{deploy}), throughput scales more favorably than Transformers. Other models capable of prefilling via convolutions also achieve higher throughputs than Transformers but are ultimately slower than {$\sf Laughing~Hyena$} during the generation phase. 

\paragraph{State throughput}
We measure the impact of SSM state dimension on the throughput of {$\sf Laughing~Hyena$}. Keeping batch sizes fixed reveals minimal impact for all dimensions smaller than $100$, which are sufficient to distill all models discussed in this work. All other measurements provided in this Section are carried out with a standard order $16$. We note that it may be possible to further increase peak throughput by leveraging reduced memory footprints achieved by extremely small SSMs. 

\paragraph{Latency over sequence length}
We benchmark the time taken to generate a variable number of tokens, starting from a prompt of length 512 tokens at batch size 1 (Figure \ref{fig:num_params}). ${\sf Laughing~Hyena}$ tracks highly optimized Transformers. We note that ${\sf Laughing~Hyena}$ is asymptotically more efficient than Transformers; however, this regime is bottlenecked by hardware-specific implementation details and optimizations. We expect optimized, platform-specific implementations of ${\sf Laughing~Hyena}$ to outperform Transformers even at batch size 1. When the prompt is long, the prefilling step becomes the bottleneck, and all convolutional models outperform Transformers.

\begin{figure}[h]
    \centering

\usepgfplotslibrary{groupplots}
\pgfplotsset{compat=1.6}
\pgfplotsset{/pgfplots/group/.cd,
    horizontal sep=2.8cm,
    vertical sep=1.2cm
}
\begin{tikzpicture}[font=\small]

\definecolor{black, dashed}{RGB}{214,39,40}
\definecolor{darkgray176}{RGB}{176,176,176}
\definecolor{blue!50!white, dashdotted}{RGB}{255,127,14}
\definecolor{forestgreen4416044}{RGB}{44,160,44}
\definecolor{lightgray204}{RGB}{204,204,204}
\definecolor{steelblue31119180}{RGB}{31,119,180}
    \begin{groupplot}[group style={group size= 2 by 2}, height=5.8cm,width=6.8cm, legend cell align={left},
    legend columns=4,
    legend style={
  fill opacity=0.8,
  draw opacity=1,
  text opacity=1,
  at={(-1,1.05)},
  anchor=south west,
  draw=lightgray204
}]
        \nextgroupplot[ylabel={Full gener. latency (s)}, x grid style={darkgray176},
            xlabel={Output sequence length},
            xmajorgrids, y grid style={darkgray176},
            ylabel={Full gener. latency (s)},
            width=5cm,
            height=4cm,
            ymajorgrids,
            legend to name=zelda
        ]
                \addplot [very thick, steelblue31119180, mark=asterisk, mark size=3, mark options={solid}]
                    table {%
                    1 0.0365
                    64 1.082
                    256 4.5131
                    512 8.5233
                    1024 16.8945
                    1536 25.9136
                    2048 34.0946
                    };\addlegendentry{${\sf Laughing~Hyena}$};
                \addplot [very thick, blue!50!white, dashdotted, mark=asterisk, mark size=3, mark options={solid}]
                    table {%
                    1 0.0595
                    64 1.5598
                    256 6.4617
                    512 12.6012
                    1024 24.4077
                    1536 37.2041
                    2048 50.9075
                    };\addlegendentry{${\sf Hyena}$};
                \addplot [very thick, forestgreen4416044, mark=asterisk, mark size=3, mark options={solid}]
                    table {%
                    1 0.0893
                    64 1.6329
                    256 6.6657
                    512 12.877
                    1024 25.2211
                    1536 38.6566
                    2048 51.5517
                    };\addlegendentry{Hybrid Attn. H3};
                    \addplot [very thick, black, dashed, mark=asterisk, mark size=3, mark options={solid}]
                    table {%
                    1 0.0133
                    64 0.8454
                    256 3.4284
                    512 7.0517
                    1024 13.5707
                    1536 20.2017
                    2048 26.9697
                    }; \addlegendentry{Transformer};
                \coordinate (top) at (rel axis cs:0,1);
        \nextgroupplot[ylabel={Full gener. latency (s)}, x grid style={darkgray176},
            xlabel={Number of Parameters $\times 10^9$},
            width=5cm,
            height=4cm,
            xmajorgrids, y grid style={darkgray176},
            ylabel={Peak Memory [GBs]},
            ymajorgrids]
                 \addplot [very thick, steelblue31119180, mark=asterisk, mark size=3, mark options={solid}]
                    table {%
                    0.125 * 10**9 4.2
                    0.355 5.7
                    1.3 12.07
                    2.7 21.19
                    6.7 48.1
                    };
                \addplot [very thick, blue!50!white, dashdotted, mark=asterisk, mark size=3, mark options={solid}]
                    table {%
                    0.125 7.8
                    0.355 12.3
                    1.3 27.9
                    2.7 38.8
                    6.7 73.1
                    };
                \addplot [very thick, forestgreen4416044, mark=asterisk, mark size=3, mark options={solid}]
                    table {%
                    0.125 5.79
                    0.355 7.03
                    1.3 10.79
                    2.7 16.2
                    6.7 30.77
                    };
                    \addplot [very thick, black, dashed, mark=asterisk, mark size=3, mark options={solid}]
                    table {%
                    0.125 8.87
                    0.355 15.4
                    1.3 26.9
                    2.7 42.58
                    6.7 69.9
                    };
        \nextgroupplot[ylabel={Gener. throughput [tok/s] $\times 10^3$}, x grid style={darkgray176},
            xlabel={Number of Parameters $\times 10^9$},
            width=5cm,
            height=4cm,
            xmajorgrids, y grid style={darkgray176},
            ymajorgrids]
                 \addplot [very thick, steelblue31119180, mark=asterisk, mark size=3, mark options={solid}]
                    table {%
                    0.125 6.777
                    0.355 3.384
                    1.3 2.902
                    2.7 2.029
                    };
                \addplot [very thick, blue!50!white, dashdotted, mark=asterisk, mark size=3, mark options={solid}]
                    table {%
                    0.125 4.28
                    0.355 1.823
                    1.3 0.97
                    2.7 0.588
                    };
                \addplot [very thick, forestgreen4416044, mark=asterisk, mark size=3, mark options={solid}]
                    table {%
                    0.125 4.186
                    0.355 2.117
                    1.3 1.930
                    2.7 1.396
                    };
                    \addplot [very thick, black, dashed, mark=asterisk, mark size=3, mark options={solid}]
                    table {%
                    0.125 4.571
                    0.355 2.0
                    1.3 1.25
                    2.7 0.792
                    };
        \nextgroupplot[ylabel={Full gener. latency (s)}, x grid style={darkgray176},
            xlabel={Number of Parameters $\times 10^9$},
            width=5cm,
            height=4cm,
            xmajorgrids, y grid style={darkgray176},
            ylabel={Full gener. latency (s)},
            ymajorgrids]
                 \addplot [very thick, steelblue31119180, mark=asterisk, mark size=3, mark options={solid}]
                    table {%
                    0.125 1.1
                    0.355 2.2
                    1.3 2.4
                    2.7 2.9
                    6.7 3.2
                    };
                \addplot [very thick, blue!50!white, dashdotted, mark=asterisk, mark size=3, mark options={solid}]
                    table {%
                    0.125 1.75
                    0.355 3.3
                    1.3 3.43
                    2.7 4.5
                    6.7 4.812
                    };
                \addplot [very thick, forestgreen4416044, mark=asterisk, mark size=3, mark options={solid}]
                    table {%
                    0.125 1.8
                    0.355 3.5
                    1.3 3.76
                    2.7 4.88
                    6.7 5.1
                    };
                    \addplot [very thick, black, dashed, mark=asterisk, mark size=3, mark options={solid}]
                    table {%
                    0.125 1.07
                    0.355 2.07
                    1.3 2.23
                    2.7 2.67
                    6.7 2.82
                    };
                \coordinate (bot) at (rel axis cs:1,0);
    \end{groupplot}
     \path (top)--(bot) coordinate[midway] (group center);
     \node[above] at (current bounding box.north) {\pgfplotslegendfromname{zelda}};
\end{tikzpicture}
    \vspace{-4mm}
    \caption{Generation latency, throughput and peak memory of Transformers, H3, {$\sf Hyena$} and {$\sf Laughing~Hyena$}.}
    \label{fig:num_params}
\end{figure}
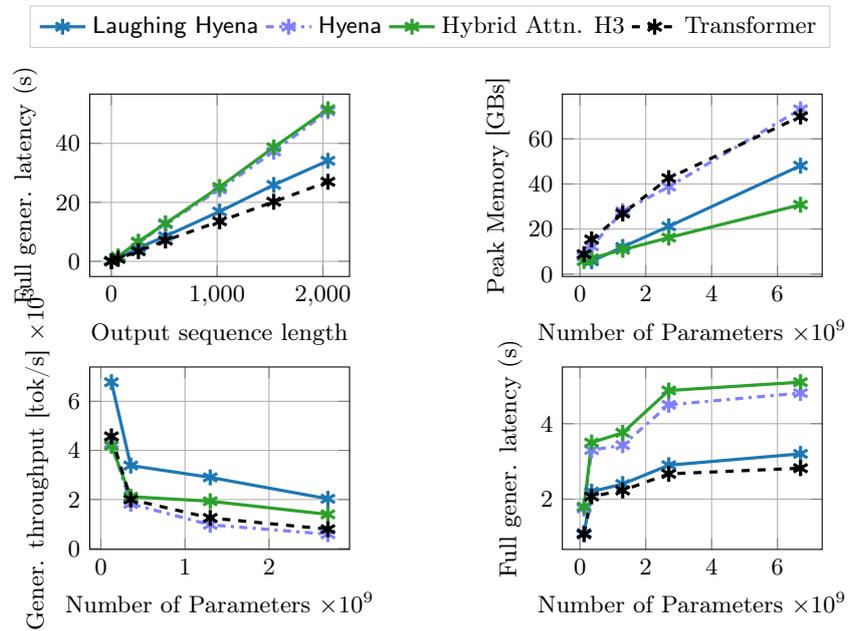

\paragraph{Parameter scaling}
To better understand how the performance of {$\sf Laughing~Hyena$} scales, we benchmark its latency, throughput, and peak memory utilization for autoregressive generation and 125M, 355M, 1.3B, 2.7B and 6.7B parameters. We compare the performance to that of Transformers, Hybrid-H3, and ${\sf Hyena}$ at the same number of parameters and report the results in Figure \ref{fig:num_params}. For the latency measurement, we use a batch size of 1 and benchmark the time taken to generate 128 tokens, starting from a prompt of length 512 tokens. For throughput and peak memory scaling against the number of parameters, we use a batch size of 64 and measure the throughput for generating 256 tokens starting  with a prompt of length 512.

\clearpage\section{Additional Experiments}
\subsection{Associative Recall with {$\sf MultiHyena$}}\label{app:associative}

We follow the setup of \cite{poli2023hyena} and train 2-layer ${\sf Hyena}$ and ${\sf MultiHyena}$ (with $8$ heads) to solve associative recall via a standard next-token prediction objective. We focus on the sequence length $64$k, high vocabulary size setting, and push vocabulary sizes past the maximum values considered in \cite{poli2023hyena}. At vocabulary size $60$, a difference between ${\sf MultiHyena}$ and ${\sf Hyena}$ can be observed (Table \ref{assoc_recall}), as experimental support for Theorem \ref{thm: hyena-complexity}.

\begin{table}[!h]\label{assoc_recall}
    \centering
    \begin{tabular}{|c|c|}
    \hline
    Model & Accuracy \\
    \hline
    {$\sf Hyena$} & $65$\\
    {$\sf MultiHyena$} & $98$ \\
    \hline
    \end{tabular}
    \vspace{1mm}
    \caption{Associative recall accuracy, sequence length $64$k, vocabulary size $60$.}
\end{table}

\subsection{Analysis of Hankel Singular Values of Pretrained Large Convolution Sequence Models}\label{sec:hankel_svd}

\subsection{Model Order Reduction of $\sf H3$}\label{app:model_reduction}

The $\sf H3$ model is constructed with a combination of diagonal SSMs and shift SSMs. There exists various classical model order reduction techniques for these different types of SSMs. The following sections aim to present the formulation and effectiveness of two classical approaches on obtaining the compressed representation of a $\sf H3$ model. More specifically, we study modal truncation and balanced truncation for compressing diagonal SSMs and shift SSMs respectively.

\subsubsection{Modal Truncation}

A discrete diagonal SSM ($\sA = \text{diag}(\lambda_1, \dots, \lambda_{d})$, $\sB \in \mathbb{C}^{d \times 1}$, and $\sC \in \mathbb{C}^{1 \times d}$) can be directly converted into a residue-pole transfer function as follows:
\begin{equation}
(\sA, \sB, \sC) \rightarrow H(z) = \sum_{i = 1}^{d}\frac{r_i}{z - \lambda_i},
\end{equation}
where residue $r_i = \sB_i \sC_i$. Modal truncation aims to compress such a transfer function by essentially reducing the summation over $d$ to $n<d$, of the $n$ most influential modes. The influence from each node can be isolated by expressing it using the $h_\infty$ norm of the system as follows:
\begin{equation}
\lVert H(z)\rVert_\infty = \sum_{i = 1}^d {\left\lVert\frac{r_i}{z-\lambda_i}\right\rVert_\infty} \leq \sum_{i=1}^d {\frac{\lvert r_i\rvert}{\lvert 1-\lvert\lambda_i\rvert\rvert}}.
\end{equation}
Each mode $i$ can be ranked using the bound formulated above. Subsequently, the $d-n$ lowest modes could be discarded to form a reduced order model. Figure \ref{fig:h3_mtrunc_err} illustrates the monotonically decreasing $l_\infty$ error with the increase in system order. However, this model reduction approach is only suitable for diagonalizable SSMs.

\begin{figure*}[h]
    \centering
    \includegraphics[width=\linewidth]{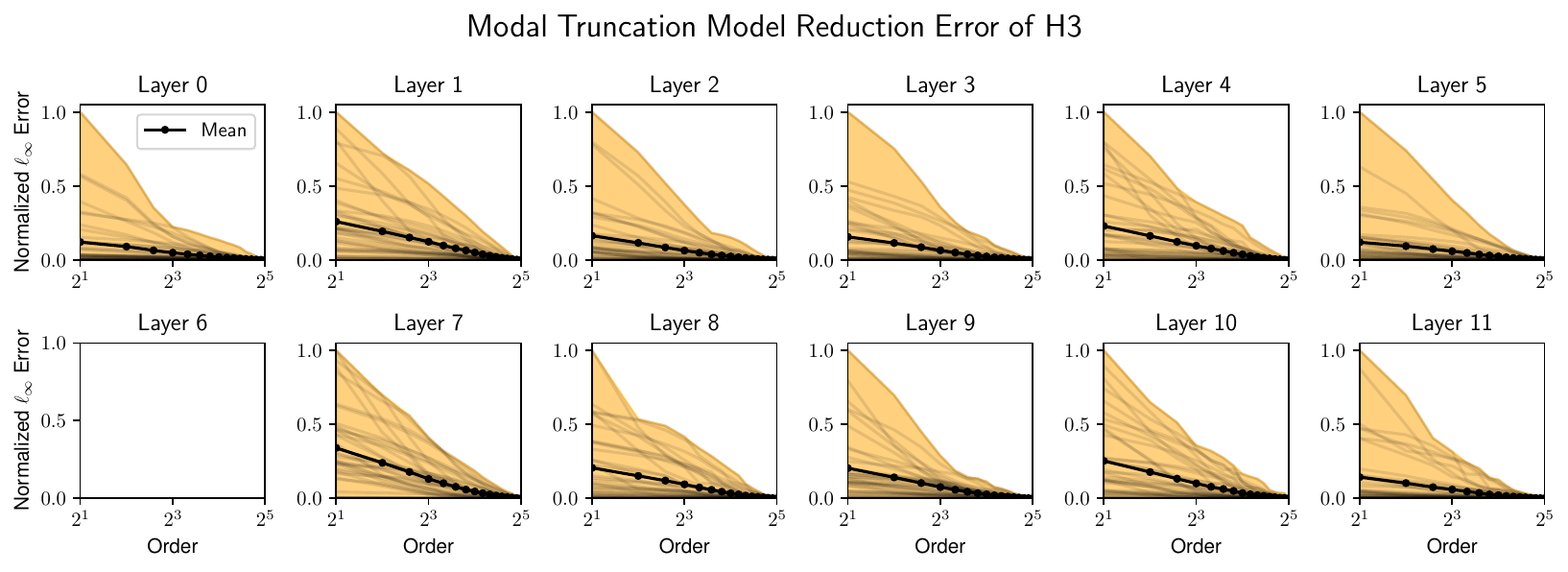}
    \vspace{-3mm}
    \caption{\footnotesize Modal truncation model reduction error ($||h(t)-h_n(t)||_{\infty}$) across all diagonal SSM layers of the trained $\sf H3$ $\sf 125M$ model.}
    \label{fig:h3_mtrunc_err}
\end{figure*}

\subsubsection{Balanced Truncation} \label{app:balanced_truncation}

A balanced SSM realization is one in which the observability $P$ and controllability $Q$ gramians are equal and diagonal. Such a realization can be formulated with the following Lyapunov equations \cite{beliczynski1992approximation}:
\begin{equation}
    \begin{aligned}
    \sA \Sigma \sA^\top + \sB\sB^\top &= \Sigma, \\
    \sA^\top \Sigma \sA + \sC^\top \sC &= \Sigma,
    \end{aligned}
\end{equation}
where $P = Q = \Sigma = \text{diag}(\sigma_1,\dots, \sigma_d)$ and $\sigma_i \geq \sigma_{i+1}$.

Results from \cite{dale1984model} shows that the $n-$order model reduction error is bounded by:
\begin{equation}\label{morbound}
    E_{\infty} \triangleq \lVert H(s) - H_n(s)\rVert_\infty \leq 2\sum_{i=d-n}^{d}{\sigma_i}.
\end{equation}
Therefore, an $n-$order partition of the full balanced realization can be chosen, such that the discarded orders are the $d-n$ lowest contributor to the error. 
The steps taken by \cite{beliczynski1992approximation}, computes the $n-$order partition of the balanced realization as follows:
\begin{enumerate}
\item Form a Hankel matrix $\sS_d$ from the impulse response $h_{1:n}$ of the shift SSM.
\item Obtain the eigenvector matrix $V \in \mathbb{C}^{d\times d}$, and the eigenvalues $\lambda = \sigma^2$ via the eigen-decomposition of $\sS_d$.
\item Choose the truncated model's order $n < d$, based on the bound in Equation \ref{morbound}.
\item Compute the state-space matrices as follows:
\begin{equation}
        \sA = V_{2:d, 1:n}^\top V_{1:d-1, 1:n}, \quad \sB =V_{1,1:n}, \quad \sC = h_{1:d}^\top V_{1:d,1:n}, \quad \sD = h_0.
\end{equation}
\end{enumerate}

This model reduction technique was applied to a trained 125 million parameter $\sf H3$, $\sf MultiHyena$, and $\sf Hyena$ models as shown in Figures \ref{fig:h3_btrunc_err}, \ref{fig:badger_btrunc_err}, and \ref{fig:hyena_btrunc_err} respectively. It could be noted that the all models encountered an undesirable non-monotonic error reduction with the increase in order. Moreover, order reduction configurations such as the one in Figure \ref{fig:badger_btrunc_err} layer 15 display signs of numerical instability.

\begin{figure}[h]
    \centering
    \includegraphics[width=\linewidth]{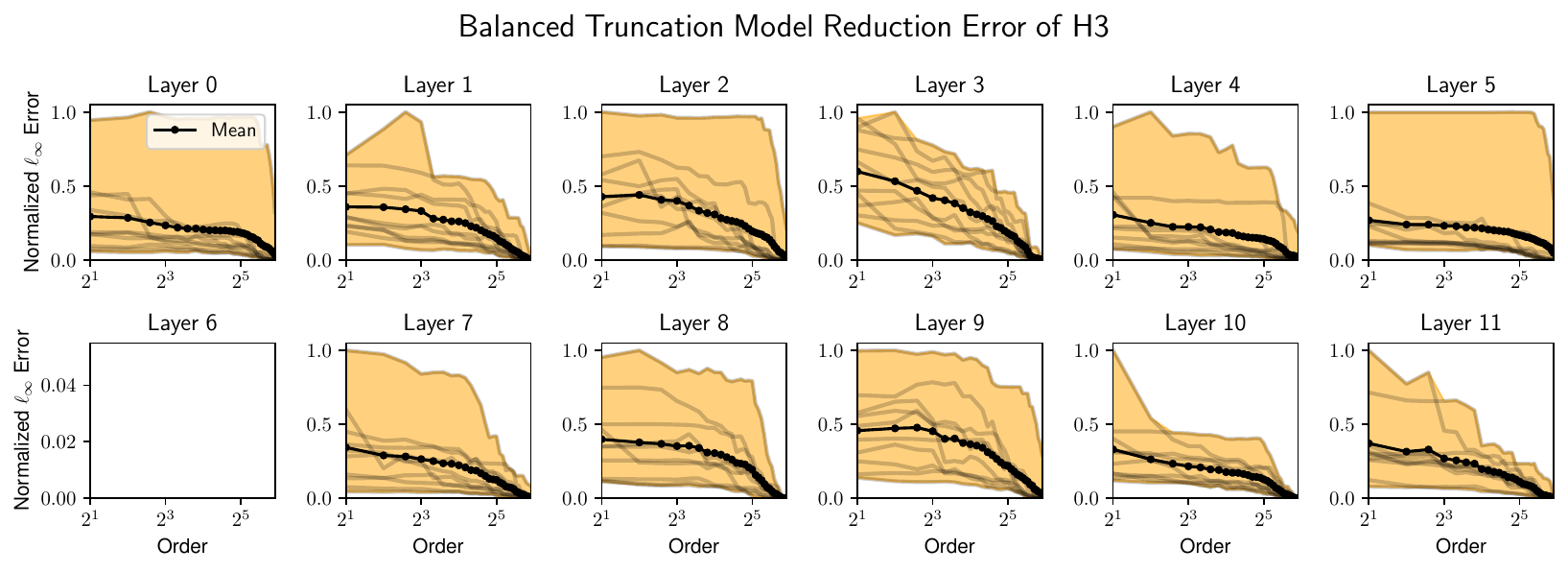}
    \caption{Balanced truncation model reduction error ($||h(t)-h_n(t)||_{\infty}$) across all shift SSM layers of the trained $\sf H3$ $\sf 125M$ model. Note that $\sf Layer$ $\sf 6$ is an Attention layer, therefore balanced truncation model order reduction is not possible.}
    \label{fig:h3_btrunc_err}
\end{figure}

\begin{figure}[h]
    \centering
    \includegraphics[width=\linewidth]{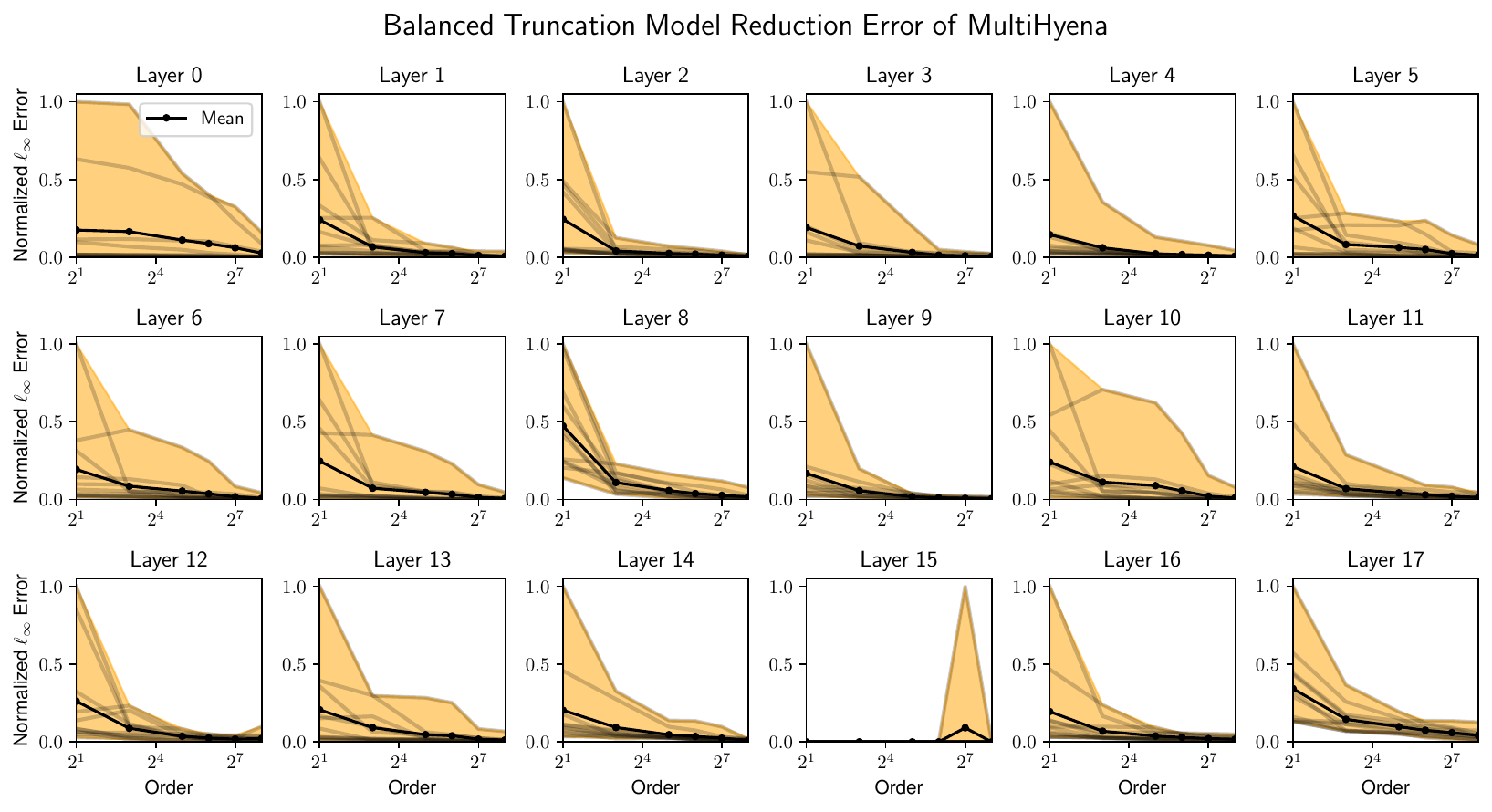}
    \caption{Balanced truncation model reduction error ($||h(t)-h_n(t)||_{\infty}$) across all convolutional layers of the trained $\sf MultiHyena$ $\sf 155M$ model.}\label{fig:badger_btrunc_err}
\end{figure}

\begin{figure}
    \centering
    \includegraphics[width=\linewidth]{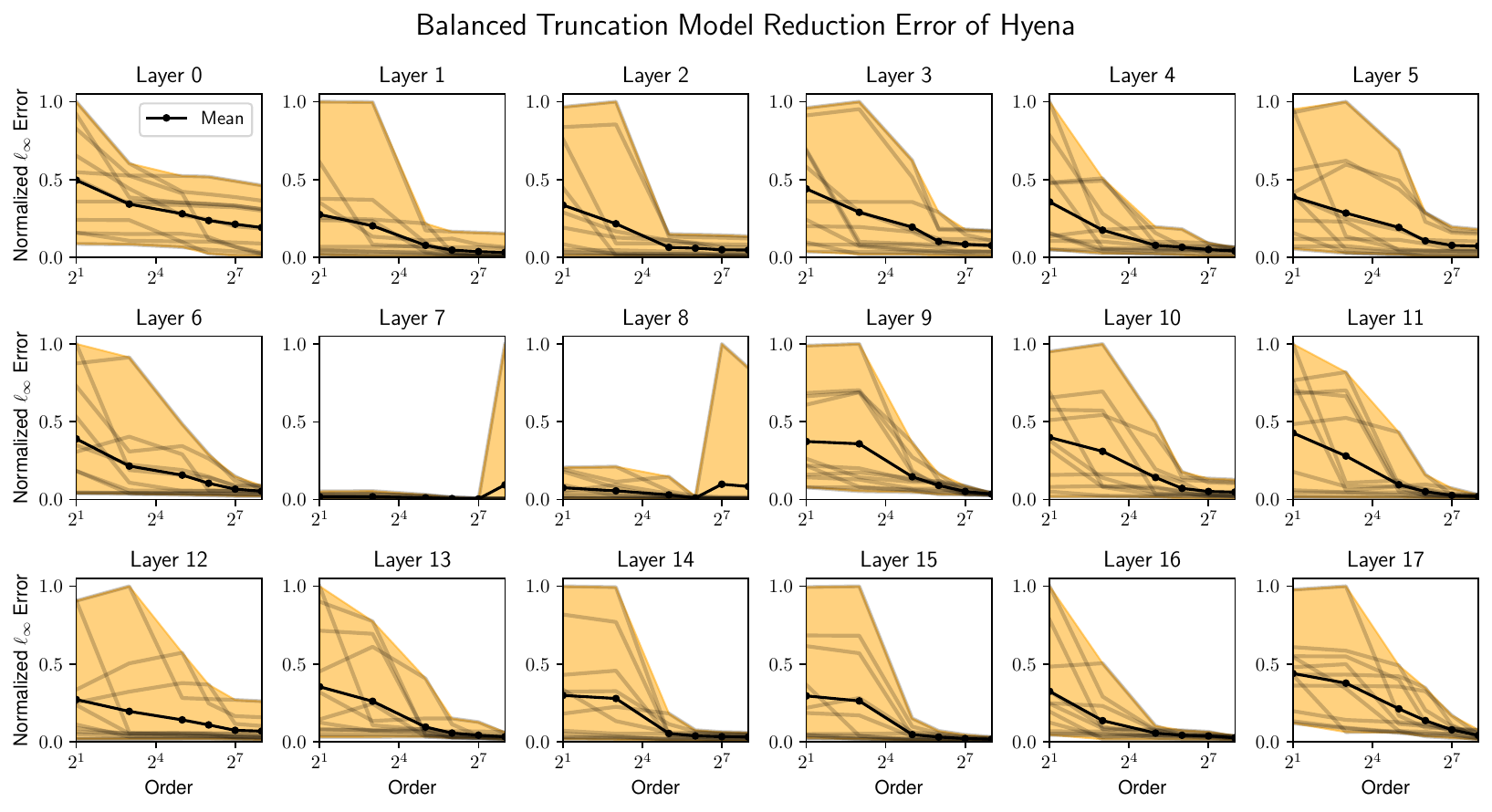}
    \caption{Balanced truncation model reduction error ($||h(t)-h_n(t)||_{\infty}$) across all convolutional layers of the trained $\sf Hyena$ $\sf 155M$ model.}
    \label{fig:hyena_btrunc_err}
\end{figure}
\end{document}